\newcommand{\R}{\mathbb{R}}
\newcommand{\E}{\mathbb{E}}
\newcommand{\N}{\mathcal{N}}
\newcommand{\rank}{\mathrm{rank}}
\newcommand{\sign}{\mathrm{Sign}}
\newcommand{\norm}[1]{\left\lVert#1\right\rVert}
\newcommand{\tr}[1]{\mathrm{Tr}\left(#1\right)}
\newcommand{\inner}[2]{\left\langle #1, #2 \right\rangle}
\DeclareMathOperator{\Cov}{Cov}
\DeclareMathOperator{\Var}{Var}
\newtheorem{theorem}{Theorem}
\newtheorem{definition}{Definition}
\newtheorem{lemma}{Lemma}
\newtheorem{remark}{Remark}
\newtheorem{corollary}{Corollary}
\newtheorem{proposition}{Proposition}
\newtheorem{assumption}{Assumption}
\newtheorem{claim}{Claim}
\newtheorem{fact}{Fact}
\title{\bf Sign-RIP: A Robust Restricted Isometry Property for Low-rank  Matrix Recovery}
\author{Jianhao Ma and Salar Fattahi\vspace{5mm}\\
	Industrial and Operations Engineering, University of Michigan
}
\begin{document}

\maketitle

\begin{abstract}
    Restricted isometry property (RIP), essentially stating that the linear measurements are approximately norm-preserving, plays a crucial role in studying low-rank matrix recovery problem.
    However, RIP {fails} in the robust setting, when a subset of the measurements are grossly corrupted with noise. 
    In this work, we propose a robust restricted isometry property, called \textit{Sign-RIP}, and show its broad applications in robust low-rank matrix recovery. In particular, we show that Sign-RIP can guarantee the uniform convergence of the subdifferentials of the robust matrix recovery with nonsmooth loss function, even at the presence of arbitrarily dense and arbitrarily large outliers. Based on Sign-RIP, we characterize the location of the critical points in the robust rank-$1$ matrix recovery, and prove that they are either close to the true solution, or have small norm. Moreover, in the over-parameterized regime, where the rank of the true solution is over-estimated, we show that subgradient method converges to the true solution at a (nearly) dimension-free rate. Finally, we show that sign-RIP enjoys almost the same complexity as its classical counterparts, but provides significantly better robustness against noise.
\end{abstract}

\section{Introduction}
Inspired by the surprising success of simple local-search algorithms in nonconvex optimization arising in modern machine learning tasks, a recent body of work focuses on studying the local and global optimization landscape of these problems. A prototypical class of such problems is {low-rank matrix recovery}, where the goal is to recover a low-rank matrix from a limited number of linear and noisy measurements. Low-rank matrix recovery is the cornerstone for many modern machine learning problems, including motion detection~\cite{bouwmans2014robust}, face recognition~\cite{luan2014extracting}, recommender systems~\cite{luo2014efficient}, and system identification~\cite{liu2010interior, chandrasekaran2011rank}. 

Despite the inherent difficulty of low-rank matrix recovery in its worst case---a fact noted as early as 1995~\cite{natarajan1995sparse}---it is known that {convex relaxation} methods can correctly recover the low-rank matrix under the so-called \textit{restricted isometry property} (RIP)~\cite{recht2010guaranteed, candes2008restricted, zhang2013restricted}, but suffer from high computational cost. One of the  breakthrough results in this line of research was presented in a 2016 NeurIPS paper~\cite{ bhojanapalli2016global}, showing that, for smooth low-rank matrix recovery, simple saddle-escaping algorithms, such as perturbed gradient descent (GD)~\cite{ge2015escaping, jin2017escape}, provably converge to the true low-rank solution. The main intuition behind this result is a simple, yet striking one: {\it under the same RIP condition, the nonconvex formulation of smooth low-rank matrix recovery problem is devoid of spurious local minima.} This result lead to a flurry of follow-up papers characterizing the landscape of other variants of low-rank matrix recovery~\cite{li2018algorithmic, zhuo2021computational, zhang2019sharp, zhang2021sharp}.

Despite the significance of different notions of RIP within the realm of low-rank matrix recovery, they face major breakdowns in {robust} settings, where a subset of the measurements are grossly corrupted with large noise values. The main reason behind the failure of the existing RIP techniques is that they only apply to nearly clean measurements, and hence, are oblivious to the nature of the noise.  


The main goal of this paper is to precisely pinpoint and remedy this challenge. In particular, we study a well-known class of matrix recovery problems with {nonsmooth and nonconvex $\ell_1$ formulation}, called \textit{robust matrix recovery}. We introduce an alternative notion of RIP, called \textit{Sign-RIP}, that can capture and take into account the nature of the noise in robust matrix recovery. Based on Sign-RIP, we take the first step towards demystifying the robustness of the $\ell_1$ formulation of the problem against large-and-sparse noise values. 
Our main contributions are summarized as follows:
\begin{itemize}
\item[-] \textit{(Uniform convergence of subdifferentials)} We use Sign-RIP to study the landscape of the robust matrix recovery against large noise values. In particular, we show that, under Sign-RIP, the subdifferentials of the nonsmooth matrix recovery are well-behaved, and they converge uniformly to the gradients of an ``ideal'', noiseless problem, even if the measurements are subject to {large} noise values. Moreover, we show that Sign-RIP holds, even if an {arbitrarily large} fraction of the measurements are corrupted with {arbitrarily large} noise values, provided that the number of measurements scales polynomially with the corruption probability, but only {linearly} with the true dimension of the problem. 
    \item[-] \textit{(Characterization of the critical points)} We show that Sign-RIP can be used to precisely characterize the locations of the critical points for the robust rank-1 matrix recovery. In particular, we show that, under Sign-RIP, all the critical points lie close to the true rank-1 solution, or have small norm. 
    \item[-] \textit{(Implicit regularization with over-parameterization)} Based on Sign-RIP, we show that a subgradient method with decaying step sizes provably converges to the true rank-1 solution in the over-parameterized regime, where the true rank is unknown and over-estimated.  
\end{itemize}

\paragraph{Notations}
For two matrices $X$ and $Y$ of the same size, their inner product is defined as $\inner{X}{Y}=\mathrm{Tr}(X^{\top}Y)$. 
For a matrix $X$, its operator and Frobenius norms are denoted as $\norm{X}$ and $\norm{X}_F$, respectively. The unit rank-$r$ sphere is defined as $\mathbb{S}_r=\{X\in \R^{d\times d}:\norm{X}_F=1, \rank(X) \leq r\}$. The notation $\mathbb{B}(X,\epsilon)$ refers to a ball of radius $\epsilon$, centered at $X$. Inspired by the variational representation of Frobenius norm, we define $\|X\|_{F,r} = \max_{Y\in\mathbb{S}_r}\langle X, Y\rangle$. It is easy to verify that $\|X\|_{F} = \|X\|_{F,d}$.
The $\ell_q$ norm of a vector $x$ is defined as $\norm{x}_{\ell_q}=(\sum |x_i|^q)^{1/q}$. For simplicity of notation,  we write $\norm{x}_1=\norm{x}_{\ell_1}$ and $\norm{x}=\norm{x}_{\ell_2}$. Given two sequences $f(n)$ and $g(n)$, the notation $f(n)\lesssim g(n)$ implies that there exists a constant $C<\infty$ satisfying $f(n) \leq Cg(n)$. Moreover, the notation $f(n)\asymp g(n)$ implies that $f(n)\lesssim g(n)$ and $g(n)\lesssim f(n)$. The sign function $\sign(\cdot)$ is defined as $\sign(x)=x/|x|$ if $x\neq 0$, and $\sign(0)=[-1,1]$. 

\section{Background and Prior Work}\label{sec_RIP}
In robust matrix recovery problem, the goal is to recover a rank-$r^*$ positive semidefnite matrix $X^*\in\mathbb{R}^{d\times d}$, from a limited number of linear measurements of the form $\mathbf{y} = \mathcal{A}(X^*)+\mathbf{s}$, where $\mathbf{y}=[y_1,y_2,\dots,y_m]^\top$ is the vector of measurements, and $\mathbf{s}$ is a noise vector. The linear operator $\mathcal{A}$ is defined as $\mathcal{A}(X^*) = [\langle A_1,X^*\rangle, \langle A_2,X^*\rangle,\dots, \langle A_m,X^*\rangle]^\top$, where $\{A_i\}_{i=1}^m$ are symmetric measurement matrices\footnote{We can replace $A_i$ by $\frac{A_i+A_i^{\top}}{2}$ if $A_i$ is asymmetric. For simplicity, we only consider symmetric measurement matrices in this work.}. One popular approach for recovering the true low-rank matrix is to consider the following empirical risk minimization (ERM) problem
\begin{equation}\label{eq_l2}
    \min_{U\in\R^{d\times r'}} f_{\ell_q}(U)=\frac{1}{2m}\left\|{\mathbf{y}-\mathcal{A}(UU^{\top})}\right\|^q_{\ell_q},
\end{equation}
where $r'$ is an upper bound for the rank of the true solution, and $UU^\top$ is used in lieu of $X^*$ to ensure the positive semidefiniteness of the solution. 

{\bf $\pmb{\ell_2}$-RIP:} Evidently, the above optimization problem is over-parameterized if $r'> r^*$, since the unknown variable is not restricted to the set of low-rank matrices, and consequently, its globally optimal solution need \textit{not} be low-rank. Nonetheless, it is recently shown that, for the choice of $q=2$, simple gradient descent (GD) algorithm provably converges to the true rank-$r^*$ solution, even if $r'\gg r^*$ (e.g. $r'=d$)~\cite{li2018algorithmic, zhuo2021computational}. The key idea behind the convergence proof of GD is the closeness of its gradient to that of an ``ideal'', noiseless population loss function $\bar{f}_{\ell_2}(U) = \|UU^\top-X^*\|_F^2$. More concretely, the gradient of $f_{\ell_2}(U)$ can be written as $\nabla f_{\ell_2}(U) = Q(UU^\top-X^*)U$, where $$Q(M) = \frac{1}{m}\sum_{i=1}^m\left(\langle A_i, M\rangle+s_i\right) A_i.$$ 
One sufficient condition for $\nabla f_{\ell_2}(U)\approx \nabla \bar f_{\ell_2}(U)$ is to ensure that $Q(M)$ remains uniformly close to $M$ for every rank-$(r^*+r')$ matrix $X$.\footnote{The paper~\cite{li2018algorithmic} requires the similarity of $Q(X)$ and $X$ for lower rank matrices (rank-$r^*$ as opposed to rank-$(r^*+r')$), but their result only holds for $r'=d$.} In the noiseless setting, this condition can be guaranteed by the so-called $\ell_2$-RIP:

\begin{definition}[$\ell_2$-RIP~\cite{zhuo2021computational, li2018algorithmic}]
\label{def::l2-RIP}
The linear operator $\mathcal{A}(\cdot)$ satisfies $\ell_2$-RIP with parameters $(r,\delta)$ if, for every rank-$r$ matrix $M$, we have
$$(1-\delta)\|M\|_F^2\leq \frac{1}{m}\|\mathcal{A}(M)\|^2\leq (1+\delta)\|M\|_F^2.$$

\end{definition}
Roughly speaking, $\ell_2$-RIP entails that the linear operator $\mathcal{A}(\cdot)$ is nearly norm-preserving for every rank-$r$ matrix. It is well-known that with Gaussian measurements, $\ell_2$-RIP is satisfied with parameters $(r,\delta)$, provided that $m\gtrsim dr$~\cite{recht2010guaranteed}.
However, our next proposition shows that $\ell_2$-RIP is not enough to guarantee $Q(X)\approx X$ when the measurements are subject to noise with high variance.
\begin{proposition}
    \label{uniform-convergence-noisy-l2}
    Suppose that $r'=d$ and the measurement matrices $\{A_i\}_{i=1}^m$ defining the linear operator $\mathcal{A}(\cdot)$ have i.i.d. standard Gaussian entries. Moreover, suppose that the noise vector $\mathbf{s}$ satisfies $s_i\stackrel{i.i.d.}{\sim} \mathcal{N}(0,\sigma^2)$ with probability $p$, and $s_i=0$ with probability $1-p$, for every $i=1,\dots,m$. Then, $\ell_2$-RIP is satisfied with an overwhelming probability with parameters $(d,\delta)$, provided that $m\lesssim d^2/\delta^2$. Moreover, we have
    \begin{equation}
        \mathbb{P}\left(\sup_{X\in\mathbb{S}}\left\|{Q(X)-X}\right\|_F\gtrsim \sqrt{\frac{(1+p\sigma^2)d^2}{m}}\right)\geq \frac{1}{2}.\nonumber
    \end{equation}
\end{proposition}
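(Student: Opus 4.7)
The plan is to handle the two sub-claims separately. The first (the $\ell_2$-RIP bound) is textbook: the rank-$d$ unit Frobenius sphere $\mathbb{S}_d$ is the Frobenius unit sphere of the $\Theta(d^2)$-dimensional space of symmetric matrices and therefore admits an $\epsilon$-net of size $(3/\epsilon)^{\Theta(d^2)}$. Combining Gaussian $\chi^2$-tail bounds on $\frac{1}{m}\|\mathcal{A}(M)\|^2$ at each net point with a standard $\epsilon$-net approximation yields $\ell_2$-RIP with parameters $(d,\delta)$, with probability $1-\exp(-\Omega(d^2))$, under the natural condition $m \gtrsim d^2/\delta^2$.

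For the lower bound on $\sup_X \|Q(X)-X\|_F$, the key step is the decomposition
\begin{equation*}
Q(X) - X = V_1(X) + V_2, \quad V_1(X) := \frac{1}{m}\sum_{i=1}^m \langle A_i,X\rangle A_i - X, \quad V_2 := \frac{1}{m}\sum_{i=1}^m s_i A_i,
\end{equation*}
where $V_1$ captures the signal-induced deviation of the sample second-moment operator from identity and $V_2$ is the pure-noise contribution. The crucial structural observation is that $V_1$ is linear (hence odd) in $X$, so for any test direction $X_0$ exactly one of $\pm X_0$ makes the cross term $\langle V_1(X_0), V_2\rangle \geq 0$; with that choice,
\begin{equation*}
\sup_{X \in \mathbb{S}_d} \|Q(X) - X\|_F^2 \;\geq\; \|V_1(X_0)\|_F^2 + \|V_2\|_F^2.
\end{equation*}
Taking $X_0 = e_1 e_1^\top$ (a convenient fixed rank-$1$ unit matrix) reduces the task to obtaining matching lower bounds on the two summands separately.

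For $\|V_1(X_0)\|_F^2$, the off-diagonal entries of $V_1(X_0)$ are sample means $\frac{1}{m}\sum_i (A_i)_{11}(A_i)_{jk}$ of mean-zero Gaussian products, each with variance $1/m$, so $\E\|V_1(X_0)\|_F^2 \asymp d^2/m$; Hanson--Wright (or a Paley--Zygmund bound on its fourth moment) upgrades this to $\|V_1(X_0)\|_F^2 \gtrsim d^2/m$ with high probability. For $\|V_2\|_F^2$, conditional on $\mathbf{s}$, $V_2$ is a symmetric Gaussian matrix whose independent entries have variance $\|\mathbf{s}\|^2/m^2$, so $m^2\|V_2\|_F^2/\|\mathbf{s}\|^2$ is a weighted chi-squared with $\Theta(d^2)$ degrees of freedom that concentrates sharply at $\Theta(d^2)$. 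Combined with the sub-exponential concentration of $\|\mathbf{s}\|^2 = \sum_i s_i^2$ around its mean $mp\sigma^2$ (when $mp \gtrsim 1$), this gives $\|V_2\|_F^2 \gtrsim p\sigma^2 d^2/m$ with high probability. Intersecting the two high-probability events then delivers the bound with probability at least $1/2$.

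The main (and only subtle) difficulty is the sparse-noise regime $mp \lesssim 1$, where $\|\mathbf{s}\|^2$ no longer concentrates and may vanish with constant probability. There I would fall back on a Paley--Zygmund argument for $\|\mathbf{s}\|^2$ itself (whose fourth-to-second moment ratio is $O(1 + 1/(mp))$), and additionally observe that whenever $p\sigma^2 \lesssim 1$ the target $\sqrt{(1+p\sigma^2)d^2/m}$ is already of the same order as $\sqrt{d^2/m}$, which the $V_1$ contribution alone delivers; the noise term is only strictly needed when $p\sigma^2 \gg 1$, a regime in which even a single nonzero $s_i$ of magnitude $\Theta(\sigma)$ is already enough to push $\|V_2\|_F$ above the target $\sqrt{p\sigma^2 d^2/m}$.
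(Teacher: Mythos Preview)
Your proposal is correct and takes a genuinely different route from the paper. The paper writes $\|Q(X)-X\|_F=\sup_Y\langle Q(X)-X,Y\rangle$, sets $X=Y$, and then uses a \emph{triangle inequality} to separate the noise term $\bigl\|\tfrac{1}{m}\sum_{i\in S}s_iA_i\bigr\|_F$ from the $\ell_2$-RIP deviation $\sup_Y\bigl|\tfrac{1}{m}\sum_i\langle A_i,Y\rangle^2-1\bigr|$; because the latter is subtracted, the paper must split into the cases $p\sigma^2=\Omega(1)$ and $p\sigma^2=O(1)$, and the small-noise case (the $\sqrt{d^2/m}$ floor) is only sketched. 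Your key move is different: you exploit that $V_1(X)=\tfrac{1}{m}\sum_i\langle A_i,X\rangle A_i-X$ is odd in $X$, so for any fixed $X_0$ one sign makes $\langle V_1(\pm X_0),V_2\rangle\ge 0$, yielding the \emph{additive} bound $\sup_X\|Q(X)-X\|_F^2\ge\|V_1(X_0)\|_F^2+\|V_2\|_F^2$. This avoids the subtraction entirely, gives the $\sqrt{d^2/m}$ contribution directly from $\|V_1(e_1e_1^\top)\|_F^2$, and collapses the case analysis to the observation that the $V_1$ term alone covers $p\sigma^2\lesssim 1$. Your $V_2$ analysis coincides with the paper's Lemma~\ref{concentration-lemma}. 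What your approach buys is a cleaner, more unified argument in which the signal-fluctuation and noise terms never compete; what the paper's approach buys is slightly less probabilistic work on the signal part, since it only needs to \emph{upper}-bound the RIP deviation (via Lemma~\ref{l2-RIP}) rather than lower-bound $\|V_1(X_0)\|_F^2$ by a fourth-moment or Paley--Zygmund step.
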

\begin{wrapfigure}{r}{8.5cm}
		\vspace{-7mm}
		\subfloat[]{%
		\includegraphics[width=4.1cm]{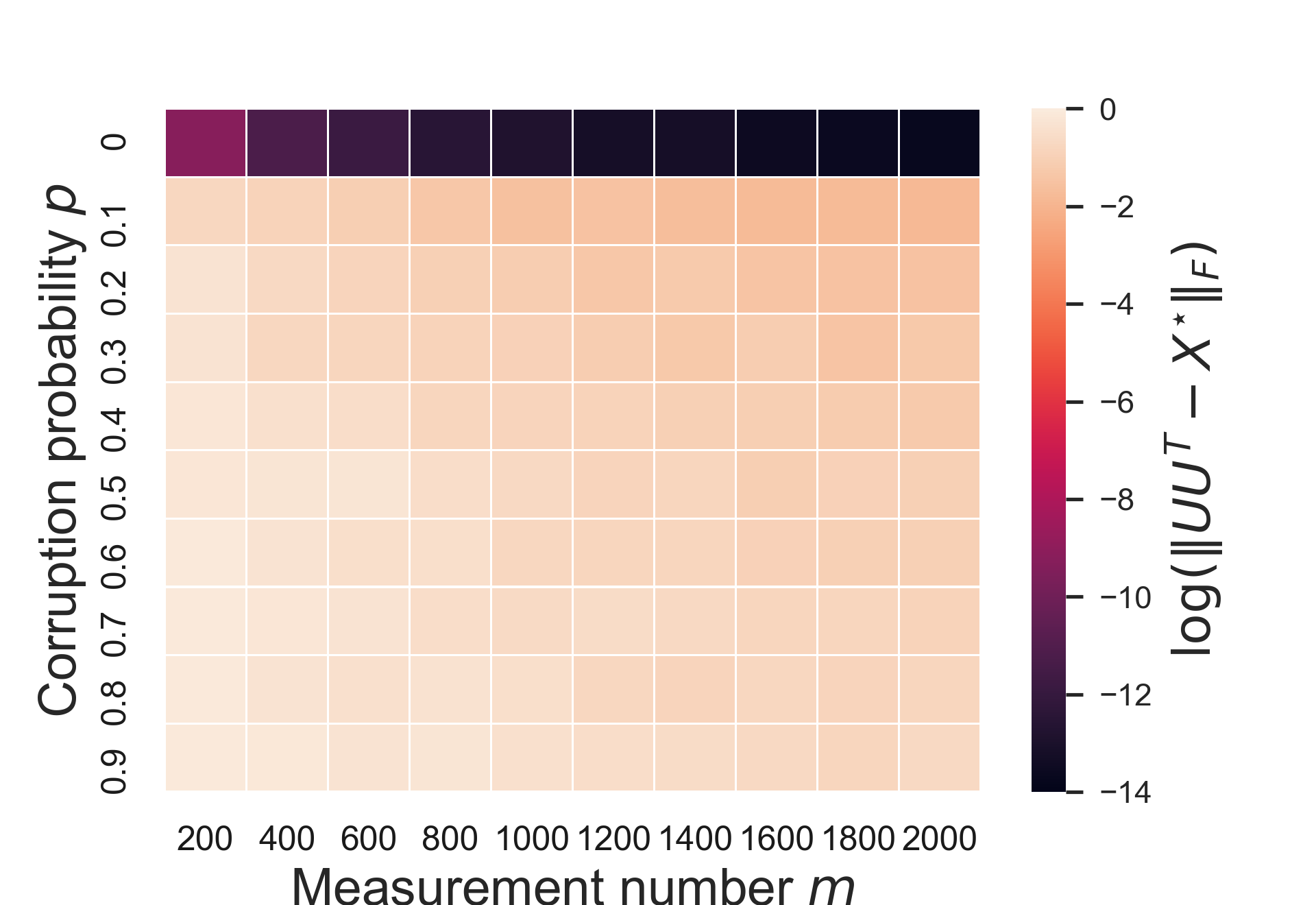}\label{fig_l2}%
	}
	\subfloat[]{%
		\includegraphics[width=4.4cm]{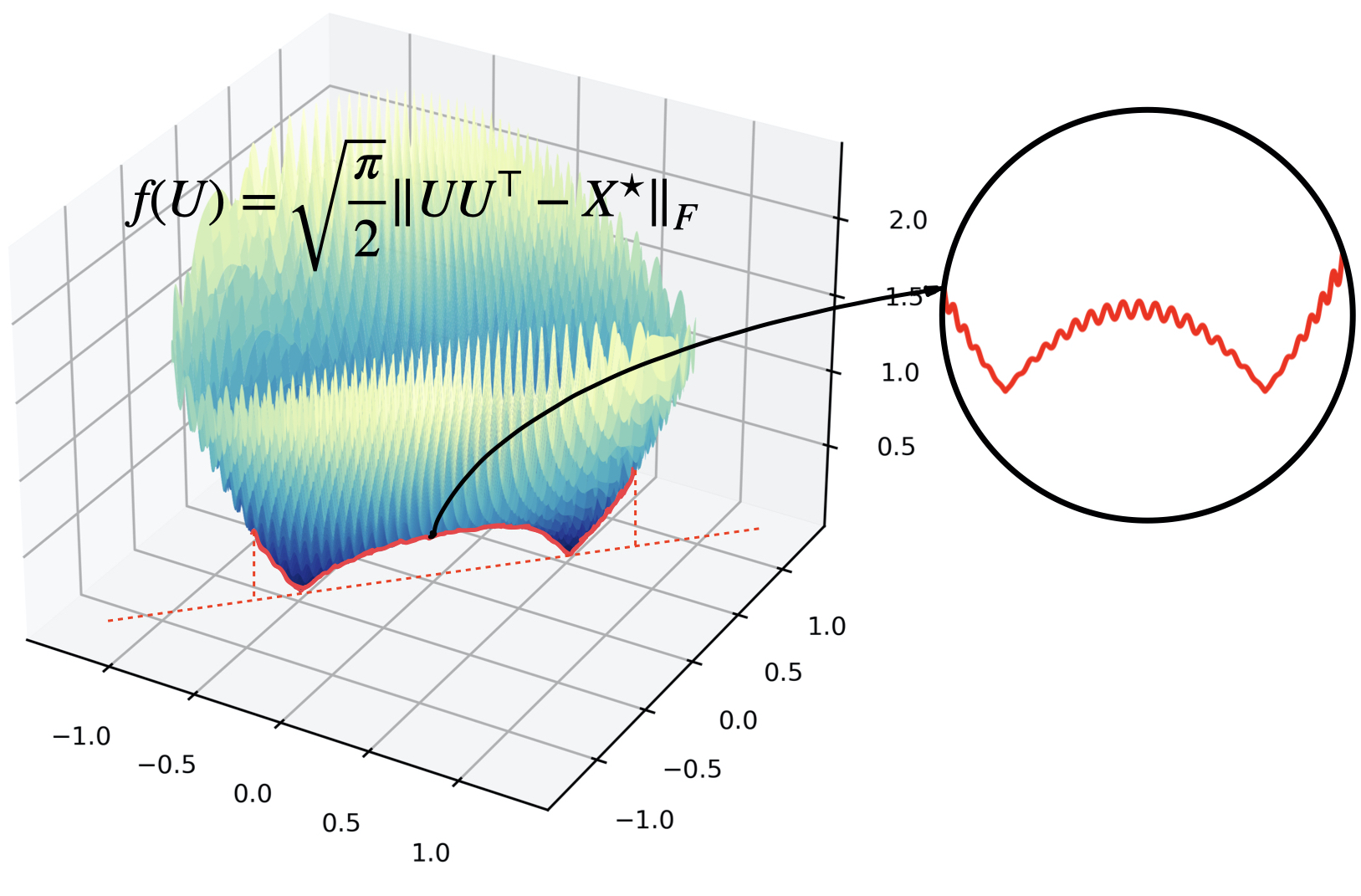}\label{fig_l1-l2}%
	}
	\caption{\footnotesize (a) The accuracy of the obtained solutions by solving~\eqref{eq_l2} via gradient descent. Each measurement is corrupted with a large noise with probability $p$. (b) The linear operator $\mathcal{A}(\cdot)$ satisfies the $\ell_1/\ell_2$-RIP, but the subdifferentials of the $\ell_1$-loss show sporadic behavior. Here, $f(U)$ is the population loss.}
		\vspace{-5mm}
\end{wrapfigure}

On one hand, the above proposition shows that, in order to guarantee $Q(X)\approx X$, the number of measurements should be at least $m\gtrsim (1+p\sigma^2)d^2$, and hence, grow with the variance of the noise. 
On the other hand, for any fixed $\delta$, $\ell_2$-RIP is guaranteed to be satisfied with $m\lesssim d^2$, which is independent of the noise variance. This highlights a fundamental pitfall of $\ell_2$-RIP in the face of large noise values: the matrices $Q(X)$ and $X$ may be far apart, \textit{even if} the linear mapping $\mathcal{A}(\cdot)$ satisfies $\ell_2$-RIP. Figure~\ref{fig_l2} illustrates that the discrepancy between $Q(X)$ and $X$ in the noisy setting leads to the ultimate failure of the gradient descent algorithm.
\paragraph {\bf $\pmb{\ell_1/\ell_2}$-RIP:} 
With the goal of robustifying the solution against outlier noise, recent work
has studied the landscape of the nonsmooth optimization~\eqref{eq_l2}
with $q=1$ under $\ell_1/\ell_2$-RIP. Roughly speaking, $\ell_1/\ell_2$-RIP imposes a similar condition to $\ell_2$-RIP, but on the $\ell_1$-loss function. 
In particular, it entails that $\frac{1}{m}\|\mathcal{A}(X)\|_1$ remains close to $\sqrt{2/\pi}\|X\|_F$, for every rank-$r$ matrix $X$. Under $\ell_1/\ell_2$-RIP, it is recently shown that subgradient method converges to the ground truth with $q=1$, provided that the true rank of the solution is known and the initial point is sufficiently close to the ground truth~\cite{li2020nonconvex, tong2021low}. However, $\ell_1/\ell_2$-RIP is also oblivious to the nature of the noise, and as a result, cannot guarantee the global convergence of the corresponding subdifferentials of $f_{\ell_1}(U)$. Figure~\ref{fig_l1-l2} shows an instance of~\eqref{eq_l2} with $q=1$, where the linear operator $\mathcal{A}(\cdot)$ satisfies $\ell_1/\ell_2$-RIP, and yet the subdifferentials of the loss function suffer from sporadic behavior due to noise, giving rise to numerous undesirable local minima.


The aforementioned challenges highlight the fundamental limitations of the existing notions of RIP in the context of robust matrix recovery with large noise. This calls for a new approach for analyzing the landscape of robust matrix recovery; a goal that is at crux of this paper.

\paragraph{Other works.} Other variants of robust matrix recovery have been studied before. \citet{fattahi2020exact} and \citet{josz2018theory} prove that~\eqref{eq_l2} with $r'=r=1$ and $q=1$ has no spurious local solution, provided that the measurement matrices correspond to element-wise projection operators. In the over-parameterized regime, \citet{you2020robust} propose to circumvent the nonsmoothness of the robust matrix recovery problem by resorting to a smooth doubly over-parameterized model, and then solving it via gradient descent.

Moreover, the concentration of subdifferentials plays a crucial role in the convergence analysis of first-order methods in ERM problems. Recently, \citet{mei2018landscape} and \citet{foster2018uniform} provide uniform gradient bounds for nonconvex and smooth ERM problems. However, these works heavily rely on the smoothness of the loss function. Within the realm of nonsmooth optimization, \citet{bai2018subgradient} studies robust dictionary learning problem with $\ell_1$-loss function, and proposes a uniform bound on its subdifferential with respect to Hausdorff distance. 
Moreover, \citet{davis2018graphical} obtain a dimension-dependent bound for a similar problem via a smoothing technique. However, these techniques are not directly applicable to the robust matrix recovery.

\section{Our Approach: Sign-RIP for $\pmb{\ell_1}$-loss Function}
Our goal is to study the following ERM problem with $\ell_1$-loss function:
\begin{equation}\label{eq_l1}
    \min_{U\in\R^{d\times r'}} f_{\ell_1}(U)=\frac{1}{2m}\left\|{\mathbf{y}-\mathcal{A}(UU^{\top})}\right\|_{1},
\end{equation}

\begin{algorithm}[tb]
   \caption{Subgradient Method}
   \label{algorithm}
\begin{algorithmic}
  \STATE {\bfseries Input:} measurement matrices $\{A_i\}_{i=1}^m$, measurement vector $\mathbf{y}=[y_1,\cdots,y_m]^\top$, number of iterations $T$, an upper bound on the rank $r'$, and an initialization matrix $B_0\in\mathbb{R}^{d\times r'}$;
   \STATE {\bfseries Output:} Solution $\hat{X}_T=U_T U^{\top}_T$ to~\eqref{eq_l1};
   \STATE Initialize $U_0=B_0$.
   \FOR{$t\leq T$}
   \STATE Compute a subgradient $D_t\in \partial f_{\ell_1}(U_t)$;
   \STATE Select the step size $\eta_t$;
   \STATE Set $U_{t+1}\leftarrow U_t - \eta_t D_t$;
   \ENDFOR
\end{algorithmic}
\end{algorithm}
The simplest algorithm for solving~\eqref{eq_l1} is subgradient method (SubGD). At every iteration, SubGD selects an arbitrary direction $D_t$ from the subdifferential of the $\ell_1$-loss at the current solution, and then updates the solution by moving towards $-D_t$ with a step size $\eta_t$; see Algorithm~\eqref{algorithm}. Figure~\ref{fig_iter} illustrates that SubGD with diminishing step sizes can successfully recover the true rank-1 solution in both exact and over-parameterized regimes when applied to the $\ell_1$-loss~\eqref{eq_l1}, even if $10\%$ of the measurements are grossly contaminated with noise. On the other hand, GD on the smooth loss function quickly overfits to the noise within a few iterations. Figure~\ref{fig_heatmap} shows the robustness of SubGD against increasing fraction of noisy measurements. It can be seen that SubGD recovers the true solution, even if more than half of the measurements are corrupted with noise,  pinpointing its superiority over GD.

To study the convergence of SubGD, it is essential to analyze the behavior of the subdifferential $\partial f_{\ell_1}(U_t)$, which can be written as 
\begin{align}
    \partial f_{\ell_1}(U_t) = \frac{1}{m}\sum_{i=1}^m\sign(\langle A_i,U_t U_t^{\top}-X^{\star}\rangle-s_i)A_i U_t.\nonumber
\end{align}

\begin{figure*}
\begin{center}
\subfloat[]{
{\includegraphics[width=5.5cm]{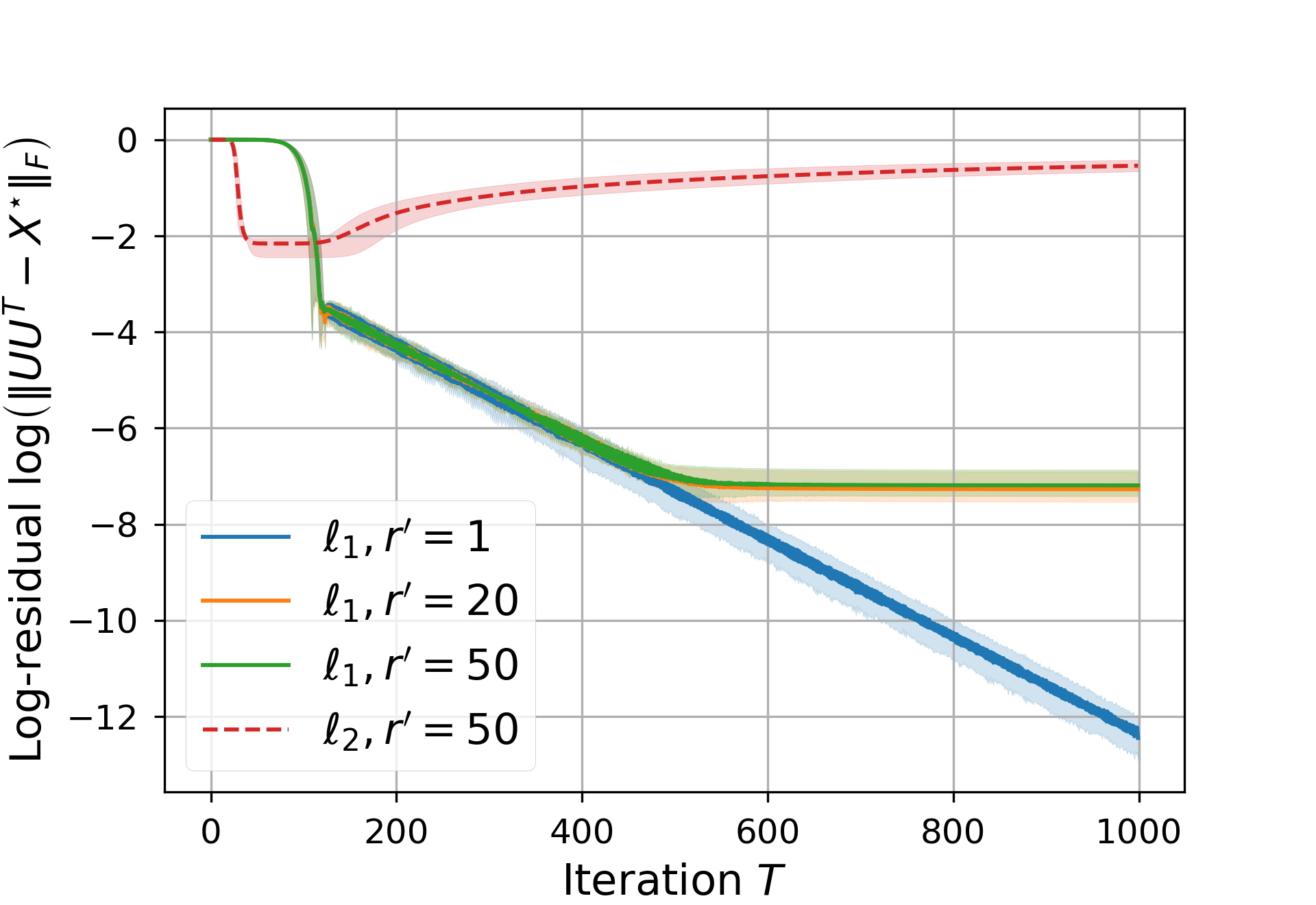}}\label{fig_iter}}
\subfloat[]{
{\includegraphics[width=5.5cm]{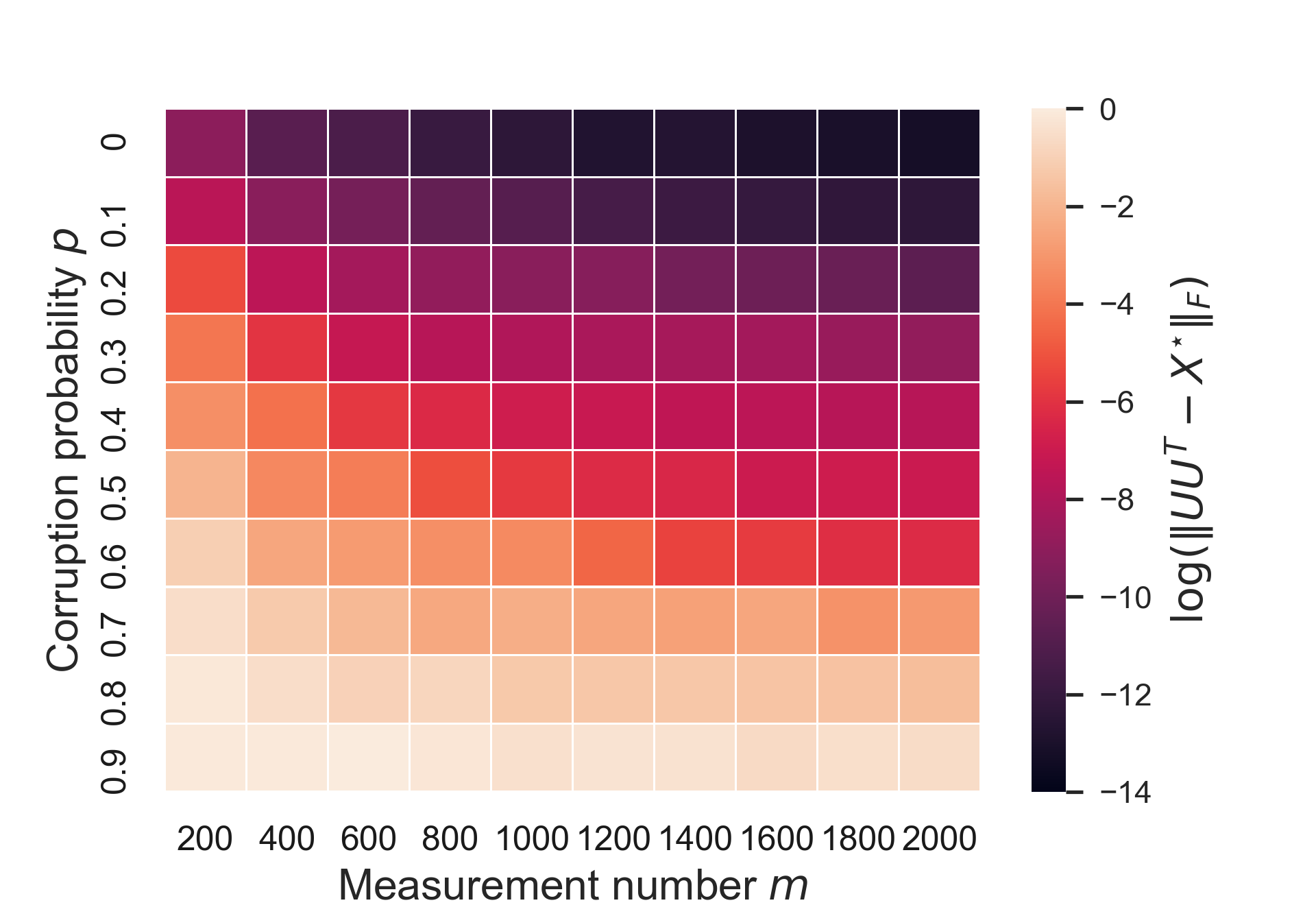}}\label{fig_heatmap}}
\end{center}
\caption{\footnotesize (a) SubGD with geometrically decaying step sizes recovers the true rank-$1$ matrix in both exact ($r'=r=1$) and over-parameterized ($r'>r$) regimes. GD with small step sizes overfits to the noise. (b) SubGD with geometrically decaying step sizes recovers the true rank-$1$ matrix in the over-parameterized regime ($r'=d = 20$), for different number of measurements $m$ and corruption probabilities $p$.}
\label{fig_l1}
\end{figure*}
The key idea behind our proposed technique is to study the theoretical superiority of the robust matrix recovery with $\ell_1$-loss, by characterizing its distance to an ideal, noiseless problem $\bar f_{\ell_2}(U) = \|UU^\top-X^*\|_F^2$.
In particular, we show that robust matrix recovery with $\ell_1$-loss benefits from strong robustness and algorithmic properties, provided that its subdifferentials resemble the gradients of $\bar f_{\ell_2}(U)$. In particular, upon defining $\mathcal{Q}(X) = \frac{1}{m}\sum_{i=1}^m\sign(\langle A_i,X\rangle-s_i)A_i$, our goal is to provide conditions under which, for \textit{any} $Q\in\mathcal{Q}(UU^\top-X^*)$, the subgradient $QU$ of the $\ell_1$-loss function points {approximately} towards the direction $(UU^\top-X^*)U$, which is the gradient of $\bar f_{\ell_2}(U)$. This is formalized through the notion of \textit{Sign-RIP}, which is defined below. 

\begin{definition}[Sign-RIP]\label{def_sign_RIP}
The measurements are said to satisfy \textit{Sign-RIP} with parameters $(r,\delta)$ and a scaling function $\varphi:\mathbb{R}^{d\times d}\to\mathbb{R}$ if, for every rank-$r$ matrix $X$ and every $Q\in\mathcal{Q}(X)$, we have
\begin{align}\label{eq_sign_RIP}
    \left\|Q-\varphi(X)\frac{X}{\|X\|_F}\right\|_{F,r}\leq \varphi(X)\delta.
\end{align}

\end{definition}
At the first glance, one may speculate that Sign-RIP is extremely restrictive: roughly speaking, it requires the uniform concentration of random set-valued function $\mathcal{Q}(X)$, for \textit{every} rank-$r$ matrix $X$. However, we show that, for a fixed $\delta$, Sign-RIP enjoys the same linear sample complexity as $\ell_2$-~\cite{recht2010guaranteed} and $\ell_1/\ell_2$-RIP~\cite{li2020nonconvex}, and hence, is not statistically more restrictive than its classical counterparts. In particular, we show that Sign-RIP holds with high probability with linear number of Gaussian measurements, even if an {arbitrarily large} fraction of them are corrupted with large noise values. To this goal, first we introduce our considered noise model:

\begin{assumption}[Noise model]\label{assump_noise}
    Given a corruption probability $p$, the noise vector $\mathbf{s}\in \R^{m}$ is generated as follows: first, a subset $\mathcal{S}\subset \{1,2,\dots,m\}$ with cordiality $pm$ is selected uniformly at random. Then, for every $i\in \mathcal{S}$, the element $s_i$ is randomly drawn from a zero mean distribution, i.e., $s_i\stackrel{i.i.d.}{\sim} \mathbb{P}$ with $\mathbb{E}[s_i] = 0$ and arbitrary variance. Finally, $s_i = 0$ for every $i\not\in\mathcal{S}$.
    \label{assumption-noise}
\end{assumption}
\begin{remark}
Our subsequent results hold under an alternative (and equivalent) noise model studied in \cite{bai2018subgradient}, where $s_i\stackrel{i.i.d.}{\sim} \mathbb{P}$ with probability $p$, and $s_i=0$ with probability $1-p$.
\end{remark}
Notice that our proposed noise model does not impose any assumption on the magnitude of the nonzero elements of $\mathbf{s}$, or the specific form of their distribution, which makes it particularly suitable for modeling outliers with arbitrary magnitudes. 

Next, we show that, with sufficiently large number of Gaussian measurements, Sign-RIP is satisfied with an appropriate choice of scaling function, even if a constant fraction of the measurements are corrupted with noise.
\begin{theorem}\label{thm_G_noisy}
\begin{sloppypar}
Assume that the measurement matrices $\{A_i\}_{i=1}^m$ defining the linear operator $\mathcal{A}(\cdot)$ are symmetric with i.i.d. standard Gaussian entries, and that the noise vector $\mathbf{s}$ satisfies Assumption~\ref{assump_noise}. Then, Sign-RIP holds with parameters $(r,\delta)$ and a scaling function $\varphi(X) = \sqrt{\frac{2}{\pi}}\left(1-p+p\mathbb{E}\left[e^{-s_i^2/(2\|X\|_F)}\right]\right)$ with probability of at least $1-Ce^{-cm\delta^4}$, provided that $m\gtrsim \frac{dr\left(\log\left(\frac{1}{(1-p)\delta}\right)\vee 1\right)}{\delta^4(1-p)^4}$.
\end{sloppypar} 
\end{theorem}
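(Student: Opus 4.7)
I would prove the theorem in three stages: (i) compute the expectation of the relevant bilinear form, (ii) establish pointwise concentration, and (iii) bootstrap to a uniform bound via a covering argument that explicitly handles the discontinuity of the sign function. First, I dualize the $\|\cdot\|_{F,r}$ norm to reduce the task to controlling
$$\langle Q, Y\rangle \;=\; \frac{1}{m}\sum_{i=1}^m \sign(\langle A_i, X\rangle - s_i)\,\langle A_i, Y\rangle$$
uniformly over rank-$r$ matrices $X$ and unit rank-$r$ test matrices $Y \in \mathbb{S}_r$. For the mean, I decompose $\langle A_i, Y\rangle = \frac{\langle X, Y\rangle}{\|X\|_F^2}\langle A_i, X\rangle + R_i$, where $R_i$ is Gaussian independent of $\langle A_i, X\rangle$ and so contributes nothing in expectation. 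Direct integration then gives $\E_u[u\,\sign(\|X\|_F u - s)] = 2\phi(s/\|X\|_F) = \sqrt{2/\pi}\,e^{-s^2/(2\|X\|_F^2)}$ for $u \sim \mathcal{N}(0,1)$, and averaging over the noise distribution from Assumption~\ref{assumption-noise} (zero with probability $1-p$, drawn from $\Pb$ otherwise) recovers exactly $\varphi(X)\langle X, Y\rangle/\|X\|_F$.

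For pointwise concentration at a fixed $(X, Y)$, the summands $Z_i = \sign(\langle A_i, X\rangle - s_i)\langle A_i, Y\rangle$ are sub-Gaussian with $\|Z_i\|_{\psi_2} \lesssim 1$, since the sign factor is bounded and $\langle A_i, Y\rangle$ is standard Gaussian. Hoeffding's inequality yields
$$\Pb\!\left(\left|\tfrac{1}{m}\sum_i Z_i - \E Z_i\right| > \tfrac{\varphi(X)\delta}{4}\right) \leq 2\exp\!\big(-cm\,\varphi(X)^2 \delta^2\big),$$
and since $\varphi(X) \gtrsim (1-p)$, the right-hand side is at most $\exp(-cm(1-p)^2\delta^2)$.

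The heart of the proof — and the main technical obstacle — is the uniform bound. I would construct an $\eta$-net $\mathcal{N}$ of $\mathbb{S}_r$ of cardinality at most $(C/\eta)^{Cdr}$, using a dyadic argument to cover the unbounded range of $\|X\|_F$. For a pair $(X,Y)$ and its nearest $(X',Y') \in \mathcal{N}\times\mathcal{N}$, the $Y$-piece of the discretization error is Lipschitz, $|\langle Q_X, Y - Y'\rangle| \leq \frac{1}{m}\sum_i |\langle A_i, Y - Y'\rangle| \lesssim \|Y - Y'\|_F \leq \eta$ (here I invoke $\ell_1/\ell_2$-RIP, which holds for Gaussian measurements at the same sample complexity). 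The $X$-piece, however, equals $\frac{2}{m}\sum_{i \in \mathcal{F}}|\langle A_i, Y'\rangle|$ where $\mathcal{F}$ indexes the sign flips, and this is where the discontinuity of $\sign(\cdot)$ bites. I would control $\mathcal{F}$ by an anti-concentration argument: for any margin $\tau$,
$$\mathcal{F} \;\subseteq\; \{i : |\langle A_i, X'\rangle - s_i| < \tau\} \;\cup\; \{i : |\langle A_i, X - X'\rangle| \geq \tau\}.$$
The first set has size $\lesssim m\tau/\|X'\|_F$ with high probability, because conditional on $s_i$ the density of $\langle A_i, X'\rangle - s_i$ is bounded by $1/(\sqrt{2\pi}\|X'\|_F)$; the second set is empty with high probability provided $\eta \ll \tau$. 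Summing $|\langle A_i, Y'\rangle|$ over $\mathcal{F}$ via Gaussian order statistics gives a flip contribution of order $\tau\sqrt{\log(1/\tau)}$. Balancing $\tau$ and $\eta$ against the pointwise deviation budget $(1-p)\delta$ leaves $\log|\mathcal{N}| \lesssim dr\log(1/((1-p)\delta))$, and a union bound of the stage-(ii) tail over $\mathcal{N}\times\mathcal{N}$ yields the stated sample complexity and high-probability conclusion. The delicate interplay between the margin $\tau$, the net radius $\eta$, and the scaling $\varphi(X) \asymp (1-p)$ is what produces the $\delta^4(1-p)^4$ denominator in the final bound.
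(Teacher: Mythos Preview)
Your three-stage plan --- expectation via the Gaussian regression decomposition, pointwise sub-Gaussian concentration, and a covering argument that isolates the sign discontinuity --- is exactly the paper's architecture, and most of the ingredients (dualizing $\|\cdot\|_{F,r}$, $\ell_1/\ell_2$-RIP for the Lipschitz $Y$-piece, a margin $\tau$ together with Gaussian anti-concentration for the sign-flip piece) match one-for-one.

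There is, however, one genuine gap. You assert that the set $\{i:|\langle A_i,X-X'\rangle|\ge\tau\}$ is empty with high probability once $\eta\ll\tau$. That is correct for a \emph{fixed} $X$, but fails uniformly over $X\in B(X',\eta)$: since $\sup_{X\in B(X',\eta)}|\langle A_i,X-X'\rangle|=\eta\,\|A_i\|_{F,2r}\asymp\eta\sqrt{dr}$, emptiness would force $\tau\gtrsim\eta\sqrt{dr}$, a far tighter coupling than ``$\eta\ll\tau$'' suggests. The paper does \emph{not} claim this set is empty; it instead bounds the empirical fraction
\[
\sup_{Z\in\epsilon\mathbb{S}_{2r}}\frac{1}{m}\sum_{i=1}^m\mathbbm{1}\bigl(|\langle A_i,Z\rangle|\ge t\bigr)
\]
uniformly via a Talagrand-type inequality (their Lemma~\ref{l_telegrand}, invoking a localized empirical-process bound), obtaining a rate $e^{-t^2/4\epsilon^2}\sqrt{dr/m}\vee dr/m$. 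This empirical-process step is precisely the technical input your outline is missing, and the paper's own proof sketch flags it as the crux. Relatedly, the paper first applies Cauchy--Schwarz to separate the flip indicator from $\langle A_i,Y\rangle$ (pairing the flip fraction with an $\ell_2$-RIP factor), rather than summing $|\langle A_i,Y'\rangle|$ over $\mathcal{F}$ via order statistics; this decoupling is what makes the Talagrand bound directly usable. Finally, it is the Hoeffding concentration of the \emph{anti-concentration indicator} $\mathbbm{1}(|\langle A_i,\bar X\rangle+s_i|\le t)$ at level $\delta^2$, union-bounded over the net, that drives the $\delta^4(1-p)^4$ denominator --- your stage-(ii) tail $\exp(-cm(1-p)^2\delta^2)$ alone would only yield $(1-p)^2\delta^2$, so the parameter balance you sketch does not actually recover the stated sample complexity.
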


A number of observations can be made based on Theorem~\ref{thm_G_noisy}. First, it entails that $QU \approx \alpha (UU^\top - X^*) U$ for some $\alpha>0$ and every $Q\in\mathcal{Q}(UU^\top-X^*)$. This implies that the search direction $D_t$ of SubGD is approximately aligned with the descent direction of the ideal, noiseless loss function $\bar{f}_{\ell_2}(U) = \|UU^\top-X^*\|_F^2$, even if a fraction of the measurements are severely corrupted with noise. Second, it shows that, for any fixed corruption probability $p$ and parameter $\delta$, Sign-RIP is satisfied with $\mathcal{O}(dr)$ number of Gaussian measurements, which is the same as those of $\ell_2$-~\cite{recht2010guaranteed} and $\ell_1/\ell_2$-RIP~\cite{li2020nonconvex}. Moreover, our result does not impose any restriction on the corruption probability $p$, which improves upon the assumption $p<1/2$ made in~\citep{li2020nonconvex, tong2021low} for the robust matrix recovery problem. Finally, our result holds {irrespective} of the magnitude of the noise values, ultimately alleviating the issue raised in Subsection~\ref{sec_RIP}.

{\bf Sketch of the proof for Theorem~\ref{thm_G_noisy}.}  The details of our derivations can be found in Appendix B. Here, we provide a short and informal overview of the key ingredients of our proof technique. To prove Theorem~\ref{thm_G_noisy}, we must establish the uniform convergence of $\frac{1}{m}\sum_{i=1}^{m}\sign(\inner{A_i}{X})\inner{A_i}{Y}$ over a low-rank manifold.
This can be done by showing the uniform concentration of $\sign(\inner{g}{x})\inner{g}{y}$, where $g$ is an i.i.d. standard Gaussian vector, and $x,y$ are the vectorized versions of $X,Y\in\mathbb{S}_r$. The previous proof techniques for establishing $\ell_2$- and $\ell_1/\ell_2$-RIP heavily rely on the Lipschitzness of the underlining functions, which does not hold in our problem due to the discontinuity of the $\sign$ function.
To address this challenge, we apply a novel {discretization} argument based on the following decomposition 
\begin{equation}
    \sign(\inner{g}{x})\inner{g}{y} = \sign(\inner{g}{\pi(x)})\inner{g}{y} + \left(\sign(\inner{g}{x})-\sign(\inner{g}{\pi(x)})\right)\inner{g}{y}.\nonumber
\end{equation}
Here $\pi(x)$ is the closest element to $x$ in the $\varepsilon$-net of $\mathbb{S}_r$. The first term can be easily controlled due to the small cardinality of the $\varepsilon$-net of $\mathbb{S}_r$. On the other hand, the second term can be uniformly bounded by characterizing the deviation of the $\sign$ function from the linear part
\begin{equation}
    \left(\sign(\inner{g}{x})-\sign(\inner{g}{\pi(x)})\right)\leq \left|\sign(\inner{g}{x})-\sign(\inner{g}{\pi(x)})\right|\cdot |\inner{g}{y}|.\nonumber
\end{equation}
Since we {localize} the discontinuous $\sign$ function, we can utilize the Talagrand-type inequalities~\citep{talagrand1996new, sen2018gentle} to show that $\sign(\inner{g}{x})-\sign(\inner{g}{\pi(x)})$ is uniformly small. This, together with a union bound leads to the final result.$\hfill\square$


Equipped with Sign-RIP, we next study the landscape of the robust rank-1 matrix recovery with outlier noise. 

\section{Characterization of Critical Points in Robust Rank-$1$ Matrix Recovery}

In this section, we characterize the critical points of the robust rank-1 matrix recovery. In particular, we show that, under Sign-RIP, all critical points lie within a small neighborhood of the ground truth or the origin. 

Suppose that $r'=r^{\star}=1$, and $X^*=u^*{u^*}^\top$ for $u^*\in\mathbb{R}^{d\times 1}$. For simplicity and without loss of generality, we assume that $\|u^*\|= 1$. Recall that $\bar U$ is a \textit{critical point} of $f_{\ell_1}(U)$ if it satisfies $0\in\partial f_{\ell_1}(\bar U)$. Moreover, $\bar U$ is a \textit{local minimum} of $f_{\ell_1}(U)$ if $f_{\ell_1}(\bar U)\leq f_{\ell_1}(U)$ for every $U\in \mathbb{B}(\bar U, \epsilon)$, for some $\epsilon>0$. All local minima of $f_{\ell_1}(U)$ are also critical points~\cite{clarke1990optimization}. Our next theorem characterizes the critical points of $f_{\ell_1}(U)$, the proof of which is in Appendix C of the supplementary material.
\begin{theorem}[Critical Points]\label{thm_benign}
Assume that the measurements satisfy the Sign-RIP condition with parameters $\left(2,\delta\right)$ and a strictly positive and uniformly bounded scaling function $\varphi(X)$. Moreover, suppose that $U$ with $\|U\|\leq R$ for some $R\geq 1$ is a critical point of~\eqref{eq_l1} with $r'=r^*=1$. Then, we have $\|UU^\top-u^*{u^*}^\top\|_F\lesssim\delta$ or $\|U\|^2\lesssim\delta$, provided that $\delta\lesssim1/R^3$.
\end{theorem}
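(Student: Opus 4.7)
The plan is to convert the critical-point condition, together with Sign-RIP, into a single scalar inequality of the form $\|\Delta U\| \le \delta\|U\|\|\Delta\|_F$ for $\Delta := UU^\top - u^*u^{*\top}$, and then extract the claimed dichotomy by direct case analysis in the Euclidean parameters of $U$. Since $\Delta$ has rank at most $2$, Sign-RIP with parameters $(2,\delta)$ applies. The case $\|\Delta\|_F = 0$ (equivalently $U = \pm u^*$) is trivial, so I would assume $\|\Delta\|_F > 0$ and use Sign-RIP to write $Q = \varphi(\Delta)\Delta/\|\Delta\|_F + E$ with $\|E\|_{F,2} \le \varphi(\Delta)\delta$, where $Q\in\mathcal{Q}(\Delta)$ is the witness of $0 \in \partial f_{\ell_1}(U)$, i.e., $QU=0$. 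Applying both sides to $U$, dividing by the positive $\varphi(\Delta)$, and invoking $\|EU\| \le \|E\|_{F,1}\|U\| \le \|E\|_{F,2}\|U\|$ (since the operator norm equals $\|\cdot\|_{F,1}$) produces the key inequality.

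Next I would introduce the decomposition $U = \alpha u^* + v$ with $v\perp u^*$, write $\tau = \|U\|^2$ and $\nu = \|v\|^2$, and compute $\Delta U = \alpha(\tau - 1)u^* + \tau v$ together with $\|\Delta\|_F^2 = (\tau - 1)^2 + 2\nu$. Squaring the key inequality and rearranging yields
\[ (\alpha^2 - \delta^2\tau)(\tau - 1)^2 + \tau(\tau - 2\delta^2)\nu \le 0. \]
Since $(\tau-1)^2,\nu\ge 0$, one of three alternatives must hold: (a) both coefficients are strictly positive, forcing $(\tau-1)^2 = \nu = 0$ and hence $U = \pm u^*$, $\|\Delta\|_F = 0$; (b) $\tau \le 2\delta^2$, giving $\|U\|^2 \lesssim \delta$ directly; (c) $\alpha^2 \le \delta^2\tau$, the ``nearly orthogonal'' branch. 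In case (c), $\nu = \tau - \alpha^2 \ge (1-\delta^2)\tau$ gives $\|\Delta U\|^2 \ge \tau^2 \nu \ge \tau^3(1-\delta^2)$; combining with the key inequality and the bound $\|\Delta\|_F^2 \le \tau^2 + 1$ yields $\tau^2(1-2\delta^2) \le \delta^2$, so $\|U\|^2 \lesssim \delta$ for $\delta$ small enough.

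The main obstacle is the bookkeeping in case (c): cleanly identifying which term dominates in $\|\Delta U\|^2$ and in $\|\Delta\|_F^2$, and tracking constants so that the resulting smallness requirement on $\delta$ aligns with the threshold $\delta \lesssim 1/R^3$ from the theorem statement. A plausible source of an $R$-dependent factor is replacing the tighter bound $\|\Delta\|_F^2 \le \tau^2 + 1$ by the uniform bound $\|\Delta\|_F^2 \le (R^2+1)^2$ when merging the analysis across the three cases, or using a perturbation/continuity argument around the boundary $\alpha^2 \approx \delta^2\tau$ that costs additional polynomial factors in $R$. Two secondary checks are that the passage from $\|E\|_{F,2}$ to $\|EU\|$ is sharp enough to avoid further losses, and that the positivity and boundedness of the scaling function $\varphi$ enter only through dividing by $\varphi(\Delta)$ and not in any hidden way elsewhere.
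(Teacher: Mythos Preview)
Your proof is correct and, in fact, cleaner than the paper's. Both arguments start identically: from $QU=0$ and Sign-RIP one gets $\|\Delta U\|\le \delta\|U\|\,\|\Delta\|_F$, and both decompose $U=\alpha u^*+v$ with $v\perp u^*$. The paper, however, immediately replaces $\|\Delta\|_F$ by its uniform bound $1+R^2$, obtaining $\|\Delta U\|\le R(1+R^2)\delta$; from $\alpha^2(\tau-1)^2+\beta^2\tau^2\le (R(1+R^2)\delta)^2$ it splits into $|\alpha(\tau-1)|,\ |\beta|\tau\le R(1+R^2)\delta$ and runs a four-way case analysis that only yields bounds of size $(R(1+R^2)\delta)^{2/3}$. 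It then recovers the target $O(\delta)$ via an iterated bootstrap lemma that pushes the exponent from $2/3$ toward $1$. This bootstrapping is precisely where the threshold $\delta\lesssim 1/R^3$ enters.

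By retaining $\|\Delta\|_F^2=(\tau-1)^2+2\nu$ explicitly, your rearranged inequality $(\alpha^2-\delta^2\tau)(\tau-1)^2+\tau(\tau-2\delta^2)\nu\le 0$ yields the dichotomy in a single pass. Your case~(c) bookkeeping is already complete: from $\tau^2(1-2\delta^2)\le\delta^2$ you get $\|U\|^2=\tau\lesssim\delta$ as soon as $\delta<1/\sqrt{2}$, with no $R$ appearing anywhere. So your suspicion is exactly right---the $R^3$ factor is an artifact of the paper's upfront uniform bound on $\|\Delta\|_F$, not intrinsic to the problem, and your argument simply does not need it. The scaling function $\varphi$ enters only through dividing by $\varphi(\Delta)>0$, as you say; nothing else is hidden.
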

The above theorem shows that, under Sign-RIP with small $\delta$, the critical points of $f_{\ell_1}(U)$ are either close to the ground truth, or have very small norm. 
Combined with Theorem~\ref{thm_G_noisy}, this leads to the following corollary.
\begin{corollary}\label{cor_benign}
Assume that the measurement matrices $\{A_i\}_{i=1}^m$ defining the linear operator $\mathcal{A}(\cdot)$ are symmetric with i.i.d. standard Gaussian entries, and that the noise vector $\mathbf{s}$ satisfies Assumption~\ref{assump_noise}. Moreover, suppose that $U$ with $\|U\|\leq R$ for some $R\geq 1$ is a critical point of~\eqref{eq_l1} with $r'=r^*=1$. Then, we have $\|UU^\top-u^*{u^*}^\top\|_F\lesssim\delta$ or $\|U\|^2\lesssim\delta$ with an overwhelming probability, provided that $m\gtrsim \frac{dr'\log\left(\frac{1}{(1-p)\delta}\right)}{\delta^4(1-p)^4}$ and $\delta\lesssim 1/R^3$. Additionally, we have $UU^\top = u^*{u^*}^\top$ or $\|U\|\lesssim\delta$, if $p\leq \frac{1}{2}-\frac{\delta}{\sqrt{2/\pi}-\delta}$. 
\end{corollary}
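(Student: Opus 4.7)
The plan is to derive Corollary~\ref{cor_benign} by directly combining Theorem~\ref{thm_G_noisy} with Theorem~\ref{thm_benign}. The overall strategy is to invoke Theorem~\ref{thm_G_noisy} to establish Sign-RIP for the Gaussian ensemble, verify that the resulting scaling function satisfies the hypotheses of Theorem~\ref{thm_benign}, and then apply Theorem~\ref{thm_benign} to conclude the first part. The sharper ``exact recovery'' statement requires additional work exploiting the explicit form of $\varphi$ together with the small-$p$ assumption.

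For the first claim, I would apply Theorem~\ref{thm_G_noisy} with rank parameter $r = r^* + r' = 2$ (since $UU^\top - u^*{u^*}^\top$ has rank at most $2$) and with the stated tolerance $\delta$. This yields Sign-RIP with parameters $(2,\delta)$ and scaling function $\varphi(X) = \sqrt{2/\pi}\bigl(1 - p + p\,\mathbb{E}[e^{-s_i^2/(2\|X\|_F)}]\bigr)$ under the stated sample complexity, with probability at least $1 - Ce^{-cm\delta^4}$. Since $0 \leq \mathbb{E}[e^{-s_i^2/(2\|X\|_F)}] \leq 1$, the scaling function satisfies $\sqrt{2/\pi}(1-p) \leq \varphi(X) \leq \sqrt{2/\pi}$ uniformly in $X$, and is therefore strictly positive (as $p < 1$) and uniformly bounded. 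The hypotheses of Theorem~\ref{thm_benign} are thus met, and applying it to any critical point $U$ with $\|U\| \leq R$ under $\delta \lesssim 1/R^3$ yields the dichotomy $\|UU^\top - u^*{u^*}^\top\|_F \lesssim \delta$ or $\|U\|^2 \lesssim \delta$.

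For the additional claim, I would leverage the explicit form of $\varphi$ together with the assumption $p \leq 1/2 - \delta/(\sqrt{2/\pi} - \delta)$. Rearranging this inequality gives $(1 - p)\sqrt{2/\pi} \geq \sqrt{2/\pi}/2 + \delta/(1 - \delta\sqrt{\pi/2})$, which keeps the ``effective margin'' $\varphi(X) - \sqrt{2/\pi}\,\delta$ bounded away from zero for every $X$. In the first branch of the first-part dichotomy, the critical-point condition $0 \in \partial f_{\ell_1}(U)$ combined with the Sign-RIP inequality applied to $H = UU^\top - u^*{u^*}^\top$ yields $\varphi(H)\,\|HU\|_F/\|H\|_F \lesssim \varphi(H)\,\delta\,\|U\|$. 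The excess margin from the $p$-condition should then preclude any nonzero $H$, forcing $UU^\top = u^*{u^*}^\top$, and a parallel refinement in the second branch upgrades $\|U\|^2 \lesssim \delta$ to $\|U\| \lesssim \delta$.

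The principal obstacle is this last step: converting an $O(\delta)$ Frobenius bound into an \emph{exact} equality. The delicate point is that $0 \in \partial f_{\ell_1}(U)$ permits arbitrary selections in $[-1,1]$ for the multi-valued $\sign$ at indices where $\langle A_i, H\rangle - s_i = 0$; reconciling this set-valued freedom with the Sign-RIP approximation while squeezing the residual to zero is the heart of the argument. It is reassuring that the threshold $p \leq 1/2 - \delta/(\sqrt{2/\pi}-\delta)$ degenerates to the familiar $p < 1/2$ condition as $\delta \to 0$, matching the classical majority-clean-measurements threshold for exact $\ell_1$ recovery, which suggests that the detailed calculation will faithfully track a sharpness/margin argument at the ground truth.
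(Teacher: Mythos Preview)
Your argument for the first part is correct and coincides with the paper: apply Theorem~\ref{thm_G_noisy} to obtain Sign-RIP with parameters $(2,\delta)$, check that the scaling function $\varphi(X)=\sqrt{2/\pi}\bigl(1-p+p\,\mathbb{E}[e^{-s_i^2/(2\|X\|_F)}]\bigr)$ is strictly positive and uniformly bounded, and then invoke Theorem~\ref{thm_benign}.

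For the second (exact-recovery) part, however, your route diverges from the paper and contains a genuine gap. Your proposed mechanism is a ``margin'' argument: the $p$-condition keeps $\varphi(X)-\sqrt{2/\pi}\,\delta$ bounded away from zero, and you hope this excess, fed back into the critical-point inequality $\|HU\|\le\delta\|H\|_F\|U\|$, forces $H=0$. But that inequality is already \emph{normalized} by $\varphi$---it is precisely the inequality used in the proof of Theorem~\ref{thm_benign}, and on its own it only yields $\|H\|_F\lesssim\delta$, never $H=0$. The positivity of $\varphi-\sqrt{2/\pi}\,\delta$ does not sharpen the constant in that bound, so the step ``the excess margin should then preclude any nonzero $H$'' has no actual content as written. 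The set-valued selection issue you flag is real, but it is not the crux; the crux is that no amount of margin in $\varphi$ converts an $O(\delta)$ bound into an equality via Sign-RIP alone.

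The paper takes a completely different route for this part. It first observes (Lemma~\ref{l_RIP}) that Sign-RIP implies the $\ell_1/\ell_2$-RIP. It then imports from \cite{li2020nonconvex} that, under $\ell_1/\ell_2$-RIP and the assumption $p\le \tfrac{1}{2}-\tfrac{\delta}{\sqrt{2/\pi}-\delta}$, the loss $f_{\ell_1}$ is $\alpha$-weakly sharp and $\tau$-weakly convex for suitable $\alpha,\tau$; these two properties together imply that $f_{\ell_1}$ has \emph{no} critical point other than $u^*$ inside a ball of radius $2\alpha/\tau$ around $u^*$. Finally, one checks that the $O(\delta)$ neighborhood produced by the first part lies inside this ball, so any critical point in the first branch must coincide with $u^*$. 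The role of the $p$-threshold is thus to activate the weak-sharpness/weak-convexity machinery of \cite{li2020nonconvex}, not to tighten the Sign-RIP constant.
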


Earlier works on robust matrix recovery with $\ell_1$-loss can only characterize the critical points of $f_{\ell_1}(U)$ locally within a very {small} neighborhood of the global minima~\cite{li2020nonconvex, tong2021low}. Corollary~\ref{cor_benign} extends this result in two ways for $r^*=1$: first, it provides a \textit{global} characterization of the critical points. In particular, it shows that, with sufficiently large number of measurements, all critical points with a bounded norm concentrate around the ground truth or the origin, provided that $p<1$. Moreover, it shows that, if additionally we have $p<1/2$, all critical points that are not close to the origin must coincide the ground truth.

\section{Over-parameterized Robust  Rank-$1$ Matrix Recovery}
In this section, we study the over-parameterized robust matrix recovery problem, where the rank of the true solution is unknown and over-estimated. In particular, we show that,
{\it under Sign-RIP condition, SubGD converges to the true rank-1 solution in the over-parameterized regime, without any explicit regularization or rank constraint.}

\noindent{\bf Intuition behind our analysis.} Before delving into the details, we shall provide the intuition behind our analysis. Suppose that Sign-RIP holds with sufficiently small $\delta$. Then, we have $D_t\approx \varphi_t\frac{(U_tU_t-X^*)U_t}{\|U_tU_t-X^*\|_F}$ for every $D_t\in\partial f_{\ell_1}(U_t)$, where for simplicity, we define $\varphi_t = \varphi(U_tU_t-X^*)$. Based on this approximation, the iterations of SubGD can be approximated as $U_{t+1} \approx U_t - \eta_t\varphi_t\cdot\frac{(U_tU_t-X^*)U_t}{\|U_tU_t-X^*\|_F}$.
Consequently, with the choice of $\eta_t = \eta_0\varphi_t^{-1}\|U_tU_t-X^*\|_F$, the iterations of SubGD reduce to
\begin{equation}\label{eq_stepsize}
    U_{t+1} \approx U_t - \eta_0\cdot{(U_tU_t-X^*)U_t}
\end{equation}
which are precisely the iterations of GD with a constant step size $\eta_0$, applied to the ideal, noiseless $\ell_2$-loss function $\bar{f}_{\ell_2}(U) = \|UU^\top-X^*\|_F^2$. This implies that, under Sign-RIP, SubGD behaves similar to GD with a constant step size, when applied to $\bar{f}_{\ell_2}(U)$.  A caveat of this analysis is that the proposed step sizes are in terms of $\varphi_t^{-1}\|U_tU_t-X^*\|_F$, which is not known \textit{a priori}. In the noiseless scenario, Sign-RIP can be invoked to show that $\varphi_t^{-1}\|U_tU_t-X^*\|_F$ can be accurately estimated as $\frac{\pi}{2m}\|\mathbf{y}-\mathcal{A}(U_tU_t)\|_1$. However, with noisy measurements, the value of $\|U_tU_t-X^*\|_F$ \textit{cannot} be estimated merely based on $\|\mathbf{y}-\mathcal{A}(U_tU_t^\top)\|_1$, since the $\ell_1$-loss function $\|\mathbf{y}-\mathcal{A}(U_tU_t^\top)\|_1$ is no longer an unbiased estimator of $\|U_tU_t^\top-X^*\|_F$ and is highly sensitive to the magnitude of the noise. To alleviate this issue, we propose the following alternative choice of step size:
\begin{align}\label{eq_stepsize2}
    \eta_t = \frac{\eta_0}{\|Q\|_F}\rho^t
\end{align}
where $Q\in\mathcal{Q}(U_tU_t-X^*)$, and $0<\rho<1$ is a predefined decay rate. Due to Sign-RIP, we have $\|Q\|_F\approx \varphi(U_tU_t^\top-X^*)$, which implies 
\begin{align}\label{eq_iter}
        U_{t+1}
        \approx U_t-\eta_0\rho^t \frac{\left(U_tU_t^{\top}-X^{\star}\right)U_t}{\norm{U_tU_t^{\top}-X^{\star}}_F}.
\end{align}
A closer look at~\eqref{eq_iter} reveals that, at every iteration, SubGD points to the negative gradient of $\|U_tU_t^\top-X^*\|_F^2$, while the geometrically decaying step size $\eta_t = \eta_0\rho^t$ guarantees the convergence of the algorithm.

\begin{algorithm}[tb]
   \caption{Spectral Initialization}
   \label{alg::spectral-initialization}
\begin{algorithmic}
  \STATE {\bfseries Input:} measurement matrices $\{A_i\}_{i=1}^m$, measurement vector $\mathbf{y}=[y_1,\cdots,y_m]^\top$,  an upper bound on the rank $r'$, and an initial scaling factor $\alpha$;
   \STATE {\bfseries Output:} An initialization matrix $B_0\in \R^{d\times r^{\prime}}$;
   \STATE Calculate $C=\frac{1}{m}\sum_{i=1}^{m}\sign(y_i){A_i}$, and its normalized variant $\hat{X}=C/\norm{C}_F$;
   \STATE Compute the eigenvalue decomposition $\hat{X}=V\Sigma V^{\top}$;
   \STATE Define $\Sigma_{+}^{r^\prime}$ as the top $r^{\prime}\times r^{\prime}$ sub-matrix of $\Sigma$ corresponding to $r'$ largest eigenvalues of $\hat{X}$, whose negative values are replaced by $0$;
   \STATE Set $B_0 = \alpha V \left(\Sigma_{+}^{r^\prime}\right)^{1/2}$.
\end{algorithmic}
\end{algorithm}

Inspired by this intuition, we provide our main result.

\begin{theorem}
    \label{convergence-theorem-noisy}
     Assume that $r'\geq r^* = 1$, and the measurements satisfy the Sign-RIP condition with parameters $\left(\min\{r'+1,d\},\delta\right)$, where $\delta\lesssim 1$ and $\varphi(X)$ is strictly positive and uniformly bounded. Suppose that the initialization matrix $B_0$ is chosen via Algorithm~\ref{alg::spectral-initialization}, and $U_T$ is obtained via Algorithm~\ref{algorithm}.
    Moreover, suppose that $\alpha\asymp\sqrt{\delta}/\sqrt[4]{r'}$, and the step size $\eta_t$ is chosen as~\eqref{eq_stepsize2} with $\eta_0\lesssim \delta$ and $\rho= 1-\Theta\left(\eta_0/\log\frac{1}{\alpha}\right)$. Then, after $T\asymp\log\left(\frac{r'}{\delta}\right)/\eta_0$ iterations, we have
    \begin{equation}\label{eq_error}
        \norm{U_TU_T^{\top}-X^{\star}}^2_F\lesssim \delta^2\log^2\left(\frac{r'}{\delta}\right).
    \end{equation}
\end{theorem}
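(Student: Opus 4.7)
The plan is to exploit Sign-RIP to show that each SubGD iteration closely mirrors one step of normalized gradient descent on the ideal population loss $\bar f_{\ell_2}(U) = \|UU^\top - X^*\|_F^2$, and then to carry out a signal/noise decomposition $U_t = u^* u_t + N_t$ with $u_t = {u^*}^\top U_t$ and $N_t = (I - u^*{u^*}^\top)U_t$. Writing $D_t = Q_t U_t$ with $Q_t \in \mathcal{Q}(U_tU_t^\top - X^*)$, the matrix $U_tU_t^\top - X^*$ has rank at most $r'+1$, so Sign-RIP at rank $\min\{r'+1, d\}$ yields $\|Q_t - \varphi_t(U_tU_t^\top - X^*)/\kappa_t\|_{F, r'+1} \leq \varphi_t\delta$ with $\kappa_t := \|U_tU_t^\top - X^*\|_F$ and $\varphi_t := \varphi(U_tU_t^\top - X^*)$. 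Combined with $\|MU_t\|_F \leq \|M\|_{F, r'+1}\|U_t\|$ and $\|Q_t\|_F = \varphi_t(1 + O(\delta))$, the step size $\eta_t = \eta_0\rho^t/\|Q_t\|_F$ reduces the iteration to
\[
U_{t+1} = U_t - \eta_0\rho^t\,\frac{(U_tU_t^\top - X^*)U_t}{\kappa_t} + E_t, \qquad \|E_t\|_F \lesssim \eta_0\rho^t\,\delta\,\|U_t\|.
\]

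For the initialization, applying Sign-RIP at $X^*$ gives $\|C - \varphi(X^*) u^*{u^*}^\top\|_{F, r'+1} \leq \varphi(X^*)\delta$, so $\hat X = C/\|C\|_F$ is within $O(\delta)$ of $u^*{u^*}^\top$ in the same norm. Weyl- and Davis-Kahan-type bounds give $\lambda_1(\hat X) = 1 \pm O(\delta)$ with eigenvector $v_1$ within $O(\delta)$ of $u^*$, while the remaining eigenvalues in $\Sigma_+^{r'}$ are each $O(\delta)$ with squared sum at most $\delta^2$. With $\alpha^2 \asymp \delta/\sqrt{r'}$, this yields $\|u_0\| \asymp \alpha$ and $\|N_0\|_F^2 \leq \alpha^2 \sum_{j\geq 2}\lambda_j \leq \alpha^2\sqrt{r'}\delta \lesssim \delta^2$ (via Cauchy-Schwarz). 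Substituting the decomposition into the approximate update and using ${u^*}^\top N_t = 0$, the signal and noise evolve as
\[
u_{t+1} \approx u_t\bigl(I - \tfrac{\eta_0\rho^t}{\kappa_t}\bigl[(\|u_t\|^2 - 1) I + N_t^\top N_t\bigr]\bigr),\quad N_{t+1} \approx N_t\bigl(I - \tfrac{\eta_0\rho^t}{\kappa_t}\bigl[u_t^\top u_t + N_t^\top N_t\bigr]\bigr),
\]
up to $E_t$. I would then prove by induction that along the trajectory (i) $\|u_t\| \leq 2$, (ii) $\|N_t\|_F \lesssim \delta\log(r'/\delta)$, and (iii) $\|u_t\|$ grows at least as $\alpha\prod_{s<t}(1 + c\eta_0\rho^s)$ until it crosses $\Theta(1)$. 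Since $\kappa_t = \Theta(1)$ while $\|u_t\| \leq 1$, the signal multiplier is approximately $I + \eta_0\rho^t I$, yielding geometric growth; with $\rho = 1 - \Theta(\eta_0/\log(1/\alpha))$, $\sum_{t\leq T}\rho^t \asymp \log(1/\alpha)/\eta_0$, so $\|u_T\|^2 \to \Theta(1)$ after $T \asymp \log(r'/\delta)/\eta_0$ iterations, with terminal mismatch $|\|u_T\|^2 - 1| \lesssim \delta\log(r'/\delta)$.

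The hardest step will be controlling $\|N_t\|_F$ throughout the trajectory in the presence of the per-iteration Sign-RIP error $E_t$, which is the crucial novelty relative to the noiseless $\ell_2$ setting. The bare multiplier $I - \eta_0\rho^t(u_t^\top u_t + N_t^\top N_t)/\kappa_t$ is a PSD contraction, so without $E_t$ the noise is non-increasing, and once the signal becomes macroscopic this contraction actively damps any $N_t$ component that has drifted along the $u_t$-direction. Each $E_t$, however, can inject $O(\eta_0\rho^t\delta\|U_t\|)$ of perpendicular noise; summing $\sum_{t\leq T}\eta_0\rho^t\delta\|U_t\| \lesssim \delta\log(1/\alpha) \asymp \delta\log(r'/\delta)$ matches, and essentially forces, the logarithmic factor in the noise bound. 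Closing this induction yields $\|N_t\|_F \lesssim \delta\log(r'/\delta)$ for all $t \leq T$. Finally, decomposing
\[
\|U_TU_T^\top - X^*\|_F^2 = (\|u_T\|^2 - 1)^2 + 2\|u_T N_T^\top\|_F^2 + \|N_T N_T^\top\|_F^2
\]
via the Frobenius orthogonality of the parallel/perpendicular blocks and plugging in the three invariants (using $\|u_T N_T^\top\|_F \leq \|u_T\|\|N_T\|_F$) gives the claimed $O(\delta^2\log^2(r'/\delta))$ bound.
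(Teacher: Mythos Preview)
Your proposal is correct and follows essentially the same approach as the paper: the same signal/noise decomposition (your $u_t,N_t$ are the paper's $r_t,E_t$), the same use of Sign-RIP to reduce each SubGD step to normalized gradient descent on $\bar f_{\ell_2}$ plus an $O(\eta_0\rho^t\delta\|U_t\|)$ perturbation, the same spectral-initialization bounds, the same per-step error increment $\|N_{t+1}\|_F\le\|N_t\|_F+O(\delta\eta_0\rho^t)$ summed over $T\asymp\log(r'/\delta)/\eta_0$ iterations, and the same final orthogonal block decomposition of $\|U_TU_T^\top-X^\star\|_F^2$. The paper carries out the initialization analysis by a direct eigenvalue calculation rather than Davis--Kahan, and is slightly more careful about maintaining the side condition $\|\Delta_t\|_F\gtrsim\delta$ needed for the contraction step (handling it by noting that if it fails one is already done), but these are cosmetic differences.
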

The above theorem implies that, for any $r'\geq r^* = 1$ (including $r'=d$), SubGD converges to the true low-rank solution at a {sublinear} and (nearly) {dimension-free} rate without any explicit regularization or rank constraint, provided that the measurements satisfy Sign-RIP. Moreover, our result holds under the so-called \textit{early stopping} regime, where the number of iterations of SubGD is both upper and lower bounded by problem-specific parameters. Similar requirements are also imposed in other over-parameterized problems~\citep{gunasekar2018implicit, li2018algorithmic}. Finally, we conjecture that the early stopping is an artifact of our proof technique, and it is not necessary due to the geometrically decaying step sizes. We consider the rigorous verification of this conjecture as an enticing challenge for future research.

\begin{remark}
Our convergence result is independent of the scaling function $\varphi(X)$. This is due to the special choice of the step size: roughly speaking, Sign-RIP implies that the chosen step size is proportional to $\varphi_t^{-1}$, thereby cancelling the effect of the scaling function in the dynamics of SubGD. This implies that the step sizes of our algorithm are adaptive to the corruption probability, whose effect is captured via the scaling function. To see this, recall that with Gaussian measurements, the scaling function takes the form $\varphi_t = \sqrt{\frac{2}{\pi}}(1-p)+\sqrt{\frac{2}{\pi}}p\mathbb{E}\left[e^{{-s_i^2}/({2\|U_tU_t-X^*\|_F})}\right]$. It is easy to see that, for small values of $\|U_tU_t-X^*\|_F$ (or alternatively, large values of noise), the scaling function can be well-approximated as $\varphi_t \approx \sqrt{\frac{2}{\pi}}(1-p)$. This implies that SubGD automatically takes more aggressive steps with increasing corruption probability.
\end{remark} 

Combining Theorem~\ref{convergence-theorem-noisy} and Theorem~\ref{thm_G_noisy} leads to an end-to-end sample complexity guarantee for SubGD with Gaussian measurements.

\begin{corollary}
Assume that the measurement matrices $\{A_i\}_{i=1}^m$ defining the linear operator $\mathcal{A}(\cdot)$ are symmetric with i.i.d. standard Gaussian entries, and that the noise vector $\mathbf{s}$ satisfies Assumption~\ref{assump_noise}. Moreover, suppose that  $\alpha$, $\eta_t$, and $T$ are chosen according to Theorem~\ref{convergence-theorem-noisy}. Then, SubGD satisfies the error bound~\eqref{eq_error} with an overwhelming probability, provided that $m\gtrsim \frac{dr'\log\left(\frac{1}{(1-p)\delta}\right)}{\delta^4(1-p)^4}$.
\end{corollary}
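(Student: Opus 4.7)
The statement is a direct composition of Theorem~\ref{thm_G_noisy} (which supplies Sign-RIP for Gaussian measurements under Assumption~\ref{assump_noise}) with Theorem~\ref{convergence-theorem-noisy} (which turns Sign-RIP into a convergence guarantee for SubGD). My plan is therefore to instantiate Theorem~\ref{thm_G_noisy} with rank parameter $\min\{r'+1,d\}$, verify that the resulting scaling function meets the hypotheses required by Theorem~\ref{convergence-theorem-noisy}, and then invoke the latter.

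First, I apply Theorem~\ref{thm_G_noisy} with rank parameter $r \leftarrow \min\{r'+1,d\}$ and the same $\delta$. This yields that Sign-RIP holds with parameters $(\min\{r'+1,d\},\delta)$ and the explicit scaling function
\begin{equation}
    \varphi(X) \;=\; \sqrt{\tfrac{2}{\pi}}\!\left(1 - p + p\,\mathbb{E}\!\left[e^{-s_i^2/(2\|X\|_F)}\right]\right),\nonumber
\end{equation}
with probability at least $1 - C e^{-c m \delta^4}$, provided that
\begin{equation}
    m \;\gtrsim\; \frac{d\,\min\{r'+1,d\}\left(\log\!\left(\tfrac{1}{(1-p)\delta}\right)\vee 1\right)}{\delta^4 (1-p)^4}.\nonumber
\end{equation}
Since $r'\geq 1$ implies $\min\{r'+1,d\}\leq 2r'$, the sample complexity is absorbed into the stated bound $m \gtrsim \frac{d r' \log(1/((1-p)\delta))}{\delta^4(1-p)^4}$ (up to constants), and the failure probability $C e^{-c m\delta^4}$ is exponentially small under this sample size, giving the ``overwhelming probability'' guarantee.

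Next, I verify the scaling-function hypotheses needed by Theorem~\ref{convergence-theorem-noisy}. For any $X\neq 0$ the integrand $e^{-s_i^2/(2\|X\|_F)}\in(0,1]$, so
\begin{equation}
    \sqrt{\tfrac{2}{\pi}}\,(1-p) \;\leq\; \varphi(X) \;\leq\; \sqrt{\tfrac{2}{\pi}}.\nonumber
\end{equation}
Because the noise model in Assumption~\ref{assump_noise} presupposes $p<1$ (the corrupted set has cardinality $pm<m$ in any meaningful regime), $\varphi$ is strictly positive, and it is obviously uniformly bounded above by $\sqrt{2/\pi}$. Both hypotheses of Theorem~\ref{convergence-theorem-noisy} on $\varphi$ are therefore met.

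Finally, I invoke Theorem~\ref{convergence-theorem-noisy} with the same $\delta$, the choices of $\alpha$, step size $\eta_t$, decay rate $\rho$, and iteration count $T$ specified in its statement (and inherited in the corollary). The theorem then yields the error bound~\eqref{eq_error} deterministically on the event that Sign-RIP holds, and this event has overwhelming probability by the first step. Combining these two observations completes the proof. I do not anticipate a genuine obstacle here: the only non-trivial bookkeeping is matching the rank index $\min\{r'+1,d\}$ required by Theorem~\ref{convergence-theorem-noisy} with the $r$ fed into Theorem~\ref{thm_G_noisy}, and confirming that the extra factor $(r'+1)/r' \leq 2$ does not disturb the asymptotic sample-complexity bound stated in the corollary.
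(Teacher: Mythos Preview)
Your proposal is correct and matches the paper's own argument, which simply states that the corollary follows by combining Theorem~\ref{thm_G_noisy} with Theorem~\ref{convergence-theorem-noisy}; you have merely made explicit the bookkeeping (instantiating $r\leftarrow\min\{r'+1,d\}$, checking $\varphi$ is bounded in $[\sqrt{2/\pi}(1-p),\sqrt{2/\pi}]$, and absorbing the factor $\min\{r'+1,d\}\le 2r'$) that the paper leaves implicit.
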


For any fixed $\delta$, the above corollary implies that SubGD converges to the vicinity of the true rank-1 solution, provided that the number of measurements scale as $dr'/(1-p)^4$ (modulo $\log$ factors). Note that our result holds for arbitrarily large values of $p$, provided that the number of measurements scale accordingly. This improves upon the existing result~\cite{li2020nonconvex}, which requires $p<1/2$ to guarantee the convergence of SubGD in the exact regime ($r'=r^*$).

\paragraph{Sketch of the proof for Theorem~\ref{convergence-theorem-noisy}.} Suppose that $X^*=u^*{u^*}^\top$ for $u^*\in\mathbb{R}^{d\times 1}$. Moreover, without loss of generality, we assume that $\|u^*\|=1$. Inspired by~\cite{li2018algorithmic}, we decompose the solution $U_t$ as
\begin{equation}\label{eq_decomposition}
    U_t = u^*{u^*}^\top U_t + \left(1-u^*{u^*}^\top\right)U_t := u^*r_t^\top + E_t,
\end{equation}
where $r_t = U_t^\top u^*$ is called \textit{signal term}, and $E_t = (1-u^*{u^*}^\top)U_t$ is referred to as \textit{error term}, which is the projection of $U_t$ onto the orthogonal complement of the subspace spanned by $u^*$. Evidently, we have $U_t U_t^{\top}=X^{\star}$ if and only if $\|r_t\|=1$ and $\|E_t\|_F = 0$. More generally, our next lemma shows that the error $\|U_tU_t-X^*\|_F$ can be controlled in terms of $\|E_t\|_F$ and $\|r_t\|$.
\begin{lemma}
    \label{decomposition-error-signal}
    The following inequality holds:
    \begin{equation}
        \norm{U_tU_t^{\top}\!-\!X^{\star}}_F^2\!\leq\!\left(1\!-\!\norm{r_t}^2\right)^2\!\!+2\norm{E_t}^2\!\norm{r_t}^2+\norm{E_t}_F^4.
    \end{equation}
\end{lemma}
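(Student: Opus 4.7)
The plan is a direct algebraic computation starting from the decomposition $U_t = u^*r_t^\top + E_t$. The key observation that makes everything collapse cleanly is that the error term $E_t$ lies in the orthogonal complement of $u^*$, so $E_t^\top u^* = 0$ and ${u^*}^\top E_t = 0$. First I would expand
\[
U_tU_t^\top = (u^*r_t^\top + E_t)(r_t{u^*}^\top + E_t^\top) = \|r_t\|^2 u^*{u^*}^\top + u^*r_t^\top E_t^\top + E_tr_t{u^*}^\top + E_tE_t^\top,
\]
which, after subtracting $X^* = u^*{u^*}^\top$, yields a natural three-term split
\[
U_tU_t^\top - X^* = \underbrace{(\|r_t\|^2-1)u^*{u^*}^\top}_{A} + \underbrace{u^*r_t^\top E_t^\top + E_tr_t{u^*}^\top}_{B} + \underbrace{E_tE_t^\top}_{C}.
\]

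Next I would verify that $A$, $B$, and $C$ are pairwise orthogonal in the Frobenius inner product. Every cross term contains either ${u^*}^\top E_t$ or $E_t^\top u^*$, both of which vanish by construction of $E_t$; this is the one small bookkeeping step, but entirely routine. Hence $\|U_tU_t^\top - X^*\|_F^2 = \|A\|_F^2 + \|B\|_F^2 + \|C\|_F^2$, and I just need to bound each piece.

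For $A$, since $\|u^*\| = 1$, we immediately get $\|A\|_F^2 = (1-\|r_t\|^2)^2$. For $B$, the same vanishing inner product ${u^*}^\top E_t = 0$ kills the cross term between its two summands, and each summand has squared Frobenius norm $r_t^\top E_t^\top E_t r_t = \|E_t r_t\|^2$, giving $\|B\|_F^2 = 2\|E_t r_t\|^2 \leq 2\|E_t\|^2\|r_t\|^2$. For $C$, using an SVD of $E_t$ with singular values $\sigma_i$, one has $\|E_tE_t^\top\|_F^2 = \sum_i \sigma_i^4 \leq (\sum_i \sigma_i^2)^2 = \|E_t\|_F^4$.

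Adding the three bounds gives exactly the claimed inequality. There is no genuine obstacle here — the only thing to be careful about is the orthogonality checks among $A,B,C$, which is where the $E_t^\top u^* = 0$ identity is used repeatedly; everything else is elementary manipulation of Frobenius norms.
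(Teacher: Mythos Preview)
Your proof is correct and follows essentially the same route as the paper: expand using $U_t = u^*r_t^\top + E_t$, exploit ${u^*}^\top E_t = 0$ to show all cross terms vanish, and then bound $\|E_t r_t\|^2 \le \|E_t\|^2\|r_t\|^2$ and $\|E_tE_t^\top\|_F^2 \le \|E_t\|_F^4$. The only cosmetic difference is that the paper keeps the two halves of your $B$ as separate terms (labeling four pieces rather than three), but the computations are identical.
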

Based on Lemma~\ref{decomposition-error-signal}, we provide a high-level idea of our proof technique:
\begin{enumerate}
    \item (Spectral Initialization) It is shown in Lemma~\ref{lem::spectral-init} that the proposed initialization scheme (see Algorithm~\ref{alg::spectral-initialization}) results in $\norm{r_0}=\alpha (1\pm O(\sqrt{\delta}))$ and $\norm{E_0}=  O(\alpha\sqrt{\delta})$. Therefore, the signal term dominates the error term at the beginning.
    \item It is shown in Lemma~\ref{signal-dynamics-noisy} that the signal term $\|r_t\|^2$ approaches $1$ at a geometric rate. Therefore, $1-\norm{r_t}^2$ converges to zero at a geometric rate.
    \item It is proven in Lemma~\ref{error-dynamics-noisy} that the error term $\|E_t\|_F$ grows at most {sublinearly}, and its growth rate is significantly slower than that of the signal term.
    \item  This discrepancy in the growth rates of the signal and error terms ensures that after a certain number of iterations $T$, the signal term $\|r_t\|$ is sufficiently close to 1, while the error term $\|E_t\|_F$ remains small. Combined with Lemma~\ref{decomposition-error-signal}, this establishes the convergence of SubGD with early stopping of the algorithm. 
\end{enumerate}

In particular, our main proof is based on the following three key lemmas, the proofs of which can be found in Appendix D of the supplementary material.
\begin{lemma}[Spectral Initialization]
    \label{lem::spectral-init}
    Suppose that $U_0=B_0$ is chosen by Algorithm~\ref{alg::spectral-initialization}. Then under the conditions of Theorem~\ref{convergence-theorem-noisy} , we have
\begin{equation}
    \norm{r_0}=\alpha\sqrt{\varphi_0} (1\pm O(\sqrt{\delta})), \quad \norm{E_0}=  O(\alpha\sqrt{\varphi_0}\sqrt{\delta}), \quad \norm{E_0}_F=O(\alpha\sqrt{\varphi_0}\sqrt[4]{r'}\sqrt{\delta}),
\end{equation}
where $\varphi_0=\varphi(U_0U_0^{\top}/\alpha^2-X^{\star})\in [\sqrt{2/\pi}(1-p),\sqrt{2/\pi}]$ is the initial scaling factor.
\end{lemma}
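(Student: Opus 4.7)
The plan is to apply Sign-RIP at the ground truth, extract the spectral structure of $C := \frac{1}{m}\sum_{i=1}^{m} \sign(y_i)A_i$ by matrix perturbation, and then read off the three estimates from the eigendecomposition used by Algorithm~\ref{alg::spectral-initialization}.

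First I would identify $C$ as (the negation of) an element of $\mathcal{Q}(-X^*)$: since $y_i = \inner{A_i}{X^*} + s_i$ and $\sign(-y_i) = -\sign(y_i)$, we have $-C \in \mathcal{Q}(-X^*)$. Because the scaling function $\varphi$ depends on $X$ only through $\|X\|_F$ and on the noise only through $s_i^2$, one has $\varphi(-X^*)=\varphi(X^*)$. Hence Sign-RIP at $-X^*$ with parameters $(\min\{r'+1,d\},\delta)$, together with $\|X^*\|_F = 1$, yields
\begin{equation*}
\norm{C - \mu\, u^*{u^*}^\top}_{F,\min\{r'+1,d\}} \leq \mu\delta, \qquad \mu := \varphi(X^*).
\end{equation*}
A short continuity argument reconciles $\mu$ with $\varphi_0 = \varphi(U_0U_0^\top/\alpha^2 - X^*)$, both falling in $[\sqrt{2/\pi}(1-p),\sqrt{2/\pi}]$ by the uniform boundedness assumption on $\varphi$.

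Setting $R := C - \mu\, u^*{u^*}^\top$, the bound $\|R\|\leq \|R\|_{F,r'+1}\leq\mu\delta$ combined with Weyl's inequality gives $\lambda_1(C) = \mu(1\pm O(\delta))$ and $|\lambda_i(C)|\leq \mu\delta$ for $i\geq 2$, and the Davis--Kahan $\sin\Theta$ theorem delivers $(v_1^\top u^*)^2\geq 1 - O(\delta^2)$ for the top eigenvector $v_1$ of $\hat X$. Crucially, the identity $PCP = PRP$ with $P = I - u^*{u^*}^\top$, together with non-expansiveness of projection on singular values, upgrades this to the Frobenius-scale estimate $\sum_{i=2}^{r'+1}\lambda_i(C)^2\leq (\mu\delta)^2$. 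Hence $\|C\|_F = \mu(1\pm O(\delta))$ and, after normalization, $\lambda_1^+(\hat X) = 1\pm O(\delta)$ while $\sum_{i\geq 2}\lambda_i^+(\hat X)^2 = O(\delta^2)$. The three bounds then follow by direct computation from $B_0 = \alpha V(\Sigma_+^{r'})^{1/2}$. Writing
\begin{equation*}
\norm{r_0}^2 = \alpha^2\sum_{i=1}^{r'}\lambda_i^+(\hat X)(v_i^\top u^*)^2, \qquad \norm{E_0}_F^2 = \alpha^2\sum_{i=1}^{r'}\lambda_i^+(\hat X)\bigl[1-(v_i^\top u^*)^2\bigr],
\end{equation*}
the $i=1$ term dominates $\|r_0\|^2$ and, once the normalization constant is absorbed into $\varphi_0$, gives $\|r_0\| = \alpha\sqrt{\varphi_0}(1\pm O(\sqrt{\delta}))$. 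For the operator norm, $\|E_0\|$ is maximized over directions orthogonal to $u^*$ where only the $\lambda_i^+(\hat X) = O(\delta)$ contribute, yielding $\|E_0\| = O(\alpha\sqrt{\delta})$. For the Frobenius norm, the $i=1$ term contributes $O(\alpha^2\delta^2)$, and Cauchy--Schwarz on $\sum_{i\geq 2}\lambda_i^+(\hat X)\leq \sqrt{r'-1}\cdot\sqrt{\sum_{i\geq 2}\lambda_i^+(\hat X)^2} = O(\sqrt{r'}\,\delta)$ produces the $\sqrt[4]{r'}\sqrt{\delta}$ scaling.

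The main technical obstacle is precisely the Frobenius-scale (rather than operator-scale) bound $\sum_{i\geq 2}\lambda_i(\hat X)^2 = O(\delta^2)$: had I used only the operator-norm consequence of Sign-RIP, I would have obtained a $\sqrt{r'}$ factor in $\|E_0\|_F$ rather than the claimed $\sqrt[4]{r'}$. Circumventing this loss requires using the full truncated-Frobenius form of Sign-RIP, combined with the rank-$1$ structure of $u^*{u^*}^\top$ through the identity $PCP = PRP$; once this tighter spectral bound is in place, all three claimed estimates are short algebraic consequences.
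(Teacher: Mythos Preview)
Your approach is correct in outline and closely parallels the paper's own argument (which is given in detail only for the noiseless version, Lemma~\ref{lem::initialization_scale_noseless}). Both proofs start from Sign-RIP at $X^\star$, deduce that $\hat X$ is an $O(\delta)$-perturbation of $u^\star{u^\star}^\top$, and then extract the three estimates from the eigendecomposition, with Cauchy--Schwarz producing the $\sqrt[4]{r'}$ factor in $\|E_0\|_F$. The difference is packaging: the paper expands $u^\star=\sum_i\beta_i u_i$ in the eigenbasis of $\hat X$ and computes $\|\hat X-X^\star\|_F^2=\sum_i(\sigma_i-\beta_i^2)^2+1-\sum_i\beta_i^4$ by hand, reading off $\beta_{\max}^2\geq 1-O(\delta^2)$ and $\sum_{i\geq 2}(\sigma_i'-\beta_i^2)^2=O(\delta^2)$ directly; you instead invoke Weyl and Davis--Kahan. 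Your route is slightly cleaner conceptually and makes the dependence on the truncated norm $\|\cdot\|_{F,r'+1}$ more explicit, while the paper's explicit algebra avoids any subtlety about interlacing.

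One step in your sketch needs repair. The identity $PCP=PRP$ is correct, but it does \emph{not} give $\sum_{i\geq 2}\lambda_i(C)^2\leq(\mu\delta)^2$: the eigenvalues of $PCP$ are not the tail eigenvalues of $C$ (only interlaced with them), so the inequality does not follow from non-expansiveness of projection alone. The bound you want is instead an immediate consequence of Weyl's inequality for singular values of a sum with a rank-one term: $\sigma_i(C)=\sigma_i(\mu\,u^\star{u^\star}^\top+R)\leq\sigma_2(\mu\,u^\star{u^\star}^\top)+\sigma_{i-1}(R)=\sigma_{i-1}(R)$ for $i\geq 2$, hence
\[
\sum_{i=2}^{r'+1}\sigma_i(C)^2\;\leq\;\sum_{i=1}^{r'}\sigma_i(R)^2\;\leq\;\|R\|_{F,r'+1}^2\;\leq\;(\mu\delta)^2.
\]
Since $\lambda_1(C)\approx\mu$ is also the largest in absolute value, the positive eigenvalues of $\hat X$ beyond the first are controlled by these tail singular values, and your Cauchy--Schwarz step goes through. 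With this fix, your argument is complete and essentially equivalent to the paper's.
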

Given with this lemma, we next show that the signal term grows much faster than the error term.
\begin{lemma}[Signal Dynamics]
    \label{signal-dynamics-noisy}
    Assume that the measurements satisfy the sign-RIP with parameters $\left(\min\{r'+1,d\},\delta\right)$ and a strictly positive and uniformly bounded scaling function $\varphi(X)$. Moreover,
    suppose that $\norm{E_t}_F\leq 1, \norm{r_t}\leq 2, \delta\leq \frac{1}{2}$, and the step size $\eta_t$ is chosen as~\eqref{eq_stepsize2}. Then, we have
    \begin{equation}
        \begin{aligned}
            \norm{r_{t+1}-\left(1+\frac{\eta_0\rho^t (1-\norm{r_t}^2)}{\norm{U_tU_t^{\top}-X^{\star}}_F}\right)r_t}&\leq 2\delta\eta_0 \rho^t(\norm{E_t}+\norm{r_t})+\frac{2\eta_0\rho^t }{\norm{U_tU_t^{\top}-X^{\star}}_F}\norm{E_t}^2\norm{r_t}\\&+\frac{2\delta\eta_0\rho^t}{\norm{U_tU_t^{\top}-X^{\star}}_F}(1-\norm{r_t}^2)\norm{r_t}.
        \end{aligned}
    \end{equation}
\end{lemma}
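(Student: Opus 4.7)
The plan is to analyze the SubGD update at iteration $t$. Writing the chosen subgradient as $D_t = QU_t$ with $Q\in\mathcal{Q}(X_t)$ and $X_t := U_tU_t^\top - X^*$, left-multiplying $U_{t+1} = U_t - \eta_t D_t$ by $u^{*\top}$ and using the decomposition \eqref{eq_decomposition} yields
\begin{equation*}
    r_{t+1} = (1 - \eta_t\, u^{*\top} Q u^*)\, r_t - \eta_t E_t^\top Q u^*.
\end{equation*}
My goal is therefore to replace each occurrence of $Q$ by the Sign-RIP surrogate $\varphi_t X_t/\|X_t\|_F$ and carefully collect all residuals. A direct computation using $u^{*\top} E_t = 0$ gives $u^{*\top} X_t u^* = \|r_t\|^2 - 1$ and $E_t^\top X_t u^* = E_t^\top E_t r_t$, which feed into the leading coefficient and the $\|E_t\|^2\|r_t\|$ piece on the right-hand side of the lemma.

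The technical heart of the argument has two pieces. First, applying Sign-RIP to the unit rank-$1$ test matrix $u^* u^{*\top}$ gives
\begin{equation*}
    u^{*\top} Q u^* = \varphi_t\frac{\|r_t\|^2 - 1}{\|X_t\|_F} + \varepsilon_1,\qquad |\varepsilon_1|\leq \varphi_t\delta.
\end{equation*}
Second, to handle the vector $E_t^\top Q u^* \in \mathbb{R}^{r'}$ I dualize against an arbitrary unit $y\in\mathbb{R}^{r'}$ and construct the symmetric test matrix $M_y = (u^* y^\top E_t^\top + E_t y u^{*\top})/(\sqrt{2}\,\|E_t y\|)$. Since $u^{*\top} E_t = 0$, the factors $u^*$ and $E_t y$ are orthogonal, so a direct check shows $M_y$ has rank $2$ and unit Frobenius norm, hence lies in $\mathbb{S}_{\min\{r'+1,d\}}$. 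Applying Sign-RIP to $M_y$ and then taking the supremum over unit $y$ yields
\begin{equation*}
    \left\|E_t^\top Q u^* - \frac{\varphi_t}{\|X_t\|_F} E_t^\top E_t r_t\right\| \leq \frac{\varphi_t\,\delta\,\|E_t\|}{\sqrt{2}}.
\end{equation*}
The same Sign-RIP bound applied to the unit rank-$(r'+1)$ matrix $X_t/\|X_t\|_F$ also yields $\|Q\|_F \in [\varphi_t(1-\delta),\varphi_t(1+\delta)]$ (invoking the restriction of $\|\cdot\|_F$ to the rank-$\leq r'+1$ directions that arise below), so that the algorithmic step size satisfies $\eta_t\varphi_t = \eta_0\rho^t + O(\delta\,\eta_0\rho^t)$.

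Substituting these approximations into $r_{t+1}$ and subtracting the target coefficient $(1 + \eta_0\rho^t(1-\|r_t\|^2)/\|X_t\|_F)\, r_t$, the remainder splits into three groups: (i) $(\eta_t\varphi_t - \eta_0\rho^t)\cdot\tfrac{1-\|r_t\|^2}{\|X_t\|_F}\, r_t$, bounded by $\tfrac{\delta\,\eta_0\rho^t(1-\|r_t\|^2)\|r_t\|}{\|X_t\|_F}$; (ii) $-\eta_t\varepsilon_1\, r_t$, bounded by $(1+\delta)\,\delta\,\eta_0\rho^t\|r_t\|$; and (iii) $-\eta_t E_t^\top Q u^*$, whose main part $(\eta_t\varphi_t/\|X_t\|_F)\,E_t^\top E_t r_t$ is controlled via $\|E_t^\top E_t r_t\|\leq\|E_t\|^2\|r_t\|$ and a $(1+\delta)$ factor, leaving an $O(\delta\,\eta_0\rho^t\|E_t\|)$ residual. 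Using $\delta\leq \tfrac{1}{2}$ to absorb the $(1+\delta)$ factors into the constant $2$ yields the claimed three-term bound. The main obstacle is constructing the rank-$2$ symmetric test matrix $M_y$ and pushing the duality through: the symmetrization must be chosen so that $M_y$ both lies in $\mathbb{S}_{r'+1}$ \emph{and} so that $\langle X_t, M_y\rangle$ collapses cleanly to a multiple of $y^\top E_t^\top E_t r_t$ (with the cross terms involving $u^{*\top} E_t$ vanishing). A secondary subtlety is relating the algorithmic quantity $\|Q\|_F$ to the scaling function $\varphi_t$ via Sign-RIP restricted to low-rank directions, which is justified because every test direction needed downstream has rank at most $r'+1$.
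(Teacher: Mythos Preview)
Your proposal is correct and shares the paper's two-step scheme: first establish a general signal estimate of the form
\[
\norm{r_{t+1}-\Bigl(1+\tfrac{\varphi_t\eta_t(1-\norm{r_t}^2)}{\norm{\Delta_t}_F}\Bigr)r_t}\leq \delta\eta_t\varphi_t(\norm{E_t}+\norm{r_t})+\tfrac{\varphi_t\eta_t}{\norm{\Delta_t}_F}\norm{E_t}^2\norm{r_t},
\]
then substitute the step-size bound $|\eta_t\varphi_t-\eta_0\rho^t|\leq 2\delta\eta_0\rho^t$.  The difference lies in how the general estimate is obtained.  The paper does it in one stroke: writing $r_{t+1}=r_t-\eta_tU_t^\top Q u^*$, it bounds $\|U_t^\top(Q-\varphi_t\Delta_t/\norm{\Delta_t}_F)u^*\|\leq \|U_t\|\cdot\|Q-\varphi_t\Delta_t/\norm{\Delta_t}_F\|\cdot\|u^*\|$, and the operator norm here equals $\|\cdot\|_{F,1}$, so a single application of Sign-RIP with rank-$1$ test directions suffices.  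You instead split $U_t^\top Qu^*=(u^{*\top}Qu^*)r_t+E_t^\top Qu^*$ and invoke Sign-RIP twice, the second time against the symmetrized rank-$2$ test matrix $M_y$.  This is correct but more elaborate; the paper's route is shorter and avoids the $M_y$ construction entirely.

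One small caution: your parenthetical justification for $\|Q\|_F\leq\varphi_t(1+\delta)$ via ``restriction to rank-$\leq r'+1$ directions'' does not actually work, since the algorithmic step size $\eta_t=\eta_0\rho^t/\|Q\|_F$ involves the \emph{full} Frobenius norm, not $\|Q\|_{F,r}$.  The paper's proof is equally informal on this point; in the Gaussian case the stronger full-norm bound is in fact established in the proof of Theorem~\ref{thm_G_noisy}, so the gap is only with the abstract Sign-RIP definition.
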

The above lemma shows that, when $\|r_t\|$ and $t$ are small, the signal term grows geometrically fast. On the other hand, the growth rate of the error is sublinear, as shown in the following lemma.

\begin{lemma}[Error Dynamics]
    \label{error-dynamics-noisy}
    Suppose that the conditions of Proposition~\ref{signal-dynamics-noisy} are satisfied and $\eta_0\lesssim\delta\lesssim\norm{U_tU_t^\top-X^*}_F$. Then, we have
        \begin{align}
        &\norm{E_{t+1}}_F\leq \norm{E_t}_F+10\delta\eta_0\rho^t.
        \end{align}
\end{lemma}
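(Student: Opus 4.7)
The plan is to track the Frobenius-norm recursion for $E_t = (I - u^*{u^*}^{\top}) U_t$ induced by one subgradient step, separate a ``population'' piece that is nonexpansive from a Sign-RIP residual that contributes $O(\delta \eta_0 \rho^t)$, and sum the two. Writing $D_t = Q U_t$ with $Q \in \mathcal{Q}(U_t U_t^{\top} - X^*)$, I would decompose $Q = \bar Q + \tilde Q$ where $\bar Q = \varphi_t (U_t U_t^{\top} - X^*)/\norm{U_t U_t^{\top} - X^*}_F$. Since $U_tU_t^{\top} - X^*$ has rank at most $r' + 1$, Sign-RIP at rank $\min\{r'+1, d\}$ applies and yields $\|\tilde Q\|_{F, \min\{r'+1, d\}} \leq \varphi_t \delta$. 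Left-multiplying the update $U_{t+1} = U_t - \eta_t D_t$ by $I - u^*{u^*}^{\top}$ gives
\[
E_{t+1} = E_t - \eta_t (I - u^*{u^*}^{\top})\bar Q U_t - \eta_t (I - u^*{u^*}^{\top})\tilde Q U_t.
\]

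For the population piece, the decomposition $U_t = u^* r_t^{\top} + E_t$ together with $u^{*\top} E_t = 0$ gives
\[
U_tU_t^{\top} - X^* = (\norm{r_t}^2 - 1) u^*{u^*}^{\top} + u^* r_t^{\top} E_t^{\top} + E_t r_t {u^*}^{\top} + E_t E_t^{\top}.
\]
Left-multiplication by $I - u^*{u^*}^{\top}$ kills the first two terms, and subsequent right-multiplication by $U_t$, using $E_t^{\top} u^* = 0$, collapses the rest to $E_t(r_t r_t^{\top} + E_t^{\top} E_t)$. Setting $\beta_t = \eta_t \varphi_t/\norm{U_tU_t^{\top} - X^*}_F$, the recursion becomes
\[
E_{t+1} = E_t\bigl(I - \beta_t(r_t r_t^{\top} + E_t^{\top} E_t)\bigr) - \eta_t (I - u^*{u^*}^{\top}) \tilde Q U_t.
\]
The inner bracket is $I$ minus a positive semidefinite matrix of operator norm at most $\norm{r_t}^2 + \norm{E_t}^2 \leq 5$. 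Testing $\langle Q, \bar Q/\varphi_t\rangle$ against the rank-$(r'+1)$ restricted norm bound on $\tilde Q$ also yields $\norm{Q}_F \geq \varphi_t(1-\delta)$, so $\beta_t \leq \eta_0/[(1-\delta)\norm{U_tU_t^{\top} - X^*}_F]$. The hypotheses $\eta_0 \lesssim \delta \lesssim \norm{U_tU_t^{\top} - X^*}_F$ and $\delta \leq 1/2$ then force $5\beta_t \leq 2$, making $I - \beta_t(r_t r_t^{\top} + E_t^{\top} E_t)$ have operator norm at most $1$; in particular, $\norm{E_t(I - \beta_t(r_t r_t^\top + E_t^\top E_t))}_F \leq \norm{E_t}_F$.

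The main obstacle is converting the rank-restricted bound on $\tilde Q$ into a Frobenius-norm bound on $\tilde Q U_t$. Here I would use the variational identity $\norm{\tilde Q U_t}_F = \sup_{\norm{W}_F \leq 1}\langle \tilde Q, W U_t^{\top}\rangle$. Since $W U_t^{\top}$ has rank at most $r'$ and Frobenius norm at most $\norm{U_t} \leq \norm{r_t} + \norm{E_t} \leq 3$, I obtain $\norm{\tilde Q U_t}_F \leq 3\varphi_t\delta$, and hence $\eta_t\norm{\tilde Q U_t}_F \leq 3\delta \eta_0 \rho^t \cdot \varphi_t/\norm{Q}_F \leq 6 \delta \eta_0 \rho^t$ by the lower bound on $\norm{Q}_F$. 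Combining the nonexpansive population step with this perturbation and absorbing slack into a looser constant gives $\norm{E_{t+1}}_F \leq \norm{E_t}_F + 10 \delta \eta_0 \rho^t$. The crux is precisely this rank-reducing trick: without exploiting that $U_t$ has width $r'$, the naive bound $\norm{\tilde Q U_t}_F \leq \norm{\tilde Q}_F\norm{U_t}$ would require passing from $\|\tilde Q\|_{F,r'+1}$ to $\|\tilde Q\|_F$ and incur a worst-case $\sqrt{d/r'}$ blow-up, which would destroy the nearly dimension-free error guarantee of Theorem~\ref{convergence-theorem-noisy}.
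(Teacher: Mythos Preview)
Your proof is correct and in fact somewhat cleaner than the paper's. The paper expands $\norm{E_{t+1}}_F^2$, controls the cross term $-2\eta_t\inner{E_t}{(I-u^*u^{*\top})M_tU_t}$ and the square term $\eta_t^2\norm{(I-u^*u^{*\top})M_tU_t}_F^2$ separately via Sign-RIP, then uses $\eta_0\lesssim\delta\lesssim\norm{\Delta_t}_F$ to make the negative part of the cross term absorb the offending piece of the square term, finally completing the square to get $\norm{E_{t+1}}_F^2\le(\norm{E_t}_F+10\delta\eta_0\rho^t)^2$. You instead write $E_{t+1}=E_t(I-\beta_t U_t^\top U_t)$ plus a perturbation and argue directly that the multiplier is an operator-norm contraction; this collapses the paper's two-term analysis into a single triangle inequality, at the mild cost of requiring the same step-size condition in the form $5\beta_t\le 2$. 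One small remark: your closing comment overstates the necessity of the rank-reducing trick. Since $\norm{\cdot}=\norm{\cdot}_{F,1}\le\norm{\cdot}_{F,r}$, one can equally well bound $\norm{\tilde Q U_t}_F\le\norm{\tilde Q}\,\norm{U_t}_F\le\varphi_t\delta\sqrt{5}$ without any dimension blow-up; this is essentially what the paper does when bounding $\norm{(M_t-\bar M_t)U_t}_F$. Either device works, and both give the dimension-free constant.
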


Combining the aforementioned lemmas, we establish the global convergence of SubGD for robust rank-1 matrix recovery.$\hfill\square$



\section{Conclusion}

Existing techniques for analyzing low-rank matrix recovery presume and rely on different variants of restricted isometry property (RIP). However, these notions fail in the robust settings, where a number of measurements are grossly corrupted with noise. In this work, we propose a robust restricted isometry property, called Sign-RIP, that addresses this fundamental issue. Based on Sign-RIP, we paint a full picture for the landscape of robust rank-1 matrix recovery problem, both in the exact and over-parameterized regimes. In the exact setting, we show that all the critical points of the robust matrix recovery are close to the true solution, or have small norm. In the over-parameterized regime, we show that a simple subgradient method converges to the ground truth.

Although our results on robust matrix recovery is restricted to rank-1 case, the proposed framework is general, and it paves the way towards a better understanding of the problem in more general settings. In particular, our developed guarantees for sign-RIP hold for the general rank-$r$ matrices, and hence, can be potentially used to study the global landscape of more general robust matrix recovery problems in both exact and over-parameterized regimes. 

\newpage
\bibliography{reference}
\medskip

\appendix

\section*{\centering \textsc{\huge Appendix}}
\section{Numerical Experiments}
In this section, we provide extensive numerical experiments to verify our theoretical guarantees, and to shed light on possible future directions. 

All simulations are run on a desktop computer with an Intel Core i9 3.50 GHz CPU and 128GB RAM. The reported results are for an implementation in Python.
\subsection{Relationship between dimension and measurement number}

In this experiment, we analyze the relationship between the number of measurements $m$ and dimension $d$. Our theoretical result suggests that $m\gtrsim dr^{\prime}$ is enough to ensure the convergence of SubGD. To empirically verify this, we change $d$ from $10$ to $100$ and set $r^{\prime}=d$. Moreover, we set the corruption probability to $p=0.1$. Moreover, each element of the noise is generated according to a standard Gaussian distribution. The step sizes are selected as $\eta_t=\eta_0\rho^t$, where $\eta_0=0.4$ and $\rho=0.98$. For each group of parameters, we run $5$ independent trials and plot the average log-residual for the last iteration in Figure~\ref{figure::dependence-m-d}. It can be seen that, in order to ensure the same value for the error, the number of measurements should grow almost linearly with the dimension, which is in line with our theoretical result.
\begin{figure}[ht]
\begin{center}
\centerline{\includegraphics[width=10cm]{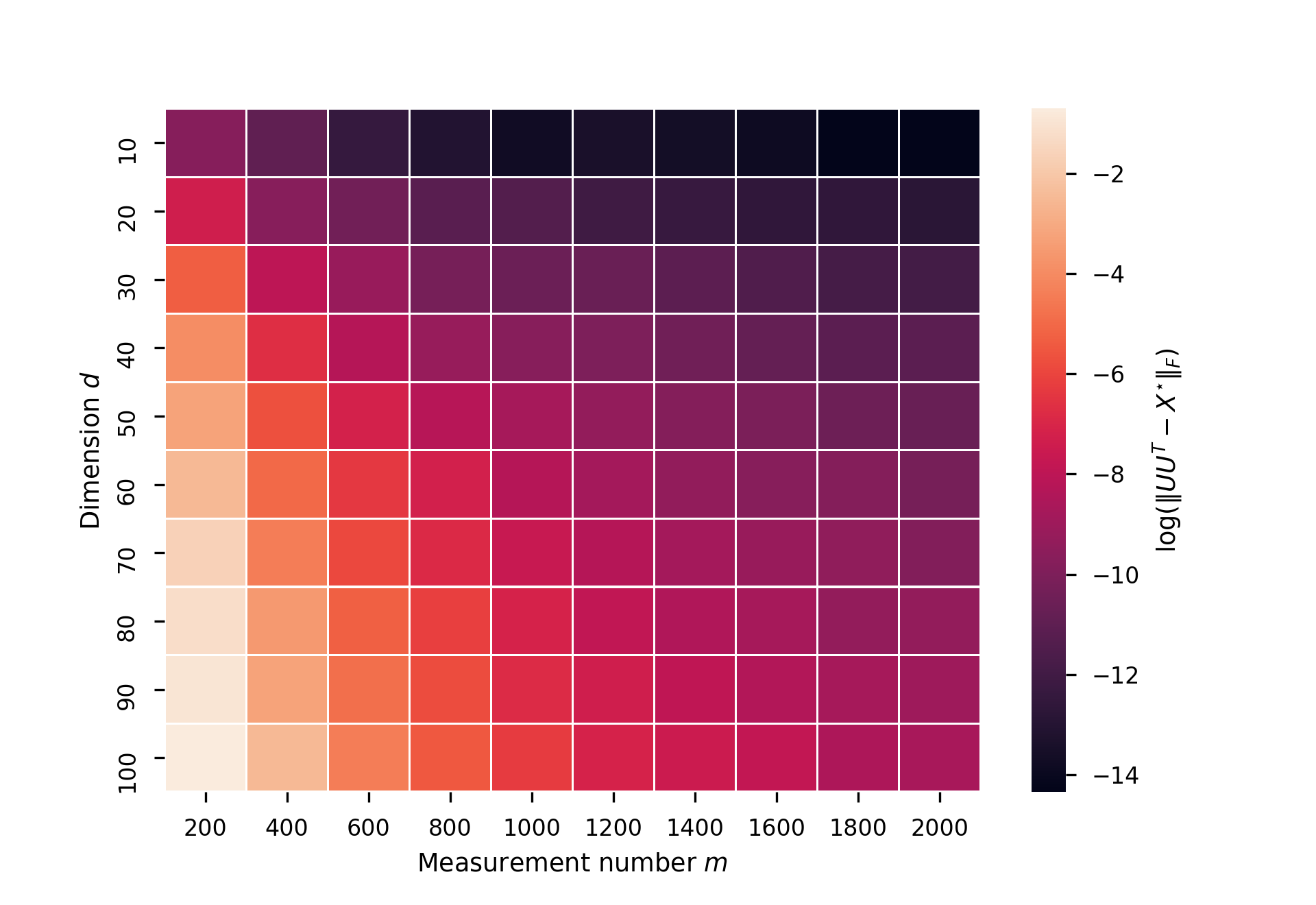}}
\caption{The error with respect to the number of measurements and dimension}
\label{figure::dependence-m-d}
\end{center}
\end{figure}

\subsection{Effect of Different Noise Magnitudes}
\label{sec::influence-of-different-noise-magnitude}
 In this experiment, we certify the robustness of SubGD against large noise values.
Our theoretical result suggests that the convergence of SubGD is independent of the noise magnitude.
To verify this, we set the dimension and the number of measurements to $d=50$ and $m=500$, respectively. Moreover, we set the corruption probability to $p=0.1$, and select each element of the noise according to a Gaussian distribution $s_i\sim\N(0,\sigma^2)$ with varying variance $\sigma^2$.
Finally, we set the step size to $\eta_t=\eta_0\rho^t$, where $\eta_0=0.25$, and $\rho=0.99$. Based on Figure~\ref{figure::dependence-noise-magnitude}, it can be seen that increasing variance slightly deteriorates the error. However, beyond a certain threshold, increasing variance does not have any effect on the error.
\begin{figure}[ht]
\begin{center}
\centerline{\includegraphics[width=10cm]{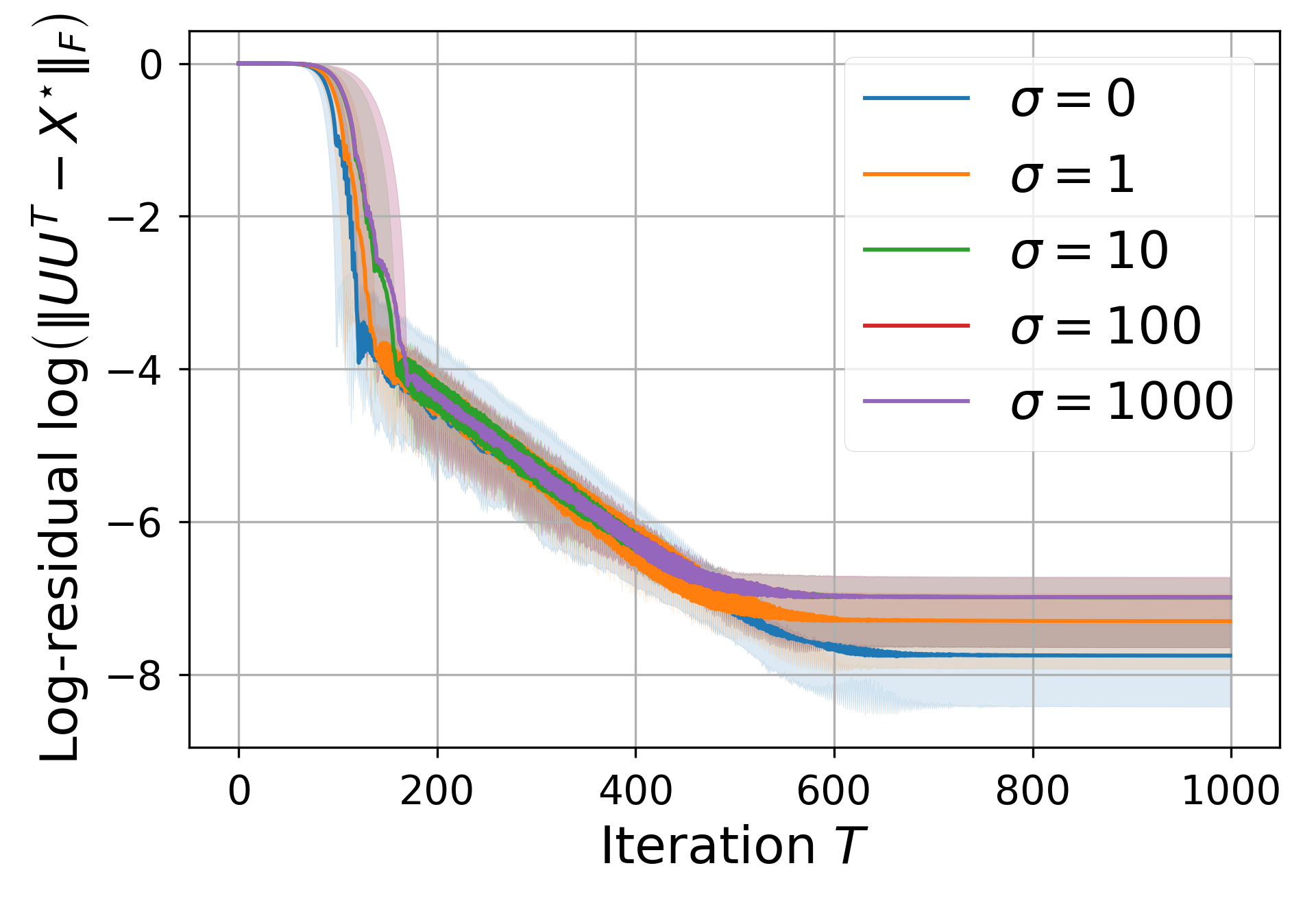}}
\caption{Effect of noise variance.}
\label{figure::dependence-noise-magnitude}
\end{center}
\end{figure}
\subsection{Effect of Different Types of Noise}
In this experiment, we study the effect of different types of noise on the performance of SubGD. In particular, we choose five different types of distribution for the noise: Gaussian, uniform, Laplace, Cauchy, and Rademacher. The experiments are designed under the same settings as Subsection~\ref{sec::influence-of-different-noise-magnitude}. Moreover, for all types of noise (except for the Cauchy distribution), we set the variance to $100$. As can be seen in Figure~\ref{figure::dependence-noise-type}, SubGD is insensitive to the particular choice of noise.
\begin{figure}[ht]
\begin{center}
\centerline{\includegraphics[width=10cm]{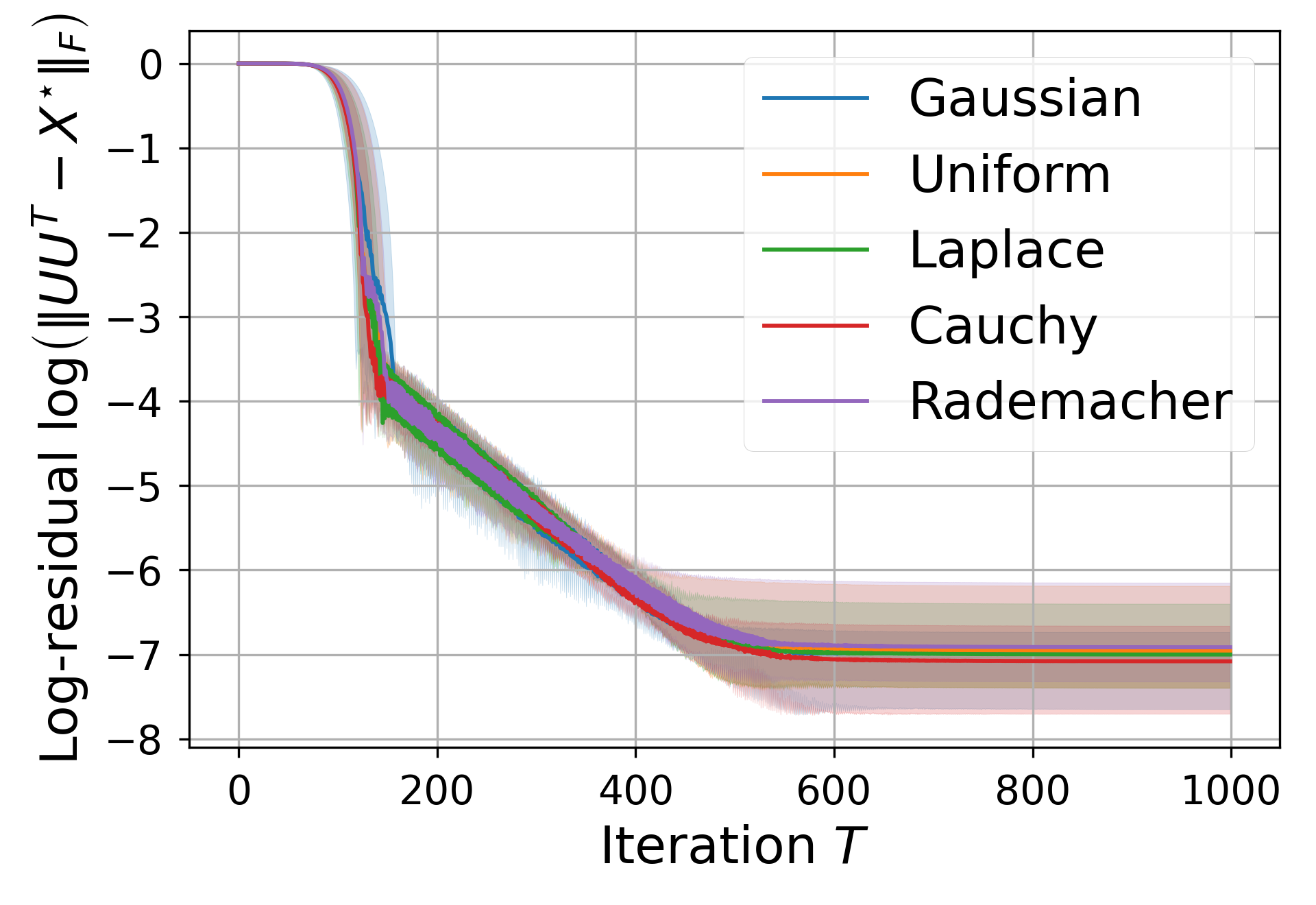}}
\caption{Effect of different types of noise.}
\label{figure::dependence-noise-type}
\end{center}
\end{figure}

\subsection{Effect of different step size regimes}
Finally, we explore the effect of different step sizes in both noiseless and noisy case under the same settings as Section~\ref{sec::influence-of-different-noise-magnitude}. In the noiseless case, we compare four different types of step sizes: 1) $\eta_t=\eta_0\rho^{t}$ with $\eta_0=0.25, \rho=0.99$; 2) $\eta_t = \frac{\eta_0}{t}$ with $\eta_0=2.0$; 3) $\eta_t=\frac{\eta_0}{\sqrt{t}}$ with $\eta_0=0.3$; and 4) our proposed choice $\eta_t=\eta_0 \frac{1}{m}\norm{\mathbf{y}-\mathcal{A}(U_tU_t^{\top})}_1$ where $\eta_0=0.25$ (see our discussion in Section 5 and Appendix D). As can be seen in Figure~\ref{fig::noiseless}, SubGD converges to the true solution with all of the aforementioned step sizes. However, our proposed step size leads to the fastest convergence rate. In the noisy case, we compare the performance of five different step sizes: 1) $\eta_t=\eta_0\rho^{t}$ with $\eta_0=0.45, \rho=0.98$; 2) $\eta_t = \frac{\eta_0}{t}$ with $\eta_0=2.0$; 3) $\eta_t=\frac{\eta_0}{\sqrt{t}}$ with $\eta_0=0.3$; 4) $\eta_t=\eta_0 \frac{1}{m}\norm{\mathbf{y}-\mathcal{A}(U_tU_t^{\top})}_1$ with $\eta_0=0.25$; 5) our proposed choice $\eta_t=\frac{\eta_0}{\norm{D_t}_F}\rho^t$, where $D_t\in\mathcal{M}(U_tU_t-X^*)$, $\eta_0=0.4$, and $\rho=0.99$. From Figure~\ref{fig::noisy}, it is evident that the step size $\eta_t=\eta_0 \frac{1}{m}\norm{\mathbf{y}-\mathcal{A}(U_tU_t^{\top})}_1$, which was the best choice in the noiseless case, does not result in the convergence of SubGD to the true solution in the noisy case. As mentioned before, this is due to the sensitivity of $\frac{1}{m}\norm{\mathbf{y}-\mathcal{A}(U_tU_t^{\top})}_1$ to outliers. Moreover, our proposed step size outperforms its vanilla counterpart. Finally, the polynomially decaying step size $\eta_t\propto\frac{1}{t}$ performs slightly better than our proposed step size. Motivated by this interesting observation, we will study the performance of SubGD with polynomially decaying step sizes in the future.

\begin{figure*}
\vskip 0.2in
\begin{center}
\subfloat[noiseless]{
{\includegraphics[width=7cm]{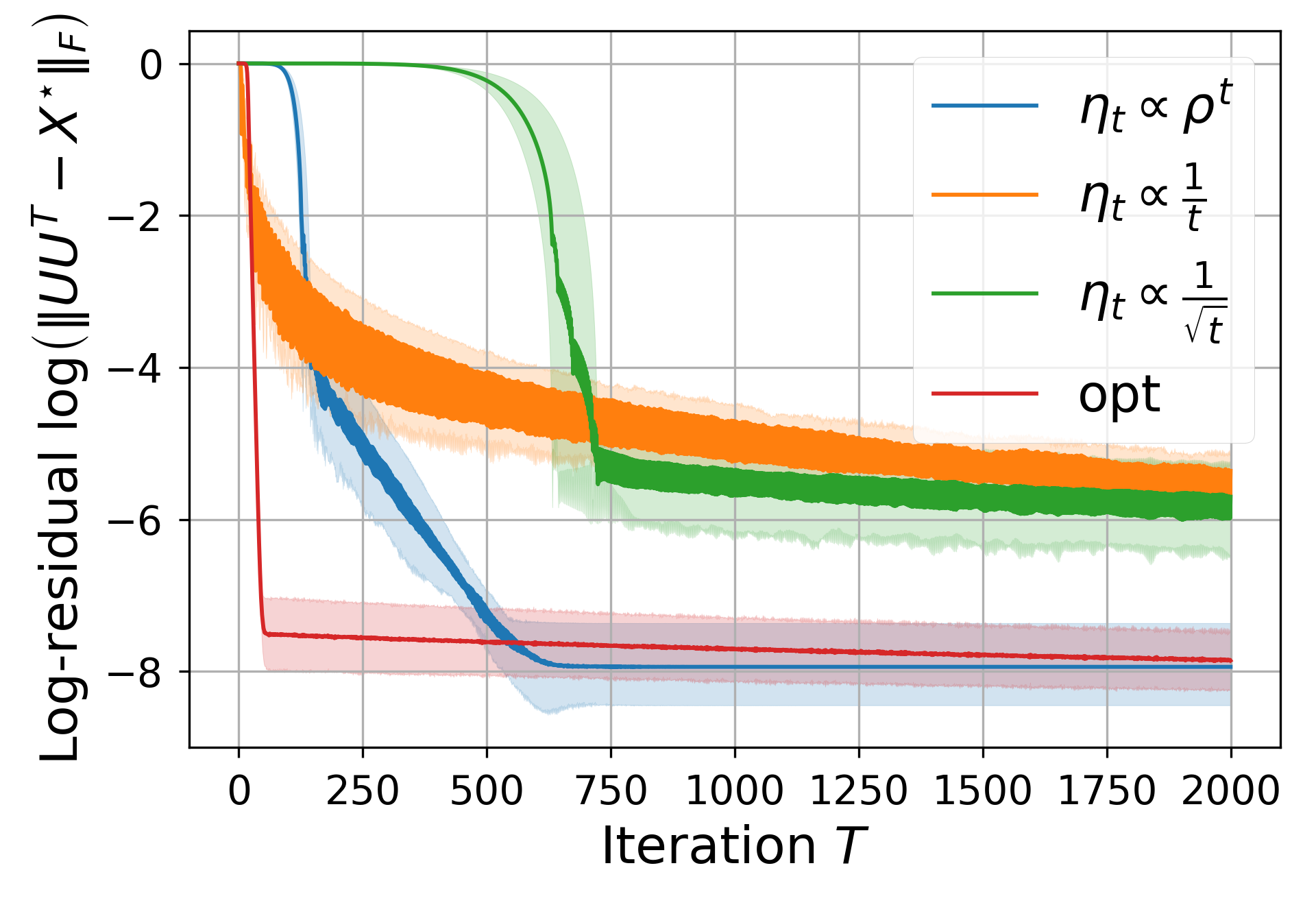}}\label{fig::noiseless}}
\subfloat[noisy]{
{\includegraphics[width=7cm]{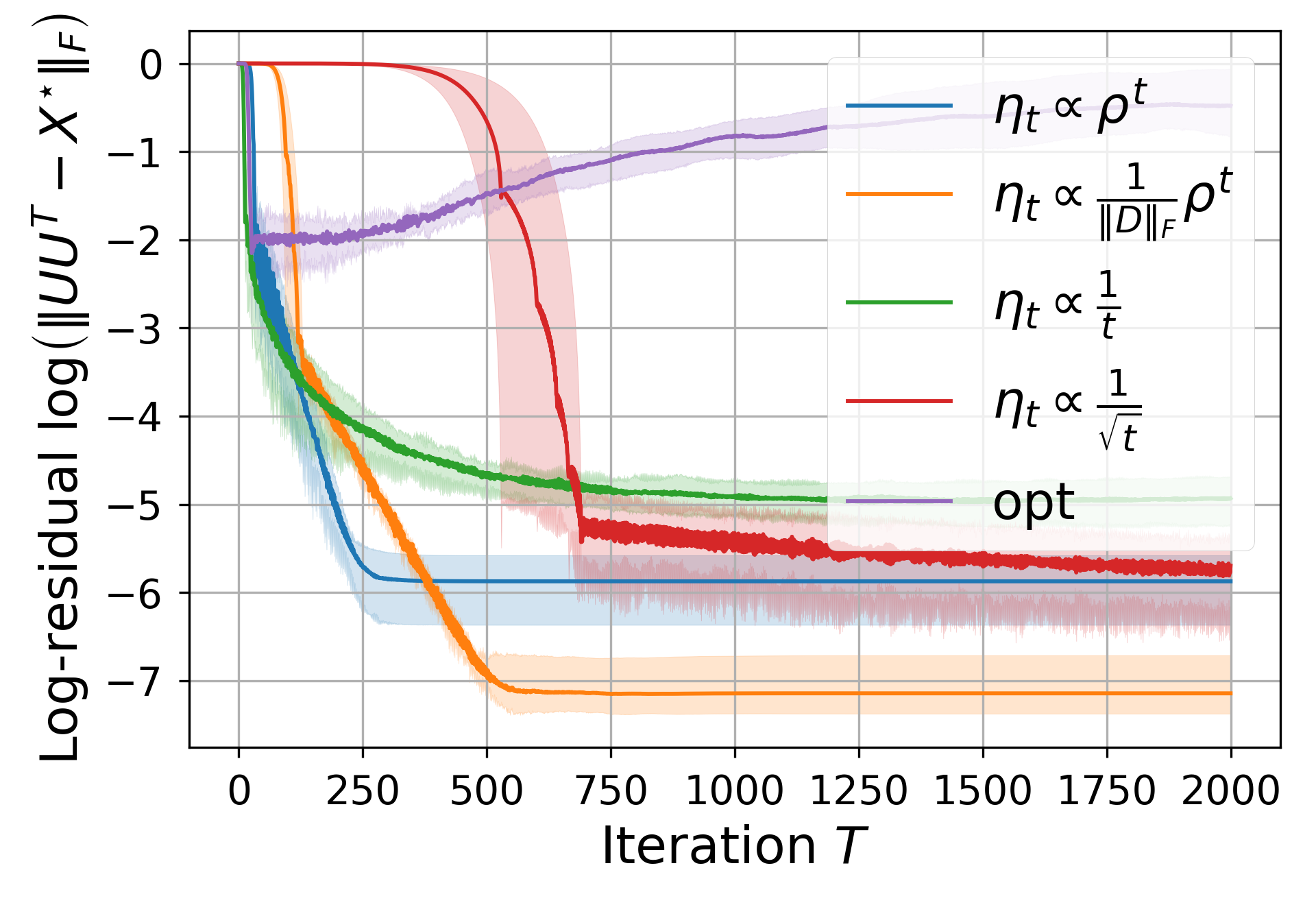}}\label{fig::noisy}}
\end{center}
\caption{Effect of different step size regimes.}
\end{figure*}

\section{Proofs for Sign-RIP}
In this section, we provide the proofs for Theorem~\ref{thm_G_noisy}. As a first step, we start with the noiseless case, and show that a weaker version of Theorem~\ref{thm_G_noisy} can be obtained directly from the so-called $\ell_1/\ell_2$-RIP condition.
\begin{lemma}[$\ell_1/\ell_2$-RIP, Proposition 1 in \citep{li2020nonconvex}]
\label{uniform-concentration-objective}
    Let $r\geq 1$ be given, suppose measurements $\{A_i\}_{i=1}^m$ have i.i.d. standard Gaussian entries with $m\gtrsim dr$. Then for any $0<\delta<\sqrt{\frac{2}{\pi}}$, there exists a universal constant $c>0$, such that with probability of at least $1-e^{-cm\delta^2}$, we have
    \begin{equation}
        \sup_{X\in \mathbb{S}_r}\left|\frac{1}{m}\sum_{i=1}^{m}|\inner{A_i}{X}|-\sqrt{\frac{2}{\pi}}\norm{X}_F\right|\leq \sqrt{\frac{2}{\pi}}\delta.
    \end{equation}
\end{lemma}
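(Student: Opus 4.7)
The plan is to establish uniform concentration of $F(X) := \frac{1}{m}\sum_{i=1}^m |\langle A_i, X\rangle|$ around its mean $\sqrt{2/\pi}\|X\|_F$ on the rank-$r$ unit sphere $\mathbb{S}_r$ via a classical net-plus-Lipschitz argument. First I would handle the pointwise estimate: for any fixed $X \in \mathbb{S}_r$, the random variables $\langle A_i, X\rangle$ are i.i.d.\ $\mathcal{N}(0, \|X\|_F^2) = \mathcal{N}(0,1)$ since $A_i$ has i.i.d.\ standard Gaussian entries (after symmetrization, the relevant inner product is still Gaussian with unit variance). Hence $|\langle A_i, X\rangle|$ are i.i.d.\ folded standard Gaussians with mean $\sqrt{2/\pi}$ and are sub-Gaussian, so a standard Hoeffding (or Bernstein) bound gives, for every $t>0$,
\begin{equation}
\Pb\!\left(\left|F(X)-\sqrt{2/\pi}\right|\geq t\right)\leq 2\exp(-c_0 m t^2).\nonumber
\end{equation}

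Next I would discretize $\mathbb{S}_r$. Any rank-$r$ matrix $X$ admits a factorization $X = U V^\top$ with $U,V\in\R^{d\times r}$, so a standard volumetric argument yields an $\varepsilon$-net $\mathcal{N}_\varepsilon$ of $\mathbb{S}_r$ (in Frobenius distance) of cardinality at most $(C/\varepsilon)^{3dr}$. Taking $t=\sqrt{2/\pi}\,\delta/2$ in the pointwise bound and applying a union bound over $\mathcal{N}_\varepsilon$ shows that
\begin{equation}
\sup_{X\in\mathcal{N}_\varepsilon}\left|F(X)-\sqrt{2/\pi}\,\|X\|_F\right|\leq \tfrac{1}{2}\sqrt{2/\pi}\,\delta\nonumber
\end{equation}
holds with probability at least $1-2(C/\varepsilon)^{3dr}\exp(-c_0 m \delta^2)$. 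Choosing $\varepsilon$ of a small universal order (independent of $\delta$, thanks to $m\gtrsim dr$), this exceptional probability can be absorbed into $e^{-cm\delta^2}$ for a suitable constant $c>0$.

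It remains to pass from the net to all of $\mathbb{S}_r$. For any $X,Y\in \mathbb{S}_r$, since $X-Y$ has rank at most $2r$, we have $\|X-Y\|_*\leq \sqrt{2r}\,\|X-Y\|_F$, and therefore
\begin{equation}
|F(X)-F(Y)|\leq \frac{1}{m}\sum_{i=1}^m|\langle A_i, X-Y\rangle|\leq \left(\max_{i}\norm{A_i}\right)\sqrt{2r}\,\|X-Y\|_F.\nonumber
\end{equation}
A standard Gaussian matrix tail bound gives $\max_i \|A_i\|\lesssim \sqrt{d}+\sqrt{\log m}$ with probability at least $1-e^{-cd}$. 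Thus $F$ is Lipschitz with constant $\lesssim \sqrt{dr}$ on $\mathbb{S}_r$, and choosing $\varepsilon\asymp \delta/\sqrt{dr}$ (which only changes the $\log(1/\varepsilon)$ factor absorbed into the constant $c$ when $m\gtrsim dr$) lets the discretization error be bounded by $\tfrac12\sqrt{2/\pi}\,\delta$. Combining the net bound with this Lipschitz extension yields the claim.

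The main technical obstacle is balancing the net granularity $\varepsilon$ against the Lipschitz constant: because $F$ is only Lipschitz with a constant that grows with $d$ and $r$, one must choose $\varepsilon$ small enough to control the extension error, while simultaneously keeping the log-cardinality $dr\log(1/\varepsilon)$ of the net dominated by $m\delta^2$. The crucial point is that both grow only linearly in $dr$, so the hypothesis $m\gtrsim dr$ together with a concentration exponent $m\delta^2$ suffices to close the argument. The remaining bookkeeping is routine, using that the events controlling $\max_i \|A_i\|$ and controlling $F$ on $\mathcal{N}_\varepsilon$ both hold with probability at least $1-e^{-cm\delta^2}$ (after adjusting $c$).
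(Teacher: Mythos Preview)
The paper does not prove this lemma; it is quoted from \cite{li2020nonconvex}. Your outline follows the standard route used there---pointwise sub-Gaussian concentration of $|\langle A_i,X\rangle|$, a covering of $\mathbb{S}_r$, and a Lipschitz extension from the net to all of $\mathbb{S}_r$---so the overall strategy is correct.

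Where your write-up becomes lossy is the Lipschitz step. Controlling $|F(X)-F(Y)|$ through $\max_i\|A_i\|\cdot\|X-Y\|_*$ yields a Lipschitz constant of order $\sqrt{dr}$ and forces $\varepsilon\asymp\delta/\sqrt{dr}$. The net then has log-cardinality $\asymp dr\log(\sqrt{dr}/\delta)$, so the union bound needs $m\delta^2\gtrsim dr\log(dr/\delta)$; this cannot simply be ``absorbed into the constant $c$'' under the bare hypothesis $m\gtrsim dr$ when $\delta$ is small. Moreover, the event $\{\max_i\|A_i\|\lesssim\sqrt d\}$ has probability $1-e^{-cd}$, which is not of the form $1-e^{-cm\delta^2}$. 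Your paragraph in fact invokes two incompatible choices of $\varepsilon$: first ``a small universal order,'' then later $\varepsilon\asymp\delta/\sqrt{dr}$.

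The standard fix, which is what \cite{li2020nonconvex} does (and what this paper uses implicitly when bounding term~(B) in the proof of Theorem~\ref{thm_G_noisy}), avoids $\max_i\|A_i\|$ altogether. Since $X-\bar X$ has rank at most $2r$,
\[
|F(X)-F(\bar X)|\le \frac{1}{m}\sum_{i=1}^m|\langle A_i,X-\bar X\rangle|\le \|X-\bar X\|_F\cdot\sup_{Z\in\mathbb{S}_{2r}} F(Z),
\]
and one bounds $\sup_{Z\in\mathbb{S}_{2r}}F(Z)$ \emph{self-referentially} by the very quantity being established, obtaining an $O(1)$ Lipschitz constant on the good event. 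With this, $\varepsilon$ can be taken of order $\delta$ (or even a fixed constant), and the union bound matches the stated probability $1-e^{-cm\delta^2}$ under $m\gtrsim dr$.
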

Based on the above $\ell_1/\ell_2$-RIP condition, we proceed to prove a weaker version of Theorem~\ref{thm_G_noisy}.
\begin{proposition}
Assume that the measurement matrices $\{A_i\}_{i=1}^m$ have i.i.d. standard Gaussian entries, and that $\mathbf{s} = 0$. Then, the sign-RIP condition holds with parameters $(r,\delta), \delta\leq \sqrt{\frac{2}{\pi}}$ and a constant scaling function $\varphi(X) = \sqrt{\frac{2}{\pi}}$ with probability of at least $1-Ce^{-cm\delta^4}$, provided that $m\gtrsim {d^2}$.
\end{proposition}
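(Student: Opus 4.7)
The strategy is to reduce Sign-RIP (with constant scaling $\varphi \equiv \sqrt{2/\pi}$) to the $\ell_1/\ell_2$-RIP of Lemma~\ref{uniform-concentration-objective}, applied to rank-$2r$ matrices with a \emph{tightened} parameter $\delta' := c\delta^2$. Since $m\gtrsim d^2\geq d\cdot 2r$ and $\delta'^2 = c^2\delta^4$, the lemma gives with probability $1-\exp(-cm\delta^4)$ the uniform two-sided bound
\begin{equation*}
    \sqrt{\tfrac{2}{\pi}}(1-\delta')\|Z\|_F \;\leq\; \tfrac{1}{m}\!\sum_{i=1}^m |\langle A_i,Z\rangle| \;\leq\; \sqrt{\tfrac{2}{\pi}}(1+\delta')\|Z\|_F
\end{equation*}
over all rank-$2r$ matrices $Z$. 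In the noiseless setting $\mathcal{Q}(X)$ is almost surely the singleton $Q(X) = \tfrac{1}{m}\sum_i \sign(\langle A_i,X\rangle)\,A_i$ (since $\langle A_i,X\rangle\neq 0$ a.s.), and both $Q(X)$ and the target $\sqrt{2/\pi}\,X/\|X\|_F$ are invariant under positive scaling of $X$, so we may assume $X\in\mathbb{S}_r$. The goal is then to show $\max_{Y\in\mathbb{S}_r}\langle Q(X)-\sqrt{2/\pi}\,X,\,Y\rangle \leq \sqrt{2/\pi}\,\delta$ uniformly in $X$.

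\textbf{Polarization via convexity.} For $X,Y\in\mathbb{S}_r$ and $t\in\mathbb{R}$, the function $g(t) := \tfrac{1}{m}\sum_i|\langle A_i,X+tY\rangle|$ is convex in $t$, and since the events $\langle A_i,X\rangle = 0$ occur with probability zero, $g$ is differentiable at $t=0$ with $g'(0)=\langle Q(X),Y\rangle$. Convexity yields the supporting-line inequality $g(t) \geq g(0) + t\,\langle Q(X),Y\rangle$ for every $t$. Crucially, $X+tY$ has rank at most $2r$, so applying the $\ell_1/\ell_2$-RIP upper bound to $g(t)$ and the lower bound to $g(0)$ gives, for $t>0$,
\begin{equation*}
    t\,\langle Q(X),Y\rangle \;\leq\; g(t)-g(0) \;\leq\; \sqrt{\tfrac{2}{\pi}}\Bigl[(1+\delta')\|X+tY\|_F - (1-\delta')\Bigr].
\end{equation*}
Using $\|X+tY\|_F=\sqrt{1+2t\langle X,Y\rangle+t^2}\leq 1+t\langle X,Y\rangle + t^2/2$ and $|\langle X,Y\rangle|\leq 1$, the right-hand side is $\sqrt{2/\pi}\bigl[\,2\delta' + t\langle X,Y\rangle + t^2/2 + O(\delta' t)\bigr]$. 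Dividing by $t$ and optimizing over the perturbation scale (choose $t\asymp\sqrt{\delta'}$, which by AM-GM balances $2\delta'/t$ against $t/2$) produces
\begin{equation*}
    \langle Q(X),Y\rangle - \sqrt{\tfrac{2}{\pi}}\,\langle X,Y\rangle \;\leq\; C\sqrt{\tfrac{2}{\pi}}\sqrt{\delta'} \;=\; \sqrt{\tfrac{2}{\pi}}\,\delta,
\end{equation*}
after choosing $c$ in $\delta' = c\delta^2$ appropriately.

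\textbf{Conclusion and main obstacle.} The bound above is uniform in $Y\in\mathbb{S}_r$ because $\delta'$ in the $\ell_1/\ell_2$-RIP is itself uniform over all rank-$2r$ perturbations. Since $-Y\in\mathbb{S}_r$ whenever $Y\in\mathbb{S}_r$, replacing $Y$ by $-Y$ yields the matching lower bound, and taking the supremum over $Y\in\mathbb{S}_r$ gives exactly $\bigl\|Q(X) - \sqrt{2/\pi}\,X/\|X\|_F\bigr\|_{F,r} \leq \sqrt{2/\pi}\,\delta$, which is the required Sign-RIP inequality. The only delicate step is the third paragraph: $\ell_1/\ell_2$-RIP controls function values, not directional derivatives, and a naive substitution of a small $t$ lets the multiplicative RIP error $\sqrt{2/\pi}\delta'\cdot 2/t$ diverge as $t\to 0$. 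The quadratic optimization, which pits this RIP-induced $\delta'/t$ term against the $t/2$ curvature of the Frobenius norm along the direction $Y$, is what degrades the exponent from $\delta'$ to $\sqrt{\delta'}$ and, by running $\ell_1/\ell_2$-RIP at the smaller parameter $\delta^2$, yields the $e^{-cm\delta^4}$ probability stated in the proposition.
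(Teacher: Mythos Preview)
Your argument is correct and genuinely different from the paper's. The paper expands the squared distance
\[
\|D(X)-\sqrt{2/\pi}\,X\|_F^2=\|D(X)\|_F^2-2\sqrt{2/\pi}\,\tfrac{1}{m}\sum_i|\langle A_i,X\rangle|+\tfrac{2}{\pi},
\]
bounds the cross term via $\ell_1/\ell_2$-RIP on $\mathbb{S}_r$, and bounds $\|D(X)\|_F$ by the variational identity $\|D(X)\|_F=\sup_{Y\in\mathbb{S}}\langle D(X),Y\rangle\le\sup_{Y\in\mathbb{S}}\tfrac{1}{m}\sum_i|\langle A_i,Y\rangle|$, which requires $\ell_1/\ell_2$-RIP over the \emph{full} sphere $\mathbb{S}=\mathbb{S}_d$ and is precisely why the proposition needs $m\gtrsim d^2$. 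Your polarization route instead controls the directional quantity $\langle Q(X),Y\rangle$ for $Y\in\mathbb{S}_r$ via the convexity of $t\mapsto\tfrac{1}{m}\sum_i|\langle A_i,X+tY\rangle|$, so you only ever invoke $\ell_1/\ell_2$-RIP on rank-$2r$ matrices. This is a real improvement: your argument actually proves the result under the weaker hypothesis $m\gtrsim dr$, recovering the sample complexity of Theorem~\ref{thm_G_noisy} in the noiseless case without the heavier machinery. Both approaches incur the $\delta\to\delta^2$ loss for the same structural reason (passing from a function-value guarantee to a first-order one), yielding the $e^{-cm\delta^4}$ probability.

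One small caveat: your reduction to the singleton case via ``$\langle A_i,X\rangle\neq 0$ a.s.'' is valid for a \emph{fixed} $X$ but not uniformly---for any realization of $\{A_i\}$ there are rank-$r$ matrices $X$ with $\langle A_i,X\rangle=0$, and Sign-RIP demands the bound for every $X$ and every $Q\in\mathcal{Q}(X)$. The fix is immediate: for any $Q\in\mathcal{Q}(X)$, the scalar $\langle Q,Y\rangle$ is a subgradient (not necessarily the derivative) of $g$ at $0$, and the supporting-line inequality $g(t)\ge g(0)+t\langle Q,Y\rangle$ holds for all subgradients of a convex function. The rest of your estimate then goes through unchanged.
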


\begin{proof}
    Without loss of generality, we assume that $\norm{X}_F=1$. For any given $0<\delta\leq \sqrt{\frac{2}{\pi}}$ and any $D(X)\in \frac{1}{m}\sum_{i=1}^{m}\sign(\inner{A_i}{X})A_i$, we have
    \begin{equation}
        \begin{aligned}
            &\sup_{X\in \mathbb{S}_r}\norm{D(X)-\sqrt{\frac{2}{\pi}}X}_F^2\\
            \stackrel{(a)}{=}&\sup_{X\in \mathbb{S}_r}\norm{D(X)}_F^2 -\sqrt{\frac{8}{\pi}}\frac{1}{m}\sum_{i=1}^{m}|\inner{A_i}{X}|+\frac{2}{\pi}\\
            \leq&\sup_{X\in \mathbb{S}_r}\norm{D(X)}_F^2-\sqrt{\frac{8}{\pi}}\inf_{X\in \mathbb{S}_r}\frac{1}{m}\sum_{i=1}^{m}|\inner{A_i}{X}|+\frac{2}{\pi}\\
            \stackrel{(b)}{\leq}&\sup_{X\in \mathbb{S}_r}\norm{D(X)}_F^2+\sqrt{\frac{8}{\pi}}\sqrt{\frac{2}{\pi}}\delta -\sqrt{\frac{8}{\pi}}\sqrt{\frac{2}{\pi}}+ \frac{2}{\pi}\\
            =&\sup_{X\in \mathbb{S}_r}\norm{D(X)}_F^2+\frac{4}{\pi}\delta -\frac{2}{\pi}
        \end{aligned}
    \end{equation}
    with probability of at least $1-Ce^{-cm\delta^2}$. Here, (a) follows from $\inner{D(X)}{X}=\frac{1}{m}\sum_{i=1}^{m}|\inner{A_i}{X}|$, and (b) uses $\ell_1/\ell_2$-RIP condition from Lemma \ref{uniform-concentration-objective}. Then, recall that for an arbitrary $M\in\R^{d\times d}$, we have
    \begin{equation}
        \norm{M}_F=\sup_{Y\in\mathbb{S}}\inner{M}{Y}.
    \end{equation}
    This implies
    \begin{equation}
        \begin{aligned}
            \sup_{X\in \mathbb{S}_r}\norm{D(X)}_F^2&\leq \sup_{X, Y\in\mathbb{S}}\left(\inner{D(X)}{Y}\right)^2\\
            &\stackrel{(c)}{=}\sup_{Y\in \mathbb{S}}\left(\frac{1}{m}\sum_{i=1}^{m} |\inner{A_i}{Y}|\right)^2\\
            &\stackrel{(d)}{\leq} \frac{2}{\pi}(1+\delta)^2\\
            &\stackrel{(e)}{\leq} \frac{2}{\pi}+\frac{6}{\pi}\delta
        \end{aligned}
    \end{equation}
    with high probability $1-Ce^{-cm\delta^2}$. Here, (c) uses the fact that for a fixed $Y$, the supremum over $X$ is taken exactly at $X=Y$, (d) uses the $\ell_1/\ell_2$-RIP condition, and (e) uses the assumption $\delta\leq 1$.

    Combining these two parts, we obtain
    \begin{equation}
        \sup_{X\in \mathbb{S}}\norm{D(X)-\sqrt{\frac{2}{\pi}}X}_F^2\leq \frac{10}{\pi}\delta
    \end{equation}
    with probability of at least $1-Ce^{-c^{\prime}m\delta^2}$. Therefore, upon choosing $\delta^{\prime}=\sqrt{5\delta}$, we obtain \begin{equation}
        \sup_{X\in\mathbb{S}}\norm{D(X)-\sqrt{\frac{2}{\pi}}X}_F=\sup_{X,Y\in\mathbb{S}} \inner{D(X)-\sqrt{\frac{2}{\pi}}X}{Y}\leq \sqrt{\frac{2}{\pi}}\delta^{\prime}
    \end{equation}
    with probability of at least $1-Ce^{-cm\delta^{\prime 4}}$.
\end{proof}
Despite its simplicity, the above analysis has two major drawbacks: (1) its sample complexity scales with $d^2$, as opposed to $dr$ in Theorem~\ref{thm_G_noisy}, (2) it is not clear how to extend this analysis to the noisy case. To address these issues and prove Theorem~\ref{thm_G_noisy}, we need a more in-depth analysis of the sign-RIP condition.
First, we provide an intermediate lemma.

\begin{lemma}\label{l_expect}
    Assume that the measurement matrices $\{A_i\}_{i=1}^m$ defining the linear operator $\mathcal{A}(\cdot)$ have symmetric Gaussian entries, and that the noise vector $\mathbf{s}$ satisfies Assumption~\ref{assump_noise}. Then, for every $D\in\mathcal{M}(X)$, we have
    \begin{equation}
        \mathbb{E}[D] = \sqrt{\frac{2}{\pi}}\left(1-p+p\mathbb{E}\left[e^{{-s_i^2}/({2\|X\|_F})}\right]\right)\frac{X}{\|X\|_F}
    \end{equation}
    where the expectation is taken with respect to both $\mathbf{s}$ and $\{A_i\}_{i=1}^m$.
\end{lemma}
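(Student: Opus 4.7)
The plan is to reduce the computation to a single summand and then exploit Gaussianity. By linearity of expectation and the iid-ness of the pairs $(A_i, s_i)$, we have $\mathbb{E}[D] = \mathbb{E}[\sign(\langle A, X\rangle - s) A]$ for a single pair $(A, s)$. I would first condition on $s$, compute the matrix-valued conditional expectation, and then average over $s$ via Assumption~\ref{assump_noise}.

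For the inner (conditional on $s$) step, I would argue that $M(s) := \mathbb{E}[\sign(\langle A, X\rangle - s) A \mid s]$ is a scalar multiple of $X$ in the space of symmetric matrices. This is because one may decompose $A = c_X \langle A, X\rangle X + A^\perp$, with $c_X$ chosen so that the decomposition is orthogonal in the covariance inner product of the symmetric Gaussian ensemble; then $A^\perp$ is independent of $\langle A, X\rangle$ and has zero mean. Since $\sign(\langle A, X\rangle - s)$ depends on $A$ only through $\langle A, X\rangle$, the $A^\perp$ contribution integrates to zero, reducing the matrix-valued expectation to computing the scalar $\mathbb{E}[\sign(g - s)\, g]$ with $g = \langle A, X\rangle \sim \mathcal{N}(0, \sigma_X^2)$, where $\sigma_X^2 = \Var(\langle A, X\rangle)$ is proportional to $\|X\|_F^2$ under the symmetric Gaussian convention.

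This scalar integral is routine: splitting the integration at $s$ and using that $-\sigma_X^2\phi$ is an antiderivative of $z\mapsto z\phi(z)$, one obtains $\mathbb{E}[\sign(g-s) g] = \sigma_X\sqrt{2/\pi}\, e^{-s^2/(2\sigma_X^2)}$. Putting this back into the expression for $M(s)$ and canceling the $\sigma_X$-factors against $\|X\|_F$ gives $M(s) = \sqrt{2/\pi}\, e^{-s^2/(2\sigma_X^2)} \cdot X/\|X\|_F$. Finally, averaging over $s_1$, which is $0$ with probability $1-p$ and drawn from $\mathbb{P}$ with probability $p$, produces the factor $1 - p + p\,\mathbb{E}[e^{-s_i^2/(2\sigma_X^2)}]$ and yields the stated formula.

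I do not foresee any substantial obstacle. The only delicate point is the set-valued nature of $\sign$ at zero, which is benign: $\{\langle A, X\rangle = s\}$ has Lebesgue-Gaussian measure zero for $X \neq 0$, so every measurable selection $D \in \mathcal{M}(X)$ yields the same expectation, and no smoothing of $\sign$ is required. The Gaussian decomposition sidesteps any need to differentiate $\sign$, so the only bookkeeping is the covariance computation that identifies $c_X$ and $\sigma_X^2$ for the symmetric Gaussian ensemble, and a single application of Fubini's theorem (justified by boundedness of $\sign$ and finite first moments of $A$) to exchange the $A$- and $s_1$-expectations in the final averaging step.
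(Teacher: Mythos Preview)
Your proposal is correct and follows essentially the same line as the paper: reduce to a single summand, use Gaussian orthogonality to isolate the $\langle A,X\rangle$-component, evaluate the scalar integral $\mathbb{E}[\sign(g\pm s)\,g]=\sigma_X\sqrt{2/\pi}\,e^{-s^2/(2\sigma_X^2)}$, and then average over the noise. The only cosmetic difference is that the paper pairs against a test matrix $Y$ and invokes the bivariate conditioning $\langle A,Y\rangle\mid\langle A,X\rangle\sim\mathcal{N}(\rho\langle A,X\rangle,1-\rho^2)$, whereas you decompose $A$ itself into its $X$-parallel and independent orthogonal parts; these are equivalent expressions of the same Gaussian projection.
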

\begin{proof}
    To prove this lemma, it is enough to show that for any $X,Y\in\R^{d\times d}$, we have
    \begin{equation}
        \E\left[\sign\left(s+\inner{A}{X}\right)\inner{A}{Y}\right]=\sqrt{\frac{2}{\pi}}\E\left[e^{-s^2/2\norm{X}_F^2}\right]\inner{\frac{X}{\norm{X}_F}}{Y}.
    \end{equation}
    Without loss of generality, we assume that $\norm{X}_F=\norm{Y}_F=1$ and both $X$ and $Y$ are symmetric. Moreover, for the symmetric Gaussian matrix $A_i$, its off-diagonal elements are from $\N(0,1/2)$, and its diagonal elements are from $\N(0,1)$. Now let us denote $u:=\inner{A}{X},v:=\inner{A}{Y},\rho:=\Cov(u,v)=\inner{X}{Y}$. Then
    \begin{equation}
        \begin{aligned}
            \E\left[\sign\left(s+\inner{A}{X}\right)\inner{A}{Y}\right]&=\E\left[\sign\left(s+u\right)v\right]\\
            &\stackrel{(a)}{=}\rho \E\left[\sign(u+s)u\right]\\
            &=\rho\E_{s}\left[ \int_{-s}^{\infty}u \frac{1}{\sqrt{2\pi}}e^{-u^2/2}du-\int_{-\infty}^{-s}u \frac{1}{\sqrt{2\pi}}e^{-u^2/2}du\right]\\
            &=\rho\E_{s}\left[ \int_{-s}^{\infty}u \frac{1}{\sqrt{2\pi}}e^{-u^2/2}du+\int_{s}^{\infty}u \frac{1}{\sqrt{2\pi}}e^{-u^2/2}du\right]\\
            &=2\rho \E_{s}\left[\int_{|s_i|}^{\infty} u \frac{1}{\sqrt{2\pi}}e^{-u^2/2}du\right]\\
            &=\sqrt{\frac{2}{\pi}}\inner{X}{Y}\E_s\left[\int_{|s|}^{\infty}d\left(-e^{-u^2/2}\right)\right]\\
            &=\sqrt{\frac{2}{\pi}}\inner{X}{Y}\E_s\left[e^{-s^2/2}\right].
        \end{aligned}
    \end{equation}
    Here (a) uses the fact that $v|u, s\sim \N(\rho u, 1-\rho^2)$ since $s\perp \!\!\! \perp u, v$. This together with the variational form of the Frobenius norm implies
    \begin{equation}
        \E\left[\sign(s+\inner{A}{X})A\right]=\sqrt{\frac{2}{\pi}}\E\left[e^{-s^2/2\norm{X}_F^2}\right]\frac{X}{\norm{X}_F},
    \end{equation}
    for any $X\in\R^{d\times d}$. On the other hand, it is easy to verify that $\E\left[\sign\left(\inner{A}{X}\right)A\right] = \sqrt{\frac{2}{\pi}}\frac{X}{\norm{X}_F}$. The proof is completed by noting that the size of the noisy measurements is equal to $pm$.
\end{proof}

\subsection{Proof of Theorem~\ref{thm_G_noisy}}

For the sake of simplicity, we assume that $pm$ is an integer. Moreover, we abuse the notation and use $\sign(\cdot)$ as a regular function taking an arbitrary value $\sign(0)\in[-1, 1]$. To prove Theorem~\ref{thm_G_noisy}, we first present an intermediate lemma, which holds for any fixed $Y, Y \in\mathbb{S}$.

\begin{lemma}\label{l_subG}
    There exists a universal constant $c$, for any $\delta>0$, we have
    \begin{equation}
        \mathbb{P}\left(\left|\frac{1}{m}\sum_{i=1}^{m}\sign(\inner{A_i}{X}+s_i)\inner{A_i}{Y}-\sqrt{\frac{2}{\pi}}\left(1-p+p\E\left[e^{-s_i^2/2}\right]\right)\inner{X}{Y}\right|\geq \delta\right)\leq 2e^{-cm\delta^2}.
    \end{equation}
    \label{lem:single-concentration}
\end{lemma}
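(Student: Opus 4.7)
My plan is to prove Lemma~\ref{l_subG} as a standard application of sub-Gaussian concentration to a sum of i.i.d. bounded-moment random variables, after identifying the mean via Lemma~\ref{l_expect}. I will work under the Bernoulli noise model described in the Remark following Assumption~\ref{assump_noise}, which the paper states is equivalent for these purposes and has the convenient feature that the pairs $(A_i, s_i)$ are i.i.d. across $i$.

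The first step is to observe that for fixed $X, Y \in \mathbb{S}$ (so $\|X\|_F = \|Y\|_F = 1$), the random variables
\[
Z_i \;:=\; \sign(\inner{A_i}{X}+s_i)\inner{A_i}{Y}, \qquad i = 1, \dots, m,
\]
are i.i.d. Since $|\sign(\cdot)| \le 1$, we have $|Z_i| \le |\inner{A_i}{Y}|$, and $\inner{A_i}{Y}$ is a centered Gaussian random variable with variance $O(\|Y\|_F^2) = O(1)$ (recalling that $A_i$ is symmetric with independent Gaussian entries). Hence the sub-Gaussian norm satisfies $\|Z_i\|_{\psi_2} \le \|\inner{A_i}{Y}\|_{\psi_2} = O(1)$, uniformly in $i$.

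The second step is to compute $\mathbb{E}[Z_i]$. Applied with the unit-norm $X$, the derivation inside the proof of Lemma~\ref{l_expect} yields
\[
\mathbb{E}[Z_i] \;=\; \sqrt{\tfrac{2}{\pi}}\Bigl(1 - p + p\,\mathbb{E}\bigl[e^{-s_i^2/2}\bigr]\Bigr)\inner{X}{Y},
\]
which is exactly the centering used in the statement of Lemma~\ref{l_subG}. The third and final step is to invoke the Hoeffding inequality for i.i.d. sub-Gaussian random variables (see e.g. Vershynin, Prop.~2.6.1) applied to $\frac{1}{m}\sum_{i=1}^m (Z_i - \mathbb{E}[Z_i])$; this yields the tail bound $2 \exp(-cm\delta^2)$ with a universal constant $c$ that absorbs the sub-Gaussian norm of $Z_i$.

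I do not expect a genuine obstacle in this proof: the argument is entirely routine once the mean has been identified via Lemma~\ref{l_expect}. The only point that requires a small amount of care is verifying that the $Z_i$ are genuinely i.i.d., which is why I prefer the Bernoulli noise model over the fixed-cardinality model of Assumption~\ref{assump_noise}; under the latter the $s_i$ are only exchangeable, and one would need either to condition on the random set $\mathcal{S}$ and apply Hoeffding separately to the two groups, or to invoke a concentration inequality that tolerates the slight negative dependence introduced by sampling without replacement. Either route gives the same bound up to constants.
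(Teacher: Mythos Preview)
Your proposal is correct and follows essentially the same route as the paper: bound the sub-Gaussian norm of each summand by that of $\inner{A_i}{Y}$, identify the mean via Lemma~\ref{l_expect}, and apply Hoeffding-type concentration. The only cosmetic difference is that the paper works directly under the fixed-cardinality model and splits the sum over $i\in\mathcal{S}$ and $i\notin\mathcal{S}$ (the alternative you mention in your last paragraph), whereas you take the Bernoulli model as primary; both yield the same bound.
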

\begin{proof}[Proof of Lemma \ref{lem:single-concentration}]
\begin{sloppypar}
    We first show that $\sign(\inner{A_i}{X}+s_i)\inner{A_i}{Y}$ is a sub-Gaussian random variable. First notice that $\inner{A_i}{Y}\sim \N(0,1)$ since $\norm{Y}_F=1$. 
    Moreover, notice that  $\norm{\sign(\inner{A_i}{X}+s_i)\inner{A_i}{Y}}_{\ell^{2k}}\leq \norm{\inner{A_i}{Y}}_{\ell^{2k}}$ for $\forall k\in \mathbb{N}_+$, where $\|M\|_{\ell^{2k}}$ is defined as $\left(\E\left[|M|^p\right]\right)^{1/p}$. Therefore, based on equivalent definition of sub-Gaussian random variables (see Definition~\ref{def-sub-gaussian}), we obtain that $\sign(\inner{A_i}{X}+s_i)\inner{A_i}{Y}$ is also $O(1)$-sub-Gaussian. Moreover, according to the proof of Lemma~\ref{l_expect}, we have $\E\left[\frac{1}{m}\sum_{i\in S}\sign(\inner{A_i}{X})\inner{A_i}{Y}\right]=\sqrt{\frac{2}{\pi}}(1-p)\inner{X}{Y}$ and $\E\left[\frac{1}{m}\sum_{i\in S}\sign(\inner{A_i}{X}+s_i)\inner{A_i}{Y}\right]=\sqrt{\frac{2}{\pi}}p\E\left[e^{-s_i^2/2}\right]\inner{X}{Y}$.
    This together with the standard concentration bound on sub-Gaussian random variables leads to
\end{sloppypar}
    \begin{equation}
        \mathbb{P}\left(\left|\frac{1}{m}\sum_{i\notin S}\sign(\inner{A_i}{X})\inner{A_i}{Y}-\sqrt{\frac{2}{\pi}}\left(1-p\right)\inner{X}{Y}\right|\geq \frac{1}{2}\delta\right)\leq 2e^{-cm\delta^2/(1-p)},
    \end{equation}
    \begin{equation}
        \mathbb{P}\left(\left|\frac{1}{m}\sum_{i\in S}\sign(\inner{A_i}{X}+s_i)\inner{A_i}{Y}-\sqrt{\frac{2}{\pi}}p\E\left[e^{-s_i^2/2}\right]\inner{X}{Y}\right|\geq \frac{1}{2}\delta\right)\leq 2e^{-cm\delta^2/p}.
    \end{equation}
    which implies
    \begin{equation}
        \begin{aligned}
             &\mathbb{P}\left(\left|\frac{1}{m}\sum_{i=1}^{m}\sign(\inner{A_i}{X}+s_i)\inner{A_i}{Y}-\sqrt{\frac{2}{\pi}}\left(1-p+p\E\left[e^{-s_i^2/2}\right]\right)\inner{X}{Y}\right|\geq \delta\right)\\\leq& 4e^{-cm\delta^2\min\left\{\frac{1}{p},\frac{1}{1-p}\right\}}\leq 2e^{-c^{\prime}m\delta^2}.
        \end{aligned}
    \end{equation}
\end{proof}

Consider an $\epsilon$-covering $\mathbb{S}_{\epsilon, r}\subseteq\mathbb{S}_r$ with a property that for every $X\in \mathbb{S}_r$, there exists $\bar{X}\in \mathbb{S}_{\epsilon, r}$ that satisfies $\|X-\bar{X}\|_F\leq\epsilon$. According to Lemma~\ref{covering}, there exists an $\epsilon$-covering that satisfies $|\mathbb{S}_{\epsilon,r}|\leq \left(\frac{9}{\epsilon}\right)^{(2d+1)r}$.
For any $\bar{X}\in \mathbb{S}_{\epsilon,r}$, define $B_r(\bar{X},\epsilon)=\{X\in \mathbb{S}_r:\norm{X-\bar{X}}_F\leq \epsilon\}$. Then, for any $\bar{X},\bar{Y}$ and $X,Y\in B_r(\bar{X},\epsilon)\times B_r(\bar{Y},\epsilon)$, we have
\begin{equation}
\label{2-epsilon}
    \left|\inner{X}{Y}-\inner{\bar{X}}{\bar{Y}}\right|\leq \left|\inner{X-\bar{X}}{\bar{Y}}\right|+\left|\inner{X}{Y-\bar{Y}}\right|\leq 2\epsilon.
\end{equation}
Based on the defined $\epsilon$-covering, one can write
\begin{equation}
    \begin{aligned}
        &\sup_{X,Y\in\mathbb{S}_r}\left|\frac{1}{m}\sum_{i=1}^{m}\sign(\inner{A_i}{X}+s_i)\inner{A_i}{Y}-\sqrt{\frac{2}{\pi}}\left(1-p+p\E\left[e^{-s_i^2/2}\right]\right)\inner{X}{Y}\right|\\
        =&\sup_{\bar{X},\bar{Y}\in \mathbb{S}_{\epsilon,r}}\sup_{\substack{X\in B_r(\bar{X},\epsilon)\\Y\in B_r(\bar{Y},\epsilon)}}\left|\frac{1}{m}\sum_{i=1}^{m}\sign(\inner{A_i}{X}+s_i)\inner{A_i}{Y}-\sqrt{\frac{2}{\pi}}\left(1-p+p\E\left[e^{-s_i^2/2}\right]\right)\inner{X}{Y}\right|\\
        \leq&\underbrace{\sup_{\bar{X},\bar{Y}\in \mathbb{S}_{\epsilon,r}}\left|\frac{1}{m}\sum_{i=1}^{m}\sign(\inner{A_i}{\bar{X}}+s_i)\inner{A_i}{\bar{Y}}-\sqrt{\frac{2}{\pi}}\left(1-p+p\E\left[e^{-s_i^2/2}\right]\right)\inner{\bar{X}}{\bar{Y}}\right|}_{{\rm (A)}}\\
        +&\underbrace{\sup_{\bar{X},\bar{Y}\in \mathbb{S}_{\epsilon,r}}\sup_{Y\in B_r(\bar{Y},\epsilon)}\left|\frac{1}{m}\sum_{i=1}^{m}\sign(\inner{A_i}{\bar{X}}+s_i)\inner{A_i}{Y}-\sign(\inner{A_i}{\bar{X}}+s_i)\inner{A_i}{\bar{Y}}\right|}_{{\rm(B)}}\\
        +&\underbrace{\sup_{\bar{X}\in \mathbb{S}_{\epsilon,r},Y\in\mathbb{S}_r}\sup_{\substack{X\in B(\bar{X},\epsilon)}}\left|\frac{1}{m}\sum_{i=1}^{m}\sign(\inner{A_i}{\bar{X}}+s_i)\inner{A_i}{Y}-\sign(\inner{A_i}{X}+s_i)\inner{A_i}{Y}\right|}_{{\rm(C)}}\\
        +&\underbrace{\sup_{\bar{X},\bar{Y}\in \mathbb{S}_{\epsilon,r}}\sup_{\substack{X\in B_r(\bar{X},\epsilon)\\Y\in B_r(\bar{Y},\epsilon)}}\sqrt{\frac{2}{\pi}}\left(1-p+p\E\left[e^{-s_i^2/2}\right]\right)\left|\inner{X}{Y}-\inner{\bar{X}}{\bar{Y}}\right|}_{\leq \sqrt{\frac{8}{\pi}}\epsilon \text{ by }  \eqref{2-epsilon}}.
    \end{aligned}
\end{equation}
We control the first three terms separately. Based on a union bound and Lemma \ref{lem:single-concentration}, we have
\begin{equation}
    {\rm(A)}\leq \delta_1 \quad \text{with probability of at least } 1-2\left|\mathbb{S}_{\epsilon,r}\right|^2e^{-cm\delta_1^2}.
\end{equation}

Moreover, one can write
\begin{equation}
    \begin{aligned}
        {\rm(B)}&\leq \sup_{\bar{Y}\in \mathbb{S}_{\epsilon,r}}\sup_{\substack{Y\in B_r(\bar{Y},\epsilon)}}\frac{1}{m}\sum_{i=1}^{m}|\inner{A_i}{Y-\bar{Y}}|\\
        &\stackrel{(a)}{\leq} \epsilon\sup_{Z\in \mathbb{S}_{2r}}\frac{1}{m}\sum_{i=1}^{m}|\inner{A_i}{Z}|\\
        &\leq \sqrt{\frac{2}{\pi}}\epsilon(1+\delta_2)
    \end{aligned}
\end{equation}
with probability of at least $1-Ce^{c_1dr\log\frac{1}{\delta_2}-c_2m\delta_2^2}$. Here we used $\ell_1/\ell_2$ RIP condition from Lemma \ref{uniform-concentration-objective}, and the fact for $X,Y$ with ranks at most $r$, we have $\rank(X-Y)\leq \rank(X)+\rank(Y)\leq 2r$.
Next, we provide an upper bound for (C). First by Cauchy-Schwartz inequality, we have
\begin{equation}
    {\rm(C)}\leq \sup_{\bar{X}\in \mathbb{S}_{\epsilon,r}}\sup_{\substack{X\in B_r(\bar{X},\epsilon)}}\left(\underbrace{\frac{1}{m}\sum_{i=1}^{m}\left(\sign(\inner{A_i}{\bar{X}}+s_i)-\sign(\inner{A_i}{X}+s_i)\right)^2}_{{\rm(C1)}}\right)^{\frac{1}{2}}\sup_{Y\in\mathbb{S}_r}\left(\frac{1}{m}\sum_{i=1}^{m}\inner{A_i}{Y}^2\right)^{\frac{1}{2}}.
\end{equation}

The second term in the above inequality can be readily controlled via $\ell_2$-RIP (see Definition~\ref{def::l2-RIP}):
\begin{equation}
    \sup_{Y\in\mathbb{S}_r}\frac{1}{m}\sum_{i=1}^{m}\inner{A_i}{Y}^2\leq 1 + \delta_3
\end{equation}
which holds with probability of at least $1-Ce^{c_1dr\log\frac{1}{\delta_3}-c_2m\delta_3^2}$ for any $0<\delta_3<1$. For the remaining part (C1), first note that if $|\inner{A_i}{X-\bar{X}}|\leq |\inner{A_i}{\bar{X}+s_i}|$, then $\sign(\inner{A_i}{\bar{X}}+s_i)=\sign(\inner{A_i}{X}+s_i)$. This leads to

\begin{equation}
    \begin{aligned}
        \sup_{\bar{X}\in \mathbb{S}_{\epsilon,r}}\sup_{\substack{X\in B_r(\bar{X},\epsilon)}} {\rm(C1)}&\leq \sup_{\bar{X},X}\frac{4}{m}\sum_{i=1}^{m}\mathbbm{1}\left(|\inner{A_i}{X-\bar{X}}|\geq |\inner{A_i}{\bar{X}}+s_i|\right)\\
        &\stackrel{(a)}{\leq}\sup_{X,\bar{X}}\frac{4}{m}\sum_{i=1}^{m}\mathbbm{1}\left(|\inner{A_i}{X-\bar{X}}|\geq t\right)+\mathbbm{1}\left(|\inner{A_i}{\bar{X}}+s_i|\leq t\right)\\
        &\leq \sup_{Z\in \epsilon \mathbb{S}_{2r}}\frac{4}{m}\sum_{i=1}^{m}\mathbbm{1}\left(|\inner{A_i}{Z}|\geq t\right) + \sup_{\bar{X}}\frac{4}{m}\sum_{i=1}^{m} \mathbbm{1}\left(|\inner{A_i}{\bar{X}}+s_i|\leq t\right)\\& \stackrel{(b)}{\leq} \sup_{Z\in \epsilon \mathbb{S}_{2r}}\frac{4}{m}\sum_{i=1}^{m}\mathbbm{1}\left(|\inner{A_i}{Z}|\geq t\right) + 4\E\left[\mathbbm{1}\left(|\inner{A_i}{\bar{X}}+s_i|\leq t\right)\right]+\delta_4\\
        &\stackrel{(c)}{\leq} \sup_{Z\in \epsilon \mathbb{S}_{2r}}\frac{4}{m}\sum_{i=1}^{m}\mathbbm{1}\left(|\inner{A_i}{Z}|\geq t\right) + 4t+\delta_4
    \end{aligned}
\end{equation}
which holds with probability of at least $1-C|\mathbb{S}_{\epsilon,r}|e^{-cm\delta_4^2}$, where $t>0$ is an arbitrary scalar. Here, in (a) we use a simple fact that for two arbitrary random variables $A,B$ and a scalar $t\in \R$, the event $\{A\geq B\}$ is included in $\{A\geq t\}\cup \{B\leq t\}$. Moreover, in (b) we use a union bound and Hoeffding's inequality. Finally, in (c) we use the anti-concentration inequality conditioned on $s_i$.
For the first term in the above inequality, we have the following lemma.

\begin{lemma}\label{l_telegrand}
    We have
    \begin{equation}
        \E\left[\sup_{Z\in \epsilon \mathbb{S}_{2r}}\frac{4}{m}\sum_{i=1}^{m}\mathbbm{1}\left(|\inner{A_i}{Z}|\geq t\right)-\mathbbm{P}\left(|\inner{A_i}{Z}|\geq t\right)\right]\lesssim e^{-t^2/4\epsilon^2}\sqrt{\frac{dr}{m}}\vee \frac{dr}{m},
    \end{equation}
    moreover, for fixed $0<\delta<1$, we have the following tail bound 
    \begin{equation}
        \mathbb{P}\left(\left|\sup_{Z\in \epsilon \mathbb{S}_{2r}}\frac{4}{m}\sum_{i=1}^{m}\mathbbm{1}\left(|\inner{A_i}{Z}|\geq t\right)-\E\left[\sup_{Z\in \epsilon \mathbb{S}_{2r}}\frac{4}{m}\sum_{i=1}^{m}\mathbbm{1}\left(|\inner{A_i}{Z}|\geq t\right)\right]\right|>\delta\right)\leq 2e^{-cm\delta^2}.
    \end{equation}
\end{lemma}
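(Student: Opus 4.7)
The plan is to read the claim as a Talagrand-type statement for the bounded empirical process indexed by the indicator class
\begin{equation}
\mathcal{F}_t := \bigl\{ f_Z(A) = \mathbbm{1}\bigl(|\inner{A}{Z}| \geq t\bigr) \;:\; Z \in \epsilon\mathbb{S}_{2r} \bigr\},
\end{equation}
which has envelope $1$. The crucial quantitative input is the worst-case second moment: for symmetric Gaussian $A$ and any $Z\in\epsilon\mathbb{S}_{2r}$, the scalar $\inner{A}{Z}$ is centered Gaussian of variance $\Theta(\epsilon^2)$, so by the standard Gaussian tail
\begin{equation}
\sigma_t^2 := \sup_{Z\in\epsilon\mathbb{S}_{2r}} \mathbb{E}[f_Z(A)^2] = \sup_Z \mathbb{P}\bigl(|\inner{A}{Z}|\geq t\bigr) \lesssim e^{-t^2/(2\epsilon^2)}.
\end{equation}
This is the variance that drives the $e^{-t^2/4\epsilon^2}$ factor in the final bound, since the corresponding standard deviation is $\sigma_t\lesssim e^{-t^2/4\epsilon^2}$.

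For the expectation bound, I would symmetrize and control the Rademacher average of $\mathcal{F}_t$ by Dudley's entropy integral against the $L^2(P_n)$-covering numbers. Because the index set $\epsilon\mathbb{S}_{2r}$ is a scaled rank-$2r$ sphere of intrinsic dimension $O(dr)$ (Lemma~\ref{covering}), the class $\mathcal{F}_t$ embeds into a VC-subgraph class of VC dimension $v\lesssim dr$. Invoking the Koltchinskii / Gin\'e--Guillou local Rademacher bound for a VC class with envelope one and variance at most $\sigma_t^2$ yields
\begin{equation}
\mathbb{E}\sup_{f\in\mathcal{F}_t}\bigl|P_n f - Pf\bigr| \;\lesssim\; \sigma_t\sqrt{\frac{dr}{m}} \;\vee\; \frac{dr}{m},
\end{equation}
(absorbing the logarithmic factor in $1/\sigma_t$), which is the displayed expectation bound after inserting $\sigma_t\lesssim e^{-t^2/4\epsilon^2}$.

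For the concentration part, I would apply the Talagrand--Bousquet inequality for the supremum of a bounded empirical process. Since each $f_Z$ is $\{0,1\}$-valued with variance $\leq \sigma_t^2\leq 1$, Bousquet's inequality gives, for any $\delta\in(0,1)$,
\begin{equation}
\mathbb{P}\!\left(\Bigl|\sup_{f\in\mathcal{F}_t}|P_n f - Pf| - \mathbb{E}\sup_{f\in\mathcal{F}_t}|P_n f - Pf|\Bigr| > \delta\right) \;\leq\; 2\exp\!\left(-\frac{c\, m\delta^2}{\sigma_t^2 + \mathbb{E}\sup_f|P_nf-Pf| + \delta}\right),
\end{equation}
and for $\delta\in(0,1)$ the denominator is $O(1)$, producing the claimed $2e^{-cm\delta^2}$ tail.

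The main obstacle is the first step, namely certifying that the effective complexity of $\mathcal{F}_t$ is $O(dr)$ rather than the ambient $O(d^2)$. The cleanest route is to parameterize symmetric rank-$2r$ matrices as $UU^\top - VV^\top$ with $U,V\in\mathbb{R}^{d\times 2r}$, observe that the defining inequality $|\inner{A}{UU^\top - VV^\top}|\geq t$ is polynomial of bounded degree in the $O(dr)$ parameters, and invoke the Warren--Milnor bound for polynomial-threshold VC classes. A secondary technical point is that indicators are not Lipschitz in $Z$, so a naive $\varepsilon$-net/Lipschitz chaining fails; this is handled exactly as in the proof sketch of Theorem~\ref{thm_G_noisy}, by bounding a disagreement event $\{f_Z \neq f_{Z'}\}$ through the Gaussian anti-concentration $\mathbb{P}(\bigl||\inner{A}{Z}|-t\bigr|\leq u)\lesssim u/\epsilon$ whenever $\|Z-Z'\|_F\leq u$, which interpolates between the covering number and the oscillation of the process and closes the argument.
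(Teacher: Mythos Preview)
Your approach is correct and coincides with the paper's: the paper simply cites Theorem~8.5 (Talagrand concentration) and Theorem~8.7 (local empirical-process bound) of \citep{sen2018gentle} after bounding $\sigma^2=\sup_Z\Var(\mathbbm{1}(|\inner{A}{Z}|>t))\leq 2e^{-t^2/2\epsilon^2}$, which is exactly the Bousquet and Koltchinskii/Gin\'e--Guillou machinery you invoke, and your Warren--Milnor argument for $O(dr)$ VC dimension supplies a justification the paper leaves entirely implicit. One remark: your final paragraph on anti-concentration and disagreement sets is unnecessary here, since once the VC bound is in hand the uniform $L^2(P_n)$-entropy follows from the Sauer--Haussler lemma without any Lipschitz or chaining hedge.
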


\begin{proof}
    The tail bound directly follows from Theorem 8.5 in \citep{sen2018gentle}. Here we only give the proof sketch for the expectation bound. To apply Theorem 8.7 in \citep{sen2018gentle}, we only need to upper bound
    \begin{equation}
        \sigma^2:=\sup_{Z\in \epsilon \mathbb{S}_{2r}}\Var(\mathbbm{1}(|\inner{A}{Z}|>t)).
    \end{equation}
    Note that
\begin{equation}
    \begin{aligned}
        \sigma^2 &\leq \sup_{Z\in \epsilon \mathbb{S}_{2r}}\Var(\mathbbm{1}(|\inner{A}{Z}|>t))\\
        &\leq \sup_{Z\in \epsilon \mathbb{S}_{2r}} \E[\mathbbm{1}(|\inner{A}{Z}|>t)]\\
        &\leq \sup_{W\in \mathbb{S}}\mathbb{P}(|\inner{A}{W}|>t/\epsilon)\\
        &{\leq} 2e^{-t^2/2\epsilon^2},
    \end{aligned}
\end{equation}
where in the last inequality, we used the tail bound for Gaussian random variables.
Therefore, by Theorem 8.7 in \citep{sen2018gentle}, we have
\begin{equation}
    \begin{aligned}
        \E\left[\sup_{Z\in \epsilon \mathbb{S}_{2r}}\frac{4}{m}\sum_{i=1}^{m}\mathbbm{1}\left(|\inner{A_i}{Z}|\geq t\right)-\mathbbm{P}\left(|\inner{A_i}{Z}|\geq t\right)\right]&\lesssim \sigma\sqrt{\frac{dr}{m}\log\frac{1}{\sigma}}\vee \frac{dr}{m}\log\frac{1}{\sigma}\\&\lesssim e^{-t^2/4\epsilon^2}\sqrt{\frac{dr}{m}}\vee \frac{dr}{m}.
    \end{aligned}
\end{equation}
This completes the proof.
\end{proof}

Based on Lemma~\ref{l_telegrand}, we have
\begin{equation}
    \sup_{\bar{X},X} {\rm(C1)}\lesssim e^{-t^2/4\epsilon^2}+ 4t + \delta_4+\delta_5
\end{equation}
with probability of at least $1-C|\mathbb{S}_{\epsilon,r}|e^{-cm\delta_4^2}-Ce^{-cm\delta_5^2}$ given $m\gtrsim dr$.
Combining all derived bounds, we have
\begin{equation}
    \begin{aligned}
        &\sup_{X,Y\in\mathbb{S}_r}\left|\frac{1}{m}\sum_{i=1}^{m}\sign(\inner{A_i}{X}+s_i)\inner{A_i}{Y}-\left(1-p+p\E\left[e^{-s_i^2/2}\right]\right)\inner{X}{Y}\right|\\
        \leq & \delta_1+C\epsilon (1+\delta_2)+\sqrt{1+\delta_3}\sqrt{e^{-t^2/4\epsilon^2}+ 4t + \delta_4+\delta_5}
    \end{aligned}
\end{equation}
with probability of at least $1-2\left|\mathbb{S}_{\epsilon,r}\right|^2e^{-cm\delta_1^2}-Ce^{-cm\delta_2^2}-Ce^{c_1dr\log\frac{1}{\delta_3}-c_2m\delta_3^2}-C\left|\mathbb{S}_{\epsilon,r}\right|e^{-cm\delta_4^2}-Ce^{-cm\delta_5^2}$. Upon choosing $\delta_2=\delta_3=\frac{1}{2}$, $\delta_4=\delta_5=\delta^2$, $\delta_1=\delta$, $\epsilon=\delta^3$, $t=8\delta^3\sqrt{\log\frac{1}{\delta}}$, and $m\gtrsim dr(\log\frac{1}{\delta}\vee 1)/\delta^4$, we have
\begin{equation}
    \sup_{X,Y\in\mathbb{S}}\left|\frac{1}{m}\sum_{i=1}^{m}\sign(\inner{A_i}{X}+s_i)\inner{A_i}{Y}-\sqrt{\frac{2}{\pi}}\left(1-p+p\E\left[e^{-s_i^2/2}\right]\right)\inner{X}{Y}\right|\lesssim \delta,
\end{equation}
with probability of at least $1-Ce^{-c m\delta^4}$. This leads to
\begin{equation}
    \sup_{X\in\mathbb{S}, D\in\mathcal{M}(X)}\norm{D-\sqrt{\frac{2}{\pi}}\left(1-p+p\E\left[e^{-s_i^2/2}\right]\right)X}_F\lesssim \delta
\end{equation}
Finally, note that
\begin{equation}
    \sqrt{\frac{2}{\pi}}\left(1-p+p\E\left[e^{-s_i^2/2}\right]\right)\geq \sqrt{\frac{2}{\pi}}(1-p).
\end{equation}

Therefore, we have
\begin{equation}
    \sup_{X\in\mathbb{S}, D\in\mathcal{M}(X)}\norm{D-\sqrt{\frac{2}{\pi}}\left(1-p+p\E\left[e^{-s_i^2/2}\right]\right)X}_F\lesssim \sqrt{\frac{2}{\pi}}\left(1-p+p\E\left[e^{-s_i^2/2}\right]\right)\delta,
\end{equation}
with probability of at least $1-Ce^{-cm\delta^4}$, given $m\gtrsim \frac{dr\left(\log(\frac{1}{(1-p)\delta})\vee 1\right)}{(1-p)^4\delta^4}$.$\hfill\square$


\section{Proof of Theorem~\ref{thm_benign}}

Define $\Delta = UU^\top-u^*{u^*}^\top$. The conditions $\|u^*\|\leq 1$ and $\|U\|\leq R$ imply that $\|\Delta\|_F\leq (1+R^2)$. Since $U$ is a critical point of~\eqref{eq_l1}, we have $0\in\partial f_{\ell_1}(U)$, or equivalently, $QU = 0$ for some $Q\in\mathcal{Q}(\Delta)$,
which in turn implies that
\begin{align}
    \left(Q-\varphi(\Delta)\frac{\Delta}{\|\Delta\|_F}\right)U = -\varphi(\Delta)\frac{\Delta U}{\|\Delta\|_F}.\nonumber
\end{align}
Invoking Sign-RIP on the left side, we have
\begin{align}
    {\|\Delta U\|}\leq \delta {\|\Delta\|}\|U\|\leq R(1+R^2)\delta.\label{eq_delta}
\end{align}
Consider the decomposition $U = \alpha u^* + \beta u^*_{\perp}$, where $\alpha = \langle u^*,U\rangle$, $u^*_{\perp} = \frac{U-\langle u^*,U\rangle u^*}{\|U-\langle u^*,U\rangle u^*\|}$, and $\beta = \|U-\langle u^*,U\rangle u^*\|$. Note that $\|u^*\| = \|u^*_\perp\| = 1$ and $\langle u^*,u^*_\perp\rangle = 0$. Substituting this decomposition in inequality~\eqref{eq_delta} leads to 
\begin{align}\label{eq_alpha_beta}
    \|\Delta U\|^2 &= \|(UU^\top-u^*{u^*}^\top)U\|^2\nonumber\\
    &= \|\alpha(\alpha^2+\beta^2-1)u^*+\beta(\alpha^2+\beta^2)u^*_\perp\|^2\nonumber\\
    &= \alpha^2(\alpha^2+\beta^2-1)^2+\beta^2(\alpha^2+\beta^2)^2\nonumber\\
    &\leq R^2(1+R^2)^2\delta^2
\end{align}

The above inequality implies that 
\begin{align}
    \alpha(\alpha^2+\beta^2-1)\leq R(1+R^2)\delta\qquad\quad \text{and}\qquad\quad \beta(\alpha^2+\beta^2)\leq R(1+R^2)\delta\nonumber
\end{align}
which in turn leads to
\begin{align}
    &\left\{\underbrace{|\alpha|\leq (R(1+R^2)\delta)^{1/3}}_{(A)}\quad \text{or}\quad \underbrace{|\alpha^2+\beta^2-1|\leq (R(1+R^2)\delta)^{2/3}}_{(B)}\right\}\nonumber\\
    &\hspace{4cm}\text{and}\label{eq_and_or}\\
    &\left\{\underbrace{|\beta|\leq (R(1+R^2)\delta)^{1/3}}_{(C)}\quad \text{or}\quad \underbrace{|\alpha^2+\beta^2|\leq (R(1+R^2)\delta)^{2/3}}_{(D)}\right\}\nonumber
\end{align}
Now, we consider different cases
\begin{itemize}
    \item[-] {\it Case 1:} Suppose that (A) and (C) holds. This implies that $$\alpha^2+\beta^2\leq 2(R(1+R^2)\delta)^{2/3}\implies \|U\|^2\leq 2(R(1+R^2)\delta)^{2/3}.$$
    \item[-] {\it Case 2:} Suppose that (A) and (D) holds. Similar to the previous case, we have $$\alpha^2+\beta^2\leq (R(1+R^2)\delta)^{2/3}\implies \|U\|^2\leq (R(1+R^2)\delta)^{2/3}.$$
    \item[-] {\it Case 3:} Suppose that (B) and (D) holds. This implies that
    $1-(R(1+R^2)\delta)^{2/3}\leq \alpha^2+\beta^2$ and $\alpha^2+\beta^2\leq (R(1+R^2)\delta)^{2/3}$. However, these inequalities are mutually exclusive if $\delta< \frac{1}{4\sqrt{2}R^3}$.
    \item[-]{\it Case 4:} Suppose that (B) and (C) holds. This implies that
    $$1-2(R(1+R^2)\delta)^{2/3}\leq \alpha^2\leq 1+(R(1+R^2)\delta)^{2/3}\implies |\alpha^2-1|\leq 2(R(1+R^2)\delta)^{2/3}$$
    which leads to
    $$\|UU^\top-u^*{u^*}^\top\|_F\leq \sqrt{5}(R(1+R^2)\delta)^{2/3}.$$
\end{itemize}
In summary, we have shown that for sufficiently small $\delta$, the critical point $\|UU^\top-u^*{u^*}^\top\|_F\leq \sqrt{5}(R(1+R^2)\delta)^{2/3}$ or $\|U\|^2\leq 2(R(1+R^2)\delta)^{2/3}$. Our next goal is to show that these bounds can be improved to $\|U\|^2\leq 2\sqrt{2}\delta$ and $\|UU^\top-u^*{u^*}^\top\|_F\leq 2\sqrt{2}\delta$.

Suppose that $U$ is close to $0$, i.e., $\|U\|^2\leq 2(R(1+R^2)\delta)^{2/3}$ (Cases 1 and 2). We use the following intermediate lemma:
\begin{lemma}\label{l_U0}
 Suppose that $\|U\|\leq C\delta^{\epsilon}$ for some $C\geq 1$ and $1\geq \epsilon\geq 1/3$. Moreover, suppose that Sign-RIP holds with $\delta\leq \frac{1}{4C^{1/\epsilon}}$. Then, we have $\|U\|\leq \sqrt{2}(C)^{1/3}\delta^{(\epsilon+1)/3}$.
\end{lemma}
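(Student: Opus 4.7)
The plan is to re-run the Sign-RIP plus decomposition argument used in the main body of the proof of Theorem~\ref{thm_benign}, but starting from the sharper hypothesis $\|U\|\leq C\delta^{\epsilon}$ in place of the coarse bound $\|U\|\leq R$. The hope is that feeding this tighter input through the same algebra yields a strictly tighter output; iterating the lemma then drives the exponent of $\delta$ upward toward $1/2$, which matches the $\|U\|^2\lesssim\delta$ bound claimed in the preceding discussion.

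First I would observe that the hypothesis $\delta \leq 1/(4C^{1/\epsilon})$ combined with $\epsilon \geq 1/3$ forces $t := \|U\|^2 \leq 1/16^{\epsilon} \leq 1/16^{1/3} < 1/2$. In particular, for $\Delta = UU^\top - u^*{u^*}^\top$, one has $\|\Delta\|_F \leq \|U\|^2+1 \leq 2$. Applying Sign-RIP to the relation $QU=0$ for some $Q\in\mathcal{Q}(\Delta)$ exactly as in the derivation of~\eqref{eq_delta}, one obtains
\[
\|\Delta U\| \leq \delta \|\Delta\|_F \|U\| \leq 2C\delta^{\epsilon+1}.
\]
Decomposing $U = \alpha u^* + \beta u^*_\perp$ and expanding $\|\Delta U\|^2$ in this basis, as in~\eqref{eq_alpha_beta}, produces
\[
\alpha^2(t-1)^2 + \beta^2 t^2 \leq 4C^2\delta^{2(\epsilon+1)} =: a.
\]

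Rather than branching into the four cases (A)--(D) as in the main proof of Theorem~\ref{thm_benign}, the cleaner step here is to substitute the identity $\beta^2 = t - \alpha^2$ into the second summand, turning it into the scalar inequality $t^3 \leq a + \alpha^2 t^2$. The first summand together with $t<1/2$ yields $\alpha^2 \leq a/(1-t)^2 \leq 4a$, whence
\[
t^3 \leq a + 4at^2 \leq a\bigl(1 + 4\cdot\tfrac{1}{4}\bigr) = 2a.
\]
Therefore $t \leq (2a)^{1/3} = 2C^{2/3}\delta^{2(\epsilon+1)/3}$, and taking square roots recovers $\|U\|\leq \sqrt{2}C^{1/3}\delta^{(\epsilon+1)/3}$.

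The only mild obstacle is extracting the sharp constant $\sqrt{2}$: a naive bound $\beta^2 \leq a/t^2$ leads instead to an implicit inequality $t \leq 4a + a/t^2$ that closes only up to a strictly worse prefactor. Using the exact identity $\beta^2 = t - \alpha^2$, combined with the fact that the assumption on $\delta$ just barely enforces $t < 1/2$ so that $4t^2 \leq 1$, is what allows the cubic bound $t^3 \leq 2a$ to close cleanly with the stated constant.
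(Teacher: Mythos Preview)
Your argument is correct. The paper's own proof proceeds by re-running the four-case split (A)--(D) from \eqref{eq_and_or}: it obtains the same bound $\|\Delta U\|^2\leq 4C^2\delta^{2(1+\epsilon)}$, derives the ``$\{(A)\text{ or }(B)\}$ and $\{(C)\text{ or }(D)\}$'' dichotomy with threshold $(2C)^{1/3}\delta^{(1+\epsilon)/3}$, and then rules out (B) using $\|U\|^2<1/2$ (which follows from $\delta\leq 1/(4C^{1/\epsilon})$), so that only the small-$\alpha$/small-$\beta$ branches survive and give $\|U\|^2\leq 2(2C)^{2/3}\delta^{2(1+\epsilon)/3}$.

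Your route bypasses this case analysis entirely: the substitution $\beta^2=t-\alpha^2$ collapses the two separate constraints into the single cubic $t^3\leq a+\alpha^2 t^2$, and the bound $\alpha^2\leq a/(1-t)^2\leq 4a$ combined with $t<1/2$ closes it directly. This is a genuinely tidier derivation---no disjunctions, no infeasibility check---and it recovers the stated constant $\sqrt{2}C^{1/3}$ exactly (in fact slightly sharper than what the paper's case split literally yields). The price is that the identity $\beta^2=t-\alpha^2$ is specific to the rank-one decomposition, whereas the paper's case-by-case logic is the same template already used earlier in the proof of Theorem~\ref{thm_benign}; either way, the key enabling fact is the same, namely that the hypothesis on $\delta$ forces $t=\|U\|^2<1/2$, which is what kills the ``$U$ near the unit sphere'' branch.
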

\begin{proof}
Since $\|U\|\leq C\delta^{\epsilon}$, we have $\|\Delta U\|^2\leq 4C^2\delta^{2+2\epsilon}$, provided that $\delta\leq \frac{1}{(C)^{1/\epsilon}}$. A similar argument to~\eqref{eq_and_or} can be used to show that
\begin{align}
    &\left\{\underbrace{|\alpha|\leq (2C)^{1/3}\delta^{(1+\epsilon)/3}}_{(A)}\quad \text{or}\quad \underbrace{|\alpha^2+\beta^2-1|\leq (2C)^{2/3}\delta^{2(1+\epsilon)/3}}_{(B)}\right\}\nonumber\\
    &\hspace{4cm}\text{and}\nonumber\\
    &\left\{\underbrace{|\beta|\leq (2C)^{1/3}\delta^{(1+\epsilon)/3}}_{(C)}\quad \text{or}\quad \underbrace{|\alpha^2+\beta^2|\leq (2C)^{2/3}\delta^{2(1+\epsilon)/3}}_{(D)}\right\}\nonumber
\end{align}
It is easy to verify that (B) is infeasible due to the upper bound on $\delta$. Therefore, we have $\|U\|^2 = \alpha^2+\beta^2\leq 2(2C)^{2/3}\delta^{2(1+\epsilon)/3}$, which completes the proof.
\end{proof}
Upon choosing $C_0 = \sqrt{2}(R(1+R^2))^{1/3}$ and $\epsilon_0=1/3$, the repeated applications of Lemma~\ref{l_U0} implies that 
$$\|U\|\leq C_k\delta^{\epsilon_k}, \quad\text{where}\quad C_k = \sqrt{2}C_{k-1}^{1/3},\ \ \epsilon_k = \frac{1+\epsilon_{k-1}}{3}$$
for every $k=1,2,\dots,\infty$, provided that $\delta\leq \frac{1}{4C_k^{1/\epsilon_k}}$. It is easy to verify that $C_{\infty} = 2^{3/4}$ and $\epsilon_{\infty} = 1/2$. Moreover, the choice of $\delta\leq \frac{1}{32R(1+R^2)}$ is enough to guarantee $\delta\leq \frac{1}{4C_k^{1/\epsilon_k}}$. This in turn implies that $\|U\|^2\leq 2\sqrt{2}\delta$. 
A similar approach can be used to show that $\|UU^\top-u^*{u^*}^\top\|_F\leq 2\sqrt{2}\delta$.$\hfill\square$

Based on the proof of Theorem~\ref{thm_benign}, we prove Corollary~\ref{cor_benign}.

{\it Proof of Corollary~\ref{cor_benign}.} The first part of the corollary is directly implied by Theorem~\ref{thm_G_noisy} combined with Theorem~\ref{thm_benign}. To prove the second part, note that, due to Lemma~\ref{l_RIP}, Sign-RIP implies $\ell_1/\ell_2$-RIP. On the other hand,~\citet{li2020nonconvex} show that, with the choice of $p\leq \frac{1}{2}-\frac{\delta}{\sqrt{2/\pi}-\delta}$, the loss function is $\alpha$-weakly-sharp~\cite[Proposition 2]{li2020nonconvex} and $\tau$-weakly-convex~\cite[Proposition 3]{li2020nonconvex}, with appropriate choices of $\alpha$ and $\tau$ (for simplicity of presentation, we omit the exact definition of these parameters, and refer the reader to~\cite{li2020nonconvex}). This implies that the $f_{\ell_1}(U)$ does not have a critical point other than $U = u^*$ within a neighborhood of $u^*$ with radius $\frac{2\alpha}{\tau}$. On the other hand, Theorem~\ref{thm_benign} implies that, with the provided bounds on $\delta$ and $m$, all the critical points of that are not close to the origin must be within a neighborhood of $u^*$ with radius $\frac{2\alpha}{\tau}$. This implies that these critical points must coincide with true solution, i.e., $UU^\top = u^*{u^*}^\top$. $\hfill\square$

\section{Proofs for Convergence Analysis}

First, we consider the noiseless scenario with clean measurements. Our proof for the noisy case is built upon the developed results for the the noiseless scenario. In particular, different from the noisy setting, we choose the step-size $\eta_t = \frac{\pi}{2m}\norm{\mathsf{y}-\mathcal{A}(U_tU_t^{\top})}_1$. As will be shown later, this choice of $\eta_t$ will remain close to $\|UU^\top - {u^*}{u^*}^\top\|_F$ due to sign-RIP.
\section*{Proofs for the Noiseless Case}

\subsection{Proof of Lemma \ref{decomposition-error-signal}}
    Due to~\eqref{eq_decomposition}, one can write
    \begin{equation}
        \begin{aligned}
           &\norm{U_tU_t^{\top}-X^{\star}}_F^2\\
           =&\norm{(u^{\star}r_t^{\top}+E_t)(r_t {u^{\star}}^{\top}+E_t^{\top})-u^{\star}{u^{\star}}^{\top}}_F^2\\
           =&\norm{\underbrace{(\norm{r_t}^2-1)u^{\star}{u^{\star}}^{\top}}_{\rm (A)}+\underbrace{E_t E_t^{\top}}_{\rm (B)}+\underbrace{E_tr_tu^{\star\top}}_{\rm (C)}+\underbrace{u^{\star}r_t^{\top}E_t^{\top}}_{\rm (D)}}_F^2.
        \end{aligned}
    \end{equation}
    Now, note that $\norm{\rm A}_F^2=\left(1-\norm{r_t}^2\right)^2$, and 
    \begin{equation}
        \norm{\rm C}_F=\norm{\rm D}_F=\norm{E_t r_t}\norm{u^{\star}}=\norm{E_t r_t}.
    \end{equation}
    \begin{sloppypar}
    On the other hand, $u^{\star\top}E_t=u^{\star\top}(I-u^{\star}u^{\star\top})U_t=0$, and therefore, $\inner{\rm A}{\rm B}=0$. Similarly, $\inner{\rm A}{\rm C}=(\norm{r_t}^2-1)\tr{u^{\star}u^{\star\top}E_t r_tu^{\star\top}}=0$, and $\inner{\rm A}{\rm D}=0$ since $\tr{u^{\star}u^{\star\top}u^{\star}r_t^{\top}E_t^{\top}}=\tr{u^{\star}u^{\star\top}U_tU_t^{\top}(I-u^{\star}u^{\star\top})}=\tr{U_tU_t^{\top}(I-u^{\star}u^{\star\top})u^{\star}u^{\star\top}}=0$. Similarly, we have $\inner{\rm B}{\rm C}=0, \inner{\rm B}{\rm D}=0, \inner{\rm C}{\rm D}=0$. This leads to
    \end{sloppypar}
    \begin{equation}\label{eq_error_U}
        \begin{aligned}
            \norm{U_tU_t^{\top}-X^{\star}}_F^2&=\left(1-\norm{r_t}^2\right)^2+2\norm{E_tr_t}^2+\norm{E_tE_t^{\top}}_F^2\\
            &\leq \left(1-\norm{r_t}^2\right)^2+2\norm{E_t}^2\norm{r_t}^2+\norm{E_t}_F^4.
        \end{aligned}
    \end{equation} 
$\hfill\square$

\subsection{Error Dynamics}
\begin{proposition}[Error Dynamics]
    \label{error-dynamic}
    Assume that the measurements are noiseless and satisfy the sign-RIP with parameters $\left(\min\{r'+1,d\},\delta\right)$, and constant scaling function $\varphi(X) = \sqrt{\frac{2}{\pi}}$. Moreover, suppose that $\norm{E_t}_F\leq 1,\norm{r_t}\leq 2, \delta\leq \frac{1}{2}$, and the step size $\eta_t$ is chosen as~\eqref{eq_stepsize} with $\eta_0\leq \frac{2}{45}$. Then, we have
    \begin{align}
        &\norm{E_{t+1}}_F\leq \norm{E_t}_F+22\delta\eta_0.
        \label{eq.5.27}\\
        &\norm{E_{t+1}}\leq \norm{E_t}+15\delta\eta_0.
    \end{align}
\end{proposition}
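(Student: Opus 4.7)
The plan is to match the subgradient update for $E_t$ to the ideal gradient flow of the population loss $\bar f_{\ell_2}$ up to an $O(\delta)$ approximation, and then to exploit the key observation that the ideal update acts on $E_t$ by \emph{right}-multiplication by a PSD matrix of operator norm at most one, so it cannot increase the Frobenius or spectral norm. First I would write
\begin{equation*}
E_{t+1} = E_t - \eta_t (I-u^{*}u^{*\top}) Q_t U_t, \qquad Q_t \in \mathcal{Q}(\Delta_t), \ \Delta_t = U_tU_t^{\top}-X^{*},
\end{equation*}
and invoke Sign-RIP with parameters $(\min\{r'+1,d\},\delta)$ to decompose $Q_t = \sqrt{2/\pi}\,\Delta_t/\|\Delta_t\|_F + R_t$ with $\|R_t\|_{F,r'+1} \leq \sqrt{2/\pi}\,\delta$. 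Using the identity $(I-u^{*}u^{*\top})\Delta_t = E_t U_t^\top$ (the projector kills $X^{*}$), the leading term simplifies cleanly to $-\eta_t \sqrt{2/\pi}\,E_t U_t^\top U_t/\|\Delta_t\|_F$.

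Next I would connect the step-size choice to the ideal iteration~\eqref{eq_stepsize}. The chosen $\eta_t = \tfrac{\pi}{2m}\|\mathbf{y}-\mathcal{A}(U_tU_t^\top)\|_1$ estimates $\sqrt{\pi/2}\,\|\Delta_t\|_F$ up to a multiplicative factor $1\pm c\delta$, using the $\ell_1/\ell_2$-RIP condition implied by Sign-RIP (Lemma~\ref{uniform-concentration-objective}). Writing $\gamma_t := \eta_t\sqrt{2/\pi}/\|\Delta_t\|_F = \eta_0(1+\epsilon_t)$ with $|\epsilon_t|\lesssim\delta$ (up to an appropriate normalization of $\eta_0$), the update becomes
\begin{equation*}
E_{t+1} = E_t(I - \eta_0 U_t^\top U_t) \;-\; \epsilon_t\eta_0\, E_t U_t^\top U_t \;-\; \eta_t (I-u^{*}u^{*\top}) R_t U_t.
\end{equation*}

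The crucial step is to bound the first term. Since $U_t^\top U_t\succeq 0$ and $\|U_t\|^2 \leq (\|r_t\|+\|E_t\|)^2 \leq 9$, the condition $\eta_0 \leq 2/45 \leq 2/9$ ensures every eigenvalue of $I-\eta_0 U_t^\top U_t$ lies in $[0,1]$, hence $\|I-\eta_0 U_t^\top U_t\|_{\mathrm{op}}\leq 1$. This immediately gives the non-expansive bound
\begin{equation*}
\|E_t(I-\eta_0 U_t^\top U_t)\|_F \leq \|E_t\|_F, \qquad \|E_t(I-\eta_0 U_t^\top U_t)\|_{\mathrm{op}} \leq \|E_t\|_{\mathrm{op}},
\end{equation*}
so the ``ideal'' part contributes zero growth and the remaining task is to bound the two residual terms by $O(\eta_0\delta)$. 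The step-size-mismatch term is handled by $\|\epsilon_t\eta_0 E_t U_t^\top U_t\|_F \leq \eta_0\delta\,\|E_t\|_F\|U_t\|^2 \leq 9\eta_0\delta$ using $\|E_t\|_F\leq 1$.

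For the Sign-RIP residual term I would use the variational argument: for any $W$ with $\|W\|_F=1$, $\langle R_t U_t, W\rangle = \langle R_t, WU_t^\top\rangle$, and $WU_t^\top$ has rank at most $r'$ and Frobenius norm at most $\|U_t\|$, so
\begin{equation*}
\|R_t U_t\|_F \leq \|U_t\|\cdot\|R_t\|_{F,r'} \leq \|U_t\|\cdot \sqrt{2/\pi}\,\delta.
\end{equation*}
The same argument with rank-one test matrices $W=vx^\top$ yields $\|R_t U_t\|_{\mathrm{op}} \leq \|U_t\|\sqrt{2/\pi}\,\delta$, which is what I need for the spectral-norm conclusion. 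Since $\eta_t \lesssim \eta_0\|\Delta_t\|_F$ and $\|\Delta_t\|_F$ is bounded by a universal constant under the hypotheses (via Lemma~\ref{decomposition-error-signal}), multiplying through gives the desired $O(\eta_0\delta)$ bound on both norms.

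The main obstacle I anticipate is bookkeeping: getting the exact numerical constants $22$ and $15$ requires carefully tracking the multiplicative step-size error from $\ell_1/\ell_2$-RIP, the crude bound $\|U_t\|^2 \leq 9$, and the factor $\sqrt{2/\pi}$ in the Sign-RIP residual; none of these is individually delicate, but the sharp constants depend on how aggressively one bounds $\|\Delta_t\|_F$ and $\|U_t\|$. I would expect the spectral-norm version to achieve the smaller constant ($15$ versus $22$) precisely because the rank-one variational bound for $\|R_t U_t\|_{\mathrm{op}}$ is tighter than the rank-$r'$ bound for $\|R_t U_t\|_F$.
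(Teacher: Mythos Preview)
Your approach is correct and, for the spectral norm, essentially coincides with the paper's: both write $E_{t+1}=E_t\big(I-\gamma_t U_t^\top U_t\big)-\eta_t(I-u^*u^{*\top})R_tU_t$, bound the first factor by $1$ via the PSD eigenvalue argument, and control the residual through Sign-RIP and the $\ell_1/\ell_2$-RIP step-size estimate.

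For the Frobenius norm, however, the paper takes a different route. Rather than your triangle-inequality argument, it expands $\|E_{t+1}\|_F^2$ quadratically, isolates the cross term $-2\eta_t\langle E_t,(I-u^*u^{*\top})M_tU_t\rangle$, and shows it produces a genuine negative descent term $-\eta_0\|E_tU_t^\top\|_F^2$; the condition $\eta_0\le 2/45$ is then used to make this negative term absorb the second-order term $\frac{45}{2}\eta_0^2\|E_tU_t^\top\|_F^2$. Your argument is more elementary and treats both norms uniformly---you only need $\eta_0\le 1/9$ for contractivity---while the paper's quadratic expansion exposes the descent mechanism more explicitly (useful if one later wants to show $\|E_t\|_F$ actually decreases in certain regimes). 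Both are valid.

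One small correction on constants: your guess that the $15$-vs-$22$ gap comes from a rank-$1$ versus rank-$r'$ variational bound on $R_tU_t$ is off. Sign-RIP gives the same bound $\|R_t\|_{F,r}\le\sqrt{2/\pi}\,\delta$ either way. In the paper, the smaller spectral constant arises because the step-size mismatch is absorbed directly into the contractive multiplier $\|I-\gamma_t U_t^\top U_t\|\le 1$ (working with the actual $\gamma_t$, not splitting $\gamma_t=\eta_0+\epsilon_t\eta_0$), so there is no separate $9\eta_0\delta$ term; the Frobenius constant $22$ instead emerges from the completing-the-square step $\|E_t\|_F^2+22\delta\eta_0\|E_t\|_F+225\delta^2\eta_0^2\le(\|E_t\|_F+22\delta\eta_0)^2$.
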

\begin{proof}

For simplicity of notation, we define $\Delta_t = U_tU_t^\top-X^*$ throughout the proof.
First, we provide a useful fact, which will be widely used in our subsequent arguments.
\begin{fact}
    \label{claim-decomposition}
    Suppose $\norm{E_t}_F\leq 1, \norm{r_t}\leq 2$, then by Lemma \ref{decomposition-error-signal}, we have $\norm{\Delta_t}_F^2\leq 1+2\times 4+1=10$.
\end{fact}

We first prove the error dynamics under a general learning rate $\eta_t$. 
\begin{lemma}
    \label{general-error-dynamics}
    Suppose that $\norm{E_t}_F\leq 1,\norm{r_t}\leq 2$, $\delta\leq \frac{1}{2}$, then, the following inequalities hold
    \begin{equation}
        \norm{E_{t+1}}_F^2\leq \norm{E_t}_F^2+\sqrt{\frac{8}{\pi}}\eta_t\left(-\frac{\norm{E_tU_t^{\top}}_F^2}{\norm{\Delta_t}_F}+\delta \norm{E_tU_t^{\top}}_F\right)+\frac{20}{\pi}\eta_t^2\left(\delta^2+\norm{E_tU_t^{\top}}_F^2/\norm{\Delta_t}_F^2\right),
    \end{equation}
    \begin{equation}
        \norm{E_{t+1}}\leq \norm{I-\frac{\eta_t U_t^{\top}U_t}{\sqrt{\frac{\pi}{2}}\norm{\Delta_t}_F}}\cdot \norm{E_t}+\sqrt{\frac{2}{\pi}}\delta\eta_t (\norm{r_t}+\norm{E_t}).
    \end{equation}
\end{lemma}
\begin{proof}
    For simplicity, we denote $M_t\in \frac{1}{m}\sum_{i=1}^{m}\sign(\inner{A_i}{U_tU_t^{\top}-X^{\star}})A_i$, and $\bar{M}_t=\sqrt{\frac{2}{\pi}}\frac{\Delta_t}{\norm{\Delta_t}_F}$.
    It is easy to verify that
    \begin{equation}
        E_{t+1}=E_t-\eta_t(I-u^{\star}u^{\star\top})M_t U_t,
    \end{equation}
    Based on the above equation, one can write
    \begin{equation}\label{eq_error_frob}
        \norm{E_{t+1}}_F^2=\norm{E_t}_F^2-2\eta_t \inner{E_t}{(I-u^{\star}u^{\star\top})M_t U_t}+\eta_t^2 \norm{(I-u^{\star}u^{\star\top})M_t U_t}_F^2.
    \end{equation}
    Next, we will provide separate upper bounds for the second and third terms in the above equation. First, note that
    \begin{equation}\label{eq_middle}
        \begin{aligned}
            \inner{E_t}{M_t U_t}&=\inner{M_t}{E_tU_t^{\top}}\\
            &\stackrel{(a)}{\geq}\frac{1}{\sqrt{\frac{\pi}{2}}\norm{\Delta_t}_F}\inner{\Delta_t}{E_t U_t^{\top}}-\sqrt{\frac{2}{\pi}}\delta \norm{E_tU_t^{\top}}_F\\
            &\stackrel{(b)}{=}\sqrt{\frac{2}{\pi}}\frac{\norm{E_tU_t^{\top}}_F^2}{\norm{\Delta_t}_F}-\sqrt{\frac{2}{\pi}}\delta \norm{E_tU_t^{\top}}_F.
        \end{aligned}
    \end{equation}
    where we used sign-RIP condition in (a), and (b) follows from $\inner{U_tU_t^{\top}-X^{\star}}{E_tU_t^{\top}}=\inner{E_tU_t^{\top}}{E_tU_t^{\top}}=\norm{E_tU_t^{\top}}_F^2$.
    On the other hand, we have
    \begin{equation}\label{eq_middle2}
        \inner{E_t}{u^{\star}u^{\star\top}M_t U_t}=\tr{E_t^{\top}u^{\star}u^{\star\top}M_t U_t}=\tr{U_t^{\top}(I-u^{\star}u^{\star\top})u^{\star}u^{\star\top}M_t U_t}=0.
    \end{equation}
    Combining~\eqref{eq_middle} and~\eqref{eq_middle2} leads to 
    \begin{align}
        -2\eta_t \inner{E_t}{(I-u^{\star}u^{\star\top})M_t U_t}\leq -2\eta_t\left(\sqrt{\frac{2}{\pi}}\frac{\norm{E_tU_t^{\top}}_F^2}{\norm{\Delta_t}_F}-\sqrt{\frac{2}{\pi}}\delta \norm{E_tU_t^{\top}}_F\right)
    \end{align}
    Now, we provide an upper bound for the third term in~\eqref{eq_error_frob}. One can write
    \begin{equation}
        \begin{aligned}
            \norm{(I-u^{\star}u^{\star\top})M_t U_t}_F^2&\leq 2\norm{(I-u^{\star}u^{\star\top})(M_t-\bar{M}_t)U_t}^2_F+2\norm{(I-u^{\star}u^{\star\top})\bar{M}_tU_t}^2_F\\
            &\stackrel{(a)}{\leq}2\norm{(M_t-\bar{M}_t)U_t}^2_F+\frac{4}{\pi}\frac{\norm{E_tU_t^{\top}U_t}^2_F}{\norm{\Delta_t}_F^2}\\
            &\stackrel{(b)}{\leq}\frac{4}{\pi}\delta^2\norm{U_t}_F^2+\frac{4}{\pi}\norm{E_tU_t^{\top}}_F^2\norm{U_t}_F^2/\norm{\Delta_t}_F^2\\&\stackrel{(c)}{\leq} \frac{20}{\pi}\delta^2+\frac{20}{\pi}\norm{E_tU_t^{\top}}_F^2/\norm{\Delta_t}_F^2.
        \end{aligned}
    \end{equation}
    where we used the contraction of projection and $(I-u^{\star}u^{\star\top})(U_tU_t^{\top}-X^{\star})U_t=E_tU_t^{\top}U_t$ in (a). Moreover, (b) directly follows from the sign-RIP condition. Finally, we used the following fact in (c).

    \begin{fact}\label{claim-U}
        Assuming $\norm{E_t}_F\leq 1, \norm{r_t}\leq 2$, we have $\norm{U_t}_F^2=\norm{r_t}^2+\norm{E_t}_F^2\leq 1^2+2^2=5$.
    \end{fact}

    Finally, combining all the three terms, we finally have:
    \begin{equation}
        \norm{E_{t+1}}_F^2\leq \norm{E_t}_F^2+\sqrt{\frac{8}{\pi}}\eta_t\left(-\frac{\norm{E_tU_t^{\top}}_F^2}{\norm{\Delta_t}_F}+\delta \norm{E_tU_t^{\top}}_F\right)+\frac{20}{\pi}\eta_t^2\left(\delta^2+\frac{\norm{E_tU_t^{\top}}_F^2}{\norm{\Delta_t}_F^2}\right).
    \end{equation}

    Now, we turn to control the spectral norm. First,  notice that
    \begin{equation}
        \begin{aligned}
            &\norm{(I-u^{\star}u^{\star\top})M_tU_t-(I-u^{\star}u^{\star\top})\frac{\Delta_tU_t}{\sqrt{\frac{\pi}{2}}\norm{\Delta_t}_F}}\\
            \stackrel{(a)}{\leq} & \norm{M_tU_t-\frac{\Delta_tU_t}{\sqrt{\frac{\pi}{2}}\norm{\Delta_t}_F}}\\
            \leq & \norm{M_t-\bar{M}_t}\cdot \norm{U_t}\\
            \leq & \sqrt{\frac{2}{\pi}}\delta\norm{U_t}\\
            \leq & \sqrt{\frac{2}{\pi}}\delta(\norm{E_t}+\norm{r_t}).
        \end{aligned}
    \end{equation}
    Here in (a) we used the contraction of projection. On the other hand,
    observing that $(I-u^{\star}u^{\star\top})(U_tU_t^{\top}-X^{\star})U_t=E_tU_t^{\top}U_t$, we have
    \begin{equation}
        \begin{aligned}
            \norm{E_{t+1}}&=\norm{E_t-\eta_t(I-u^{\star}u^{\star\top})M_tU_t}\\
            &\leq \norm{E_t-\eta_t(I-u^{\star}u^{\star\top})\bar{M}_tU_t}+\eta_t\norm{(I-u^{\star}u^{\star\top})(M_t-\bar{M}_t)U_t}\\
            &\leq \norm{E_t\left(I-\frac{\eta_t U_t^{\top}U_t}{\sqrt{\frac{\pi}{2}}\norm{\Delta_t}_F}\right)}+\sqrt{\frac{2}{\pi}}\delta\eta_t (\norm{r_t}+\norm{E_t})\\
            &\leq \norm{I-\frac{\eta_t U_t^{\top}U_t}{\sqrt{\frac{\pi}{2}}\norm{\Delta_t}_F}}\cdot \norm{E_t}+\sqrt{\frac{2}{\pi}}\delta\eta_t (\norm{r_t}+\norm{E_t}),
        \end{aligned}
    \end{equation}
    which completes the proof.
\end{proof}
Before presenting the proof of Proposition~\ref{error-dynamic}, we need the following intermediate result
\begin{lemma}\label{l_RIP}
    Suppose that the measurements satisfy sign-RIP with parameters $(\delta,r)$ and a constant scaling function $\varphi(X) = \sqrt{\frac{2}{\pi}}$. Then, for every $X\in\mathbb{S}_r$, we have 
    \begin{align}
        \left|\frac{1}{m}\norm{\mathcal{A}(X)}_1-\sqrt{\frac{2}{\pi}}\norm{X}_F\right|\leq \sqrt{\frac{2}{\pi}}\delta
    \end{align}
\end{lemma}
\begin{proof}
Due to the sign-RIP condition, we have $\norm{D-\sqrt{\frac{2}{\pi}}X}_F\leq \sqrt{\frac{2}{\pi}}\delta$
for every $X\in\mathbb{S}_r$ and $D\in \frac{1}{m}\sum_{i=1}^{m}\sign(\inner{A_i}{X})A_i$. This implies that
\begin{align}
    \sqrt{\frac{2}{\pi}}\delta= \sup_{Y\in\mathbb{S}}\inner{D-\sqrt{\frac{2}{\pi}}X}{Y}\geq \inner{D-\sqrt{\frac{2}{\pi}}X}{X} \geq \frac{1}{m}\norm{\mathcal{A}(X)}_1-\sqrt{\frac{2}{\pi}}\norm{X}_F.
\end{align}
Similarly, it can be shown that $\sqrt{\frac{2}{\pi}}\delta\geq -\frac{1}{m}\norm{\mathcal{A}(X)}_1+\sqrt{\frac{2}{\pi}}\norm{X}_F$. This completes the proof.
\end{proof}
Based on the above lemma, the sign-RIP condition implies the $\ell_1/\ell_2$-RIP condition. Given Lemmas~\ref{general-error-dynamics} and~\ref{l_RIP}, we are ready to present the proof of Proposition~\ref{error-dynamic}.
\begin{proof}[Proof of Proposition~\ref{error-dynamic}]
It is enough to show that the choice of $\eta_t=\frac{\pi}{2}\eta_0 \cdot \frac{1}{m}\sum\left|\inner{A_i}{U_t U_t^{\top}-X^{\star}}\right|$ in Lemma~\ref{general-error-dynamics} leads to the desired bounds. Based on Lemma~\ref{l_RIP} and $\delta\leq \frac{1}{2}$, we have
\begin{equation}\label{eq_eta}
    \eta_t\leq \sqrt{\frac{\pi}{2}}\eta_0\norm{\Delta_t}_F(1+\delta)\leq \sqrt{\frac{9\pi}{8}}\eta_0\norm{\Delta_t}_F.
\end{equation}
Similarly, we have the lower bound $\eta_t\geq \sqrt{\frac{\pi}{8}}\eta_0\norm{\Delta_t}_F$. Substituting these inequalities in Lemma~\ref{general-error-dynamics} leads to
\begin{equation}
    \begin{aligned}
        \norm{E_{t+1}}_F^2&\leq \norm{E_t}_F^2-\eta_0\norm{E_tU_t^{\top}}_F^2+3\delta\eta_0\norm{\Delta_t}_F\norm{E_tU_t^{\top}}_F+\frac{45}{2}\eta_0^2\left(\delta^2\norm{\Delta_t}_F^2+\norm{E_tU_t^{\top}}_F^2\right)\\
        &\stackrel{(a)}{\leq} \norm{E_t}_F^2+3\delta\eta_0\norm{\Delta_t}_F\norm{E_tU_t^{\top}}_F+\frac{45}{2}\delta^2\eta_0^2\norm{\Delta_t}_F^2\\
        &\stackrel{(b)}{\leq} \norm{E_t}_F^2+3\sqrt{10}\delta\eta_0\norm{E_t}_F\norm{U_t}_F+225\delta^2\eta_0^2\\
        &\stackrel{(c)}{\leq}\norm{E_t}_F^2+22\delta\eta_0\norm{E_t}_F+225\delta^2\eta_0^2\\
        &=\left(\norm{E_t}_F+11\delta\eta_0\right)^2+104\delta^2\eta_0^2.
    \end{aligned}
\end{equation}
where (a) follows from the assumption $\eta_0\leq \frac{2}{45}$, (b) follows form Fact \ref{claim-decomposition}, and (c) follows from Fact \ref{claim-U}.
Therefore, we have
\begin{equation}
    \norm{E_{t+1}}_F\leq \norm{E_t}_F+11\delta\eta_0+11\delta\eta_0=\norm{E_t}_F+22\delta\eta_0.
\end{equation}

Similarly, 
\begin{equation}
    \begin{aligned}
        \norm{E_{t+1}}&\leq \norm{I-\frac{\eta_t U_t^{\top}U_t}{\sqrt{\frac{\pi}{2}}\norm{\Delta_t}_F}}\cdot \norm{E_t}+\sqrt{\frac{2}{\pi}}\delta\eta_t (\norm{r_t}+\norm{E_t}).
    \end{aligned}
    \label{eq-95}
\end{equation}
Note that
\begin{equation}
    \norm{U_t^{\top}U_t}\leq \norm{U_t}^2\leq (\norm{E_t}+\norm{u^{\star}r_t^{\top}})^2\leq (\norm{E_t}+\norm{r_t})^2\leq 9.
\end{equation}
which, together with $\eta_t\leq \sqrt{\frac{9\pi}{8}}\eta_0$, implies 
\begin{align}
    \norm{\frac{\eta_t U_t^{\top}U_t}{\sqrt{\frac{\pi}{2}}\norm{\Delta_t}_F}}<1.
\end{align}
Therefore, the first term in the right hand side of the above inequality is upper bounded by $\|E_t\|$. On the other hand, the second term in~\eqref{eq-95} can be bounded as
\begin{align}\label{eq_upper}
    \sqrt{\frac{2}{\pi}}\delta\eta_t (\norm{r_t}+\norm{E_t})\leq \frac{3}{2}\delta\eta_0\|\Delta_t\|_F(\norm{r_t}+\norm{E_t})\leq 15\delta\eta_0
\end{align}
This completes the proof.
\end{proof}

\end{proof}

\subsection{Signal Dynamics}
\begin{proposition}[Signal Dynamics]
    \label{signal-dynamics}
   Under the conditions of Proposition~\ref{error-dynamic}, we have
    \begin{equation}
        \norm{r_{t+1}-\left(1+\eta_0(1-\norm{r_t}^2)\right)r_t}\leq 10\eta_0 \delta(\norm{E_t}+\norm{r_t})+2\eta_0\norm{E_t}^2\norm{r_t}.
        \label{eq:signal-dynamics}
    \end{equation}
\end{proposition}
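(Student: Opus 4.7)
The plan is to mimic the structure used in Lemma~\ref{general-error-dynamics} for the error dynamics, but now projecting the update onto the signal direction $u^\star$. Recalling that $r_{t+1} = U_{t+1}^\top u^\star$, and that any subgradient has the form $D_t = M_t U_t$ with $M_t \in \frac{1}{m}\sum_i \sign(\langle A_i, \Delta_t\rangle) A_i$ symmetric and $\Delta_t := U_t U_t^\top - X^\star$, the update gives
\begin{equation}
r_{t+1} = r_t - \eta_t\, U_t^\top M_t u^\star.
\end{equation}
I would then introduce the sign-RIP proxy $\bar M_t = \sqrt{2/\pi}\,\Delta_t/\norm{\Delta_t}_F$ and split $\eta_t U_t^\top M_t u^\star = \eta_t U_t^\top \bar M_t u^\star + \eta_t U_t^\top (M_t - \bar M_t) u^\star$. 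The first piece will give the drift $(1+\eta_0(1-\norm{r_t}^2))r_t$, and the second will be an $O(\delta)$ perturbation.

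For the ``main'' piece, I would use the decomposition $U_t = u^\star r_t^\top + E_t$ together with ${u^\star}^\top E_t = 0$ to compute $\Delta_t u^\star = (\norm{r_t}^2-1)u^\star + E_t r_t$ and then $U_t^\top \Delta_t u^\star = (\norm{r_t}^2-1)r_t + E_t^\top E_t r_t$. Thus
\begin{equation}
\eta_t U_t^\top \bar M_t u^\star = \frac{\sqrt{2/\pi}\,\eta_t}{\norm{\Delta_t}_F}\left[(\norm{r_t}^2-1)r_t + E_t^\top E_t r_t\right].
\end{equation}
Next I invoke Lemma~\ref{l_RIP} to control the step size: $\frac{1}{m}\norm{\mathcal{A}(\Delta_t)}_1 = \sqrt{2/\pi}\,\norm{\Delta_t}_F(1\pm\delta)$ yields $a_t := \sqrt{2/\pi}\,\eta_t/\norm{\Delta_t}_F = \eta_0(1\pm\delta)$, so
\begin{equation}
-\eta_t U_t^\top \bar M_t u^\star - \eta_0(1-\norm{r_t}^2)r_t = -\eta_0 E_t^\top E_t r_t - (a_t-\eta_0)\left[(\norm{r_t}^2-1)r_t + E_t^\top E_t r_t\right],
\end{equation}
and the two summands on the right are bounded by $\eta_0 \norm{E_t}^2\norm{r_t}$ and $\eta_0\delta(|1-\norm{r_t}^2|+\norm{E_t}^2)\norm{r_t}$ respectively. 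Using $\norm{r_t}\le 2$ and $\norm{E_t}\le\norm{E_t}_F\le 1$ collapses the second summand into an $O(\eta_0\delta\norm{r_t})$ term.

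For the ``perturbation'' piece, the observation that $\|M_t-\bar M_t\|_{\mathrm{op}}\le \|M_t-\bar M_t\|_{F,1}\le \sqrt{2/\pi}\,\delta$ (using Sign-RIP with any rank-one test matrix, which is dominated by the $F,r'+1$ norm) gives
\begin{equation}
\norm{\eta_t U_t^\top (M_t - \bar M_t) u^\star}\le \eta_t\sqrt{2/\pi}\,\delta\,(\norm{r_t}+\norm{E_t}) \lesssim \eta_0\delta\,\norm{\Delta_t}_F(\norm{r_t}+\norm{E_t}),
\end{equation}
and Fact~\ref{claim-decomposition} ($\norm{\Delta_t}_F\le\sqrt{10}$) turns this into a constant multiple of $\eta_0\delta(\norm{r_t}+\norm{E_t})$. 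Summing both contributions yields the claimed $10\eta_0\delta(\norm{E_t}+\norm{r_t})+2\eta_0\norm{E_t}^2\norm{r_t}$ bound. The main bookkeeping challenge is juggling three distinct uses of Sign-RIP/$\ell_1/\ell_2$-RIP (one to bound $\eta_t$ from above and below, one to bound $\|M_t-\bar M_t\|_{\mathrm{op}}$, one implicit in the algebraic identity for $U_t^\top \bar M_t u^\star$) while absorbing all numerical constants into the stated $10$ and $2$; the structural identity $\Delta_t u^\star = (\norm{r_t}^2-1)u^\star+E_t r_t$ is what makes the drift term exactly $(1+\eta_0(1-\norm{r_t}^2))r_t$.
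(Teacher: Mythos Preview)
Your proposal is correct and follows essentially the same route as the paper. The paper first isolates a general-$\eta_t$ lemma (Lemma~\ref{lem:general-signal-dynamic}) and then specializes to $\eta_t=\tfrac{\pi}{2}\eta_0\cdot\tfrac{1}{m}\|\mathcal{A}(\Delta_t)\|_1$ via Lemma~\ref{l_RIP}, whereas you carry out both steps in one pass; but the ingredients---the identity $U_t^\top\Delta_t u^\star=(\|r_t\|^2-1)r_t+E_t^\top E_t r_t$, the sign-RIP bound on $\|M_t-\bar M_t\|$, the $\ell_1/\ell_2$-RIP control of $\eta_t$, and Fact~\ref{claim-decomposition}---are identical.
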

\begin{proof}
Similarly, we first prove the following lemma which holds for a general choice of $\eta_t$.
\begin{lemma}
    \label{lem:general-signal-dynamic}
    Assuming $\norm{E_t}_F\leq 1, \norm{r_t}\leq 2$, we have
    \begin{equation}
        \begin{aligned}
            \norm{r_{t+1}-\left(1+\frac{\eta_t (1-\norm{r_t}^2)}{\sqrt{\frac{\pi}{2}}\norm{U_tU_t^{\top}-X^{\star}}_F}\right)r_t}&\leq \sqrt{\frac{2}{\pi}}\eta_t \delta(\norm{E_t}+\norm{r_t})\\&+\frac{\eta_t }{\sqrt{\frac{\pi}{2}}\norm{U_tU_t^{\top}-X^{\star}}_F}\norm{E_t}^2\norm{r_t}.
        \end{aligned}
    \end{equation}
\end{lemma}
\begin{proof}
    Recalling the notations $M_t\in \frac{1}{m}\sum_{i=1}^{m}\sign(\inner{A_i}{\Delta_t})A_i$ and $\bar{M}_t=\sqrt{\frac{2}{\pi}}\frac{\Delta_t}{\norm{\Delta_t}_F}$, we have
    \begin{equation}
        r_{t+1}=r_t -\eta_t U_t^{\top}M_t^{\top}u^{\star}.
    \end{equation}
    Therefore,
    \begin{equation}
        \begin{aligned}
            \norm{r_{t+1}-r_t+\eta_t\frac{U_t^{\top}\Delta_tu^{\star}}{\sqrt{\frac{\pi}{2}}\norm{\Delta_t}_F}}
            &\leq \eta_t\norm{U_t^{\top}(M_t-\bar{M}_t)^{\top}u^{\star}}\\
            &\leq \eta_t \norm{U_t}\cdot \norm{M_t-\bar{M}_t}\cdot \norm{u^{\star}}\\
            &\leq \sqrt{\frac{2}{\pi}}\eta_t \delta(\norm{E_t}+\norm{r_t}),
        \end{aligned}
    \end{equation}
    \begin{sloppypar}
    where the last inequality follows from the sign-RIP condition.
    On the other hand, since $U_{t}^{\top}\left(U_{t} U_{t}^{\top}-X^{\star}\right) u^{\star}=U_{t}^{\top} U_{t} r_{t}-r_{t}=\left(r_{t} r_{t}^{\top}+E_{t}^{\top} E_{t}\right) r_{t}-r_{t}=\left(\left\|r_{t}\right\|^{2}-1\right) r_{t}-E_{t}^{\top} E_{t} r_{t}$, one can write
    \end{sloppypar}
    \begin{equation}
        \begin{aligned}
            &\norm{r_{t+1}-\left(1+\frac{\eta_t (1-\norm{r_t}^2)}{\sqrt{\frac{\pi}{2}}\norm{\Delta_t}_F}\right)r_t}\\
            \leq & \sqrt{\frac{2}{\pi}}\eta_t \delta(\norm{E_t}+\norm{r_t})+\frac{\eta_t }{\sqrt{\frac{\pi}{2}}\norm{\Delta_t}_F}\norm{E_t^{\top}E_t r_t}\\
            \leq& \sqrt{\frac{2}{\pi}}\eta_t \delta(\norm{E_t}+\norm{r_t})+\frac{\eta_t }{\sqrt{\frac{\pi}{2}}\norm{\Delta_t}_F}\norm{E_t}^2\norm{r_t}.
        \end{aligned}
    \end{equation}
\end{proof}
Equipped with this lemma and~\eqref{eq_eta}, we write
\begin{equation}
    \begin{aligned}
        \norm{r_{t+1}-\left(1+\eta_0(1-\norm{r_t}^2)\right)r_t}&\leq \eta_0\delta(1-\norm{r_t}^2)\norm{r_t}\\
        &+ 2\eta_0\norm{\Delta_t}_F\delta(\norm{E_t}+\norm{r_t})\\
        &+2\eta_0\norm{E_t}^2\norm{r_t}.
    \end{aligned}
\end{equation}
Note that the first  term is dominated by the second term since $\norm{\Delta_t}_F\geq 1-\norm{r_t}^2$ due to~\eqref{eq_error_U} and the fact that $\norm{\Delta_t}_F\leq \sqrt{10}$. We finally have
\begin{equation}
    \begin{aligned}
        \norm{r_{t+1}-\left(1+\eta_0(1-\norm{r_t}^2)\right)r_t}\leq 10\eta_0 \delta(\norm{E_t}+\norm{r_t})+2\eta_0\norm{E_t}^2\norm{r_t}.
    \end{aligned}
\end{equation}

\end{proof}

\subsection{Convergence Result}
Now we can formally state the following convergence theorem for the noiseless setting.
\begin{theorem}
    \label{theorem:noiseless}
    Assume that the measurements are noiseless and satisfy the sign-RIP condition with parameters $\left(\min\{r'+1,d\},\delta\right)$, $\delta\lesssim 1$, and constant scaling function $\varphi(X) = \sqrt{\frac{2}{\pi}}$. Suppose that $\alpha \asymp \sqrt{\delta}/\sqrt[4]{r'}$ and the step size $\eta_t$ is chosen as~\eqref{eq_stepsize} with $\eta_0\lesssim 1$. Then, after $T\asymp\frac{\log\left({r'}/{\delta}\right)}{\eta_0}$ iterations, we have
    \begin{equation}
        \norm{U_TU_T^{\top}-X^{\star}}^2_F\lesssim \delta^2\log^2\left(\frac{r'}{\delta}\right).
    \end{equation}
\end{theorem}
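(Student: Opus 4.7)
The plan is to use the decomposition $U_t = u^{\star} r_t^{\top} + E_t$ from the excerpt and track the signal $\|r_t\|$ and error $\|E_t\|$ (in both spectral and Frobenius norms) separately using Propositions~\ref{error-dynamic} and~\ref{signal-dynamics}, then combine them at the stopping iterate $T$ via Lemma~\ref{decomposition-error-signal}. Specializing Lemma~\ref{lem::spectral-init} to the noiseless case ($p=0$, so $\varphi_0 = \sqrt{2/\pi}$) provides the initial estimates $\|r_0\| = \Theta(\alpha) \asymp \sqrt{\delta}/\sqrt[4]{r'}$, $\|E_0\| = O(\alpha\sqrt{\delta}) = O(\delta/\sqrt[4]{r'})$, and $\|E_0\|_F = O(\alpha\sqrt{\delta}\,\sqrt[4]{r'}) = O(\delta)$, so that the signal strictly dominates the error at initialization.

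Next I would analyze the signal dynamics. Proposition~\ref{signal-dynamics} supplies the approximate recursion $r_{t+1} \approx (1 + \eta_0 (1 - \|r_t\|^2))\, r_t$ with a perturbation of size $O\!\left(\eta_0 \delta(\|E_t\| + \|r_t\|) + \eta_0 \|E_t\|^2 \|r_t\|\right)$. I would split the trajectory into two phases. In the growth phase (while $\|r_t\|^2 \leq 1/2$), the leading term gives $\|r_{t+1}\| \geq (1 + \eta_0/2)\|r_t\|$, so $\|r_t\|^2$ reaches a constant in $O(\log(1/\alpha)/\eta_0) = O(\log(r'/\delta)/\eta_0)$ iterations. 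In the saturation phase, setting $\epsilon_t := 1 - \|r_t\|^2$ and expanding $(1 + \eta_0 \epsilon_t)^2 (1 - \epsilon_t)$ yields $\epsilon_{t+1} \approx (1 - 2\eta_0)\,\epsilon_t$ with per-step slack of order $\eta_0 \delta$; summing these contraction-limited slacks drives $\epsilon_T$ to a floor of $O(\delta)$ after another $O(\log(r'/\delta)/\eta_0)$ steps. Thus at $T \asymp \log(r'/\delta)/\eta_0$ we obtain $(1 - \|r_T\|^2)^2 = O(\delta^2)$. In parallel, Proposition~\ref{error-dynamic} gives the linear bounds $\|E_T\|_F \leq \|E_0\|_F + 22\delta\eta_0 T$ and $\|E_T\| \leq \|E_0\| + 15\delta\eta_0 T$, which combine with the initialization estimates to yield $\|E_T\|,\, \|E_T\|_F \lesssim \delta \log(r'/\delta)$.

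Plugging these three estimates into Lemma~\ref{decomposition-error-signal} gives
$$\|U_T U_T^{\top} - X^{\star}\|_F^2 \leq (1 - \|r_T\|^2)^2 + 2\|E_T\|^2 \|r_T\|^2 + \|E_T\|_F^4 \lesssim \delta^2 + \delta^2 \log^2(r'/\delta) + \delta^4 \log^4(r'/\delta) \lesssim \delta^2 \log^2(r'/\delta),$$
matching the target bound, where the $\|E_T\|^2 \|r_T\|^2$ contribution is the dominant one.

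The hard part will be rigorously closing an induction that simultaneously maintains, for every $t \leq T$, the structural hypotheses $\|E_t\|_F \leq 1$ and $\|r_t\| \leq 2$ required by Propositions~\ref{error-dynamic} and~\ref{signal-dynamics}, together with the dominance $\|E_t\| \lesssim \|r_t\|$ that is needed for the perturbation $\eta_0 \delta \|E_t\|$ in the signal recursion to be absorbed into $\eta_0 \delta \|r_t\|$. A secondary delicacy is the saturation-phase estimate: the approximate recursion for $\epsilon_t$ must be shown to contract despite the accumulation of $O(\eta_0 \delta)$ errors coming from both the $\|E_t\|^2 \|r_t\|$ and $\delta (\|E_t\| + \|r_t\|)$ perturbation terms in Proposition~\ref{signal-dynamics}, and propagating this contraction uniformly across the saturation window (rather than just at a single step) is the main quantitative work of the proof.
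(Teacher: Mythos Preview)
Your proposal is correct and follows essentially the same approach as the paper: the same signal/error decomposition, the same two-phase signal analysis (geometric growth until $\|r_t\|\approx 1/2$, then contraction of $1-\|r_t\|$ to an $O(\delta)$ floor), the same linear error bound $\|E_t\|_F\leq \|E_0\|_F+O(\delta\eta_0 t)$, and the same inductive maintenance of $\|E_t\|\leq \|r_t\|$ (the paper isolates this as a separate lemma). The only item you do not mention explicitly and that the paper treats as a separate case is the possibility that $\|r_t\|$ overshoots $1$ at the final step; the paper bounds the one-step increment $\|r_{T}\|-\|r_{T-1}\|$ at the crossing time to show $|1-\|r_T\|^2|\lesssim \delta\vee\delta^2\log^2(r'/\delta)$, which is a small but necessary addendum to your saturation-phase argument.
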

\begin{proof}
To start the proof, we first provide the following intermediate lemma on the initial signal and error terms.

\begin{lemma}
    Suppose that $U_0=\alpha B$ is chosen by Algorithm~\ref{alg::spectral-initialization}. Then under the conditions of Theorem~\ref{theorem:noiseless} , we have
\begin{equation}
    \norm{r_0}=\alpha (1\pm O(\sqrt{\delta})), \quad \norm{E_0}=  O(\alpha\sqrt{\delta}), \quad \norm{E_0}_F=O(\alpha\sqrt[4]{r'}\sqrt{\delta}).
\end{equation}
\label{lem::initialization_scale_noseless}
\end{lemma}
\begin{proof}
    For convenience, we list the diagonal elements of $\Sigma$ in a descending order $\sigma_1\geq \sigma_2\geq\cdots \geq \sigma_{d}$. Moreover, suppose that $u_1,\cdots, u_d$ are the corresponding eigenvectors. For simplicity, we define $\sigma'_i=\max\{\sigma_i,0\}$. Due to sign-RIP, we have
    \begin{equation}
    \norm{C-\sqrt{\frac{2}{\pi}}X^{\star}}_F\lesssim\delta.
    \end{equation}
    Therefore, we have
    \begin{equation}
        \begin{aligned}
             \norm{\hat{X}-X^{\star}}_F&=\norm{\frac{C}{\norm{C}_F}-X^{\star}}_F\\&\leq \norm{\sqrt{\frac{\pi}{2}}C-X^{\star}}_F+\norm{X^{\star}}_F-\sqrt{\frac{\pi}{2}}\norm{C}_F\\&=O(\delta),
        \end{aligned}
    \end{equation}
    here we use triangle inequality twice.
    Since $\mathrm{span}\{u_1,\cdots,u_d\}=\R^d$ and $\norm{u^{\star}}=1$, we can write $u^{\star}=\sum_{i=1}^{d}\beta_i u_i$, where $\sum_{i=1}^{d}\beta_i^2=1$.
    Therefore, we have
    \begin{equation}
        \begin{aligned}
            \norm{\hat{X}-X^{\star}}_F^2&=\inner{\sum_{i=1}^{d}\sigma_iu_iu_i^{\top}-u^{\star}u^{\star\top}}{\sum_{i=1}^{d}\sigma_iu_iu_i^{\top}-u^{\star}u^{\star\top}}\\
            &=\sum_{i=1}^{d}\sigma^2-2\sum_{i=1}^{d}\sigma_i \beta_i^2+1\\
            &=\sum_{i=1}^{d}\left(\sigma_i-\beta_i^2\right)^2+1-\sum_{i=1}^{d}\beta_i^4\lesssim \delta^2,
        \end{aligned}
    \end{equation}
    which implies
    \begin{equation}
        \sum_{i=1}^{d}\left(\sigma_i-\beta_i^2\right)^2=O\left(\delta^2\right), \quad \sum_{i=1}^{d}\beta_i^4 = 1-O\left(\delta^2\right).
    \end{equation}
    Without loss of generality, we define $\beta_{\max}=|\beta_1|=\max\{|\beta_i|:1\leq i\leq d\}$. Therefore, we have
    \begin{equation}
        1-O\left(\delta^2\right)=\sum_{i=1}^{d}\beta_i^4\leq \beta_{\max}^2\sum_{i=1}^{d}\beta_i^2=\beta_{\max}^2.
    \end{equation}
    Here we use the fact that $\sum_{i=1}^{d}\beta_i^2=1$. Moreover, it is easy to see that $\sum_{i=2}^{d}\beta_i^2=1-\beta_{\max}^2=O\left(\delta^2\right)$.
    Therefore, we have
    \begin{equation}
        \begin{aligned}
            \norm{BB^{\top}-X^{\star}}_F^2&=\sum_{i=1}^{r'}\sigma_i^{'2}-\sum_{i=1}^{r'}\sigma'_i\beta_i^2+1\\
            &=\sum_{i=1}^{r'}\left(\sigma'_i-\beta_i^2\right)^2-\sum_{i=1}^{r'}\beta_i^4+1\\
            &=1-\beta_{\max}^4+O\left(\delta^2\right)\\
            &=\left(1+\beta_{\max}^2\right)\left(1-\beta_{\max}^2\right)+O\left(\delta^2\right)=O\left(\delta^2\right).
        \end{aligned}
    \end{equation}
    By Lemma~\ref{decomposition-error-signal}, we immediately have 
    \begin{equation}
        \left(1-\norm{B^{\top}u^{\star}}^2\right)^2\leq \norm{BB^{\top}-X^{\star}}_F^2\lesssim \delta^2.
    \end{equation}
    On the other hand, note that $\norm{r_0}=\alpha \norm{B^{\top}u^{\star}}$, which together with the above inequality, implies $\norm{r_0}=\alpha (1\pm O(\sqrt{\delta}))$.
    Similarly, we have
    \begin{equation}
        \begin{aligned}
            \norm{(I-u^{\star}u^{\star\top})B}^2&=\sup_{\norm{x}\leq 1} x^{\top} (I-u^{\star}u^{\star\top})BB^{\top}(I-u^{\star}u^{\star\top})x\\
            &=\sup_{\norm{x}\leq 1} x^{\top} (I-u^{\star}u^{\star\top})(BB^{\top}-X^{\star})(I-u^{\star}u^{\star\top})x\\
            &\leq \norm{BB^{\top}-X^{\star}}\leq \norm{BB^{\top}-X^{\star}}_F= O(\delta),
        \end{aligned}
    \end{equation}
    which leads to $\norm{E_0}=O(\alpha\sqrt{\delta})$. As for the Frobenius norm, we have
    \begin{equation}
        \begin{aligned}
            \norm{(I-u^{\star}u^{\star\top})B}^2_F&=\inner{(I-u^{\star}u^{\star\top})B}{(I-u^{\star}u^{\star\top})B}\\
            &=\inner{I-u^{\star}u^{\star\top}}{BB^{\top}-X^{\star}}\\
            &=\sum_{i=1}^{r'}\sigma'_i-\sum_{i=1}^{r'}\sigma'_i\beta_i^2\\
            &\leq \sigma'_1\left(1-\beta_{\max}^2\right)+\sum_{i=2}^{r'}\sigma'_i\\
            &=\sum_{i=2}^{r'}\left(\sigma'_i-\beta_i^2\right)+\sum_{i=2}^{r'}\beta_i^2+O\left(\delta^2\right)\\
            &\leq \sqrt{r'-1}\sqrt[]{\sum_{i=2}^{r'}\left(\sigma'_i-\beta_i^2\right)^2}+1-\beta_{\max}^2+O\left(\delta^2\right)\\
            &=O\left(\sqrt{r'}\delta\right).
        \end{aligned}
    \end{equation}
    Therefore, we have $\norm{E_0}_F=O\left(\alpha\sqrt[4]{r'}\sqrt{\delta}\right)$, which completes the full proof.
\end{proof}

Based on the above lemma, the assumptions of Propositions \ref{error-dynamic} and \ref{signal-dynamics} are valid for the base case.
Next, we control the Frobenius norm of the error term. 
    From Proposition~\ref{error-dynamic}, we have
    \begin{equation}
        \norm{E_{t+1}}_F\leq \norm{E_t}_F+22\delta\eta_0,
    \end{equation}
    which implies
    \begin{equation}
        \begin{aligned}
            \norm{E_T}_F=\norm{E_0}_F+\sum_{t=1}^{T}\left(\norm{E_t}_F-\norm{E_{t-1}}_F\right)\leq \norm{E_0}_F+22\delta\eta_0 T.
        \end{aligned}
    \end{equation}

    Therefore, since $T\asymp\log\left(\frac{1}{\delta}\right)/\eta_0$ and $\alpha\asymp\sqrt{\delta}/\sqrt[4]{r'}$, we have $\norm{E_T}_F\lesssim \delta\log\frac{r'}{\delta}$.
    Therefore, we have
    $\norm{E_T}\leq \norm{E_T}_F\lesssim\delta\log\frac{r'}{\delta}$.
    This shows that the error term remains small throughout the iterations of SubGD.
    Without loss of generality and to simplify our subsequent analysis, we assume that $\norm{E_T}\leq \delta\log\frac{r'}{\delta}$, which can be ensured with sufficiently small $\eta_0$.
    Next, we control the signal term. Due to Proposition~\ref{signal-dynamics}, we have
    \begin{equation}
        \begin{aligned}
            \norm{r_{t+1}}&\geq (1+\eta_0(1-\norm{r_t}^2))\norm{r_t}-10\eta_0 \delta(\norm{E_t}+\norm{r_t})-2\eta_0\norm{E_t}^2\norm{r_t}.
        \end{aligned}
        \label{signal-102}
    \end{equation}
    
    Now, we separate our analysis into two stages. In the first stage, we show that the signal grows at a linear rate, provided that $\|r_t\|\leq 1/2$. To show this, we first prove that during the whole training process, the signal term is always larger than the error term.

\begin{lemma}
    Suppose that $\delta\leq 1/50$. Then, for any $0\leq t\leq T=\Theta\left(\log\frac{1}{\alpha}/\eta_0\right)$, we have
    \begin{equation}
        \norm{E_t}\leq \norm{r_t}.
        \label{E_t-r_t}
    \end{equation}
    \label{lemma::E_t-r_t}
\end{lemma}
\begin{proof}
    We prove this lemma by induction. For the base case, \eqref{E_t-r_t} holds since we have $\norm{r_0}=\alpha (1\pm O(\sqrt{\delta})), \norm{E_0}= \alpha O(\sqrt{\delta})$. Now, suppose that \eqref{E_t-r_t} holds at time $t$. Based on \eqref{eq-95} and~\eqref{eq_upper}, we have
    \begin{equation}
        \norm{E_{t+1}}\leq (1+5\eta_0\delta)\norm{E_t}+5\eta_0\delta\norm{r_t}\leq (1+10\eta_0\delta)\norm{r_t}.
    \end{equation}
    On the other hand, due to \eqref{signal-102}, we have
    \begin{equation}
        \begin{aligned}
            \norm{r_{t+1}}&\geq(1+\eta_0(1-\norm{r_t}^2))\norm{r_t}-10\eta_0 \delta(\norm{E_t}+\norm{r_t})-2\eta_0\norm{E_t}^2\norm{r_t}\\
            &\geq (1+\eta_0(1-3\norm{r_t}^2))\norm{r_t}-20\delta\eta_0\norm{r_t}\\
            & \geq\left(1+\frac{1}{5}\eta_0\right)\norm{r_t}.
        \end{aligned}
    \end{equation}
    Here we used the induction hypothesis $\norm{E_t}\leq \norm{r_t}$.
    The above two inequalities, together with $\delta\leq 1/50$, imply that $\|E_{t+1}\|\leq \|r_{t+1}\|$.
\end{proof}

    During the proof of the above lemma, we showed that
    \begin{equation}
        \norm{r_{t+1}}\geq (1+\eta_0/5)\norm{r_t}.
    \end{equation}
    provided that $\delta\leq 1/50$. Now, assuming that $T_1\gtrsim \log \frac{1}{\alpha}/\eta_0$, we have
    \begin{equation}
        \begin{aligned}
            \norm{r_{T_1}}&\geq (1-O(\sqrt{\delta}))\alpha (1+\eta_0/5)^{T_1}\geq \frac{1}{2},
        \end{aligned}
    \end{equation}
    This implies that, after $T_1$ iterations, the signal term will have a norm of at least $1/2$.
    In the second stage, we assume that $1\geq \|r_t\|\geq 1/2$. One can write
\begin{equation}\label{eq_x}
    \begin{aligned}
        \norm{r_{t+1}}&\geq (1+\eta_0(1-\norm{r_t}^2))\norm{r_t}-10\eta_0 \delta(\norm{E_t}+\norm{r_t})-2\eta_0\norm{E_t}^2\norm{r_t}\\
        &\geq(1+\eta_0(1-\norm{r_t}))\norm{r_t}-20\eta_0 \delta\norm{r_t}-4\eta_0\delta^2\log^2\frac{r'}{\delta},
    \end{aligned}
\end{equation}
where we used $1-\norm{r_t}^2\geq 1-\norm{r_t}$ given $\norm{r_t}\leq 1$, and Lemma~\ref{lemma::E_t-r_t}.
    
    For the sake of simplicity, we define $x_t=1-\norm{r_{t+T_1}}$. Hence, \eqref{eq_x} can be simplified as
    \begin{equation}
        \begin{aligned}
            x_{t+1}&\leq 1-(1-20\eta_0\delta+\eta_0x_t)(1-x_t)+4\eta_0\delta^2\log^2 \frac{r'}{\delta}\\
            & \leq(1-\eta_0+20\eta_0\delta)x_t+\eta_0x_t^2+20\eta_0\delta+4\eta_0\delta^2\log^2 \frac{r'}{\delta}\\
            &\leq (1-\frac{3}{4}\eta_0)x_t+\frac{1}{2}\eta_0x_t+20\eta_0\delta\left(1+\delta\log^2\frac{r'}{\delta}\right)\\
            &\leq (1-\eta_0/4)x_t+20\eta_0\delta\left(1+\delta\log^2\frac{r'}{\delta}\right),
        \end{aligned}
    \end{equation}
    Here, we used $x_t\leq 1/2$ and $\delta\leq 1/80$.
    Then, we have
    \begin{equation}
        x_{t+1}-80\delta\left(1+\delta\log^2\frac{1}{\delta}\right)\leq \left(1-\frac{\eta_0}{4}\right)\left(x_t-80\delta\left(1+\delta\log^2\frac{r'}{\delta}\right)\right),
    \end{equation}
    which implies
    \begin{equation}
        x_{T_2} \leq 80\delta\left(1+\delta\log^2\frac{r'}{\delta}\right) + \frac{1}{2}\left(1-\eta_0/4\right)^{T_2}.
    \end{equation}
    Upon choosing $T_2\gtrsim \log \frac{1}{\delta}/\eta_0$, we have $x_{T_2}\lesssim \delta\vee \delta^2\log^2\frac{r'}{\delta}$, which is equivalent to $\norm{r_{T_1+T_2}}\geq 1-O\left(\delta+ \delta^2\log^2\frac{r'}{\delta}\right)$.
    
    This completes the proof under the assumption $\norm{r_{T_1+T_2}}\leq 1$. Now, it remains to show that the error bound holds even if $\norm{r_{T_1+T_2}}> 1$. To this goal, first we show that $T_3=\Omega\left(\log\frac{r'}{\delta}/\eta_0\right)$ is necessary to guarantee the convergence of SubGD. In particular, we prove that we need at least $T_3=\Omega\left(\log\frac{r'}{\delta}/\eta_0\right)$ to ensure $\norm{r_{T_3}}\geq \frac{1}{2}$. To this goal, suppose that $\|r_t\|\leq 1/2$ for every $t\leq T$. Due to Proposition~\ref{signal-dynamics}, we have
\begin{equation}
    \begin{aligned}
        \norm{r_{t+1}}&\leq (1+\eta_0(1-\norm{r_t}^2))\norm{r_t}+10\eta_0 \delta(\norm{E_t}+\norm{r_t})+2\eta_0\norm{E_t}^2\norm{r_t}.\\
        &\stackrel{(a)}{\leq} (1+20\eta_0\delta+\eta_0(1-\norm{r_t}^2))\norm{r_t}+2\eta_0\norm{r_t}^2\cdot\norm{r_t}\\
        &\stackrel{(b)}{\leq} \left(1+20\eta_0\delta+\eta_0+\frac{1}{2}\eta_0\right)\norm{r_t}\\
        &\leq (1+2\eta_0)\norm{r_t}.
    \end{aligned}
\end{equation}

Here we used Lemma \ref{lemma::E_t-r_t} and $\norm{r_t}\leq \frac{1}{2}$ in (a) and (b), respectively. Therefore, 
\begin{equation}
    \norm{r_T}\leq \alpha (1+2\eta_0)^T.
\end{equation}
This shows that we need at least $T_3=\Omega\left(\log\frac{1}{\alpha}/\eta_0\right)=\Omega\left(\log\frac{r'}{\delta}/\eta_0\right)$ iterations to guarantee $\|r_t\|\geq 1/2$. 
    Now, suppose $\norm{r_{T_1+T_2}}> 1$. Without loss of generality, we assume that $\norm{r_{T_1+T_2-1}}\leq 1<\norm{r_{T_1+T_2}}$ (since $T_3$ and $T_1+T_2$ have the same order). Under this assumption, we show that $\norm{r_{T_1+T_2}}\leq 1+O\left(\delta+ \delta^2\log^2\frac{r'}{\delta}\right)$. By Proposition~\ref{error-dynamic}, we have
    \begin{equation}
        \begin{aligned}
            \norm{r_{t+1}}-\norm{r_t}&\leq \eta_0(1-\norm{r_t}^2)\norm{r_t}+10\eta_0 \delta(\norm{E_t}+\norm{r_t})+2\eta_0\norm{E_t}^2\norm{r_t}\\
            &\leq 6\eta_0(1-\norm{r_t})+40\eta_0\delta+4\eta_0\delta^2\log^2\frac{r'}{\delta}.
        \end{aligned}
    \end{equation}
    where we used the Lemma~\ref{lemma::E_t-r_t} and $\norm{r_t}\leq 2$. Then, by our choice of $\norm{r_{T_1+T_2-1}}$ and $\norm{r_{T_1+T_2}}$, we have
    \begin{equation}
        \begin{aligned}
            \norm{r_{T_1+T_2}}-\norm{r_{T_1+T_2-1}}&\leq 6\eta_0(1-\norm{r_{T_1+T_2-1}})+40\eta_0\delta+4\eta_0\delta^2\log^2\frac{r'}{\delta}\\
            &\leq 6\eta_0\left(\norm{r_{T_1+T_2}}-\norm{r_{T_1+T_2-1}}\right)+40\eta_0\delta+4\eta_0\delta^2\log^2\frac{r'}{\delta}.
        \end{aligned}
    \end{equation}
    Then, since $\eta_0\lesssim 1$, we have
    \begin{equation}
        \norm{r_{T_1+T_2}}-1\leq \norm{r_{T_1+T_2}}-\norm{r_{T_1+T_2-1}}\lesssim \delta\vee\delta^2\log^2\frac{r'}{\delta}.
    \end{equation}
    This implies that $|1-\norm{r_{T_1+T_2}}|\lesssim \delta\vee\delta^2\log^2\frac{r'}{\delta}$.

    Finally, these two stages characterize the behavior of $r_t$ and its convergence to the true solution. In particular, with the choice of  $T=T_1+T_2=O(\log\frac{r'}{\delta}/\eta_0)$,  and according to Lemma \ref{decomposition-error-signal}, we have
    \begin{equation}
        \begin{aligned}
            \norm{U_TU_T^{\top}-X^{\star}}^2_F&\leq (1-\norm{r_T}^2)^2+2\norm{E_T}^2\norm{r_T}^2+\norm{E_T}_F^4\\
            &\lesssim \delta^2+\delta^4 \log^4 \frac{r'}{\delta} + \delta^2\log^2\frac{r'}{\delta}+ \delta^4 \log^4 \frac{r'}{\delta}\\
            &\lesssim \delta^2\log^2\frac{r'}{\delta},
        \end{aligned}
    \end{equation}
which completes the proof.  $\hfill\square$

\end{proof}
\section*{Proofs for the Noisy Case}
For simplicity of notation, we denote $\varphi_t = \varphi(\Delta_t)$, where $\Delta_t = U_tU_t^\top - X^*$.
\subsection{Proof of Proposition \ref{error-dynamics-noisy}}
Analogous to the proof of Proposition~\ref{error-dynamic}, first we provide a general result which holds for arbitrary learning rates. 
\begin{lemma}
    \label{error-dynamics-general}
    Suppose that $\norm{E_t}_F\leq 1,\norm{r_t}\leq 2$, $\delta\leq \frac{1}{2}$, then, the following inequalities hold
    \begin{equation}
        \norm{E_{t+1}}_F^2\leq \norm{E_t}_F^2+2\eta_t\varphi_t\left(-\frac{\norm{E_tU_t^{\top}}_F^2}{\norm{\Delta_t}_F}+\delta \norm{E_tU_t^{\top}}_F\right)+10\eta_t^2\varphi_t^2\left(\delta^2+\frac{\norm{E_tU_t^{\top}}_F^2}{\norm{\Delta_t}_F^2}\right),
    \end{equation}
    \begin{equation}
        \norm{E_{t+1}}\leq \norm{I-\frac{\eta_t\varphi_t U_t^{\top}U_t}{\norm{\Delta_t}_F}}\cdot \norm{E_t}+\delta\eta_t\varphi_t (\norm{r_t}+\norm{E_t}).
    \end{equation}
\end{lemma}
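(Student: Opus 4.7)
The plan is to mirror the noiseless proof of Lemma~\ref{general-error-dynamics}, replacing the constant scaling $\sqrt{2/\pi}$ by the data-dependent scaling $\varphi_t = \varphi(\Delta_t)$ and invoking the general Sign-RIP bound with scaling function $\varphi(X)$ in place of the $\ell_1/\ell_2$-RIP estimate used before. Throughout, set $Q_t \in \mathcal{Q}(\Delta_t)$ so that a subgradient has the form $Q_t U_t$, let $\bar Q_t = \varphi_t \Delta_t/\norm{\Delta_t}_F$ be its ``ideal'' counterpart, and note the update $E_{t+1} = E_t - \eta_t (I - u^\star u^{\star\top}) Q_t U_t$.

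For the Frobenius bound, I expand
\begin{equation}
    \norm{E_{t+1}}_F^2 = \norm{E_t}_F^2 - 2\eta_t \inner{E_t}{(I-u^\star u^{\star\top}) Q_t U_t} + \eta_t^2 \norm{(I-u^\star u^{\star\top}) Q_t U_t}_F^2.
\end{equation}
The key step is the cross term: using $\inner{E_t}{Q_t U_t} = \inner{Q_t}{E_t U_t^\top}$ with the rank of $E_t U_t^\top$ bounded by $r'$, Sign-RIP (which is assumed at rank $\min\{r'+1,d\}$) gives $\inner{Q_t - \bar Q_t}{E_t U_t^\top} \geq -\varphi_t \delta \norm{E_t U_t^\top}_F$, and then $\inner{\bar Q_t}{E_t U_t^\top} = \varphi_t \norm{E_t U_t^\top}_F^2/\norm{\Delta_t}_F$ by the same orthogonality calculation $u^{\star\top} E_t = 0$ used in the noiseless case, while $\inner{E_t}{u^\star u^{\star\top} Q_t U_t} = 0$ again. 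For the quadratic term I split $Q_t U_t = \bar Q_t U_t + (Q_t - \bar Q_t) U_t$. The ``ideal'' part satisfies $(I-u^\star u^{\star\top}) \bar Q_t U_t = \varphi_t E_t U_t^\top U_t/\norm{\Delta_t}_F$, whose Frobenius norm is at most $\varphi_t \norm{E_t U_t^\top}_F \norm{U_t}/\norm{\Delta_t}_F$; the deviation part is bounded using the fact that Sign-RIP implies the operator norm bound $\norm{Q_t - \bar Q_t} \leq \norm{Q_t - \bar Q_t}_{F,1} \leq \varphi_t \delta$, so $\norm{(Q_t - \bar Q_t) U_t}_F \leq \varphi_t \delta \norm{U_t}_F$. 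Combining these and using $\norm{U_t}_F^2 \leq 5$ (Fact~\ref{claim-U}) yields the factor $10 \varphi_t^2$ and the claimed form.

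For the spectral bound, I decompose $E_{t+1} = E_t(I - \eta_t \varphi_t U_t^\top U_t/\norm{\Delta_t}_F) - \eta_t (I-u^\star u^{\star\top})(Q_t - \bar Q_t) U_t$, applying the same identity $(I-u^\star u^{\star\top}) \bar Q_t U_t = \varphi_t E_t U_t^\top U_t/\norm{\Delta_t}_F$. The first summand has operator norm at most $\norm{I - \eta_t \varphi_t U_t^\top U_t/\norm{\Delta_t}_F}\cdot \norm{E_t}$, and for the second summand the projection is contractive, so the operator norm is at most $\eta_t \norm{Q_t - \bar Q_t}\cdot \norm{U_t} \leq \eta_t \varphi_t \delta (\norm{r_t} + \norm{E_t})$ using $\norm{U_t} \leq \norm{r_t} + \norm{E_t}$. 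The triangle inequality finishes the bound.

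The only nontrivial step is ensuring that every quantity to which Sign-RIP is applied is rank-compatible: $E_t U_t^\top$ and $V U_t^\top$ (for test $V$ in the variational characterization of $\norm{\cdot}_F$) have rank at most $r'$, and $\Delta_t$ has rank at most $r'+1$, so the assumed rank parameter $\min\{r'+1, d\}$ covers everything. Once this is checked, the rest is bookkeeping identical to the noiseless case with $\sqrt{2/\pi}$ replaced by $\varphi_t$.
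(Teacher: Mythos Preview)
Your proposal is correct and matches the paper's approach exactly: the paper's own proof of Lemma~\ref{error-dynamics-general} simply reads ``The proof is similar to that of Lemma~\ref{general-error-dynamics}. The details are omitted for brevity,'' and your write-up faithfully reconstructs those details with $\varphi_t$ in place of $\sqrt{2/\pi}$. Your explicit rank check (that $E_tU_t^\top$ has rank at most $r'$ and $\Delta_t$ has rank at most $r'+1$, so the assumed Sign-RIP parameter $\min\{r'+1,d\}$ suffices) and your observation that $\norm{Q_t-\bar Q_t}\le\norm{Q_t-\bar Q_t}_{F,1}\le\varphi_t\delta$ are the only points the paper leaves implicit, and both are handled correctly.
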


\begin{proof}
The proof is similar to that of Lemma~\ref{general-error-dynamics}. The details are omitted for brevity.
\end{proof}

Now, we are ready to present the proof of Proposition~\ref{error-dynamics-noisy}.

\begin{proof}[Proof of Proposition~\ref{error-dynamics-noisy}]

Based on the sign-RIP condition, the step sizes satisfy
\begin{equation}
    \eta_t=\frac{\eta_0\rho^t}{\norm{D}_F}\leq \frac{\eta_0\rho^t}{\varphi_t(1-\delta)}\leq \frac{2\eta_0\rho^t}{\varphi_t},
\end{equation}
where $D\in\mathcal{M}(U_tU_t-X^*)$.
For the Frobenius norm, we have
\begin{equation}
    \begin{aligned}
        \norm{E_{t+1}}_F^2&\leq \norm{E_t}_F^2+2\eta_t\varphi_t\left(-\frac{\norm{E_tU_t^{\top}}_F^2}{\norm{\Delta_t}_F}+\delta \norm{E_tU_t^{\top}}_F\right)+10\eta_t^2\varphi_t^2\left(\delta^2+\frac{\norm{E_tU_t^{\top}}_F^2}{\norm{\Delta_t}_F^2}\right)\\
        &\leq \norm{E_t}_F^2+2\eta_t\varphi_t\delta\norm{E_tU_t^{\top}}_F+10\delta^2\eta_t^2\varphi_t^2\\
        &\leq \norm{E_t}_F^2+4\delta\eta_0\rho^t\norm{E_tU_t^{\top}}_F+20\delta^2\eta_0^2\rho^{2t}.
    \end{aligned}
\end{equation}
where in the second inequality, we used the assumption $\eta_0\lesssim \delta\lesssim\|\Delta_t\|_F$, which implies
\begin{align}
    -2\eta_t\varphi_t\frac{\norm{E_tU_t^{\top}}_F^2}{\norm{\Delta_t}_F}+10\eta_t^2\varphi_t^2\frac{\norm{E_tU_t^{\top}}_F^2}{\norm{\Delta_t}_F^2}\leq 0
\end{align}
Furthermore, note that
\begin{equation}
    \begin{aligned}
        \norm{E_tU_t^{\top}}_F^2&=\norm{E_tE_t^{\top}}_F^2+\norm{E_tr_tu^{\star\top}}_F^2\\&\leq \norm{E_t}_F^4+\norm{E_t}_F^2\norm{r_t}^2\\&\leq (1+4)\norm{E_t}_F^2,
    \end{aligned}
\end{equation}
which implies
\begin{equation}
    \begin{aligned}
        \norm{E_{t+1}}_F^2&\leq \norm{E_t}_F^2+20\delta\eta_0\rho^t\norm{E_t}_F+20\delta^2\eta_0^2\rho^{2t}\\&\leq \left(\norm{E_t}_F+10\delta\eta_0\rho^t\right)^2.
    \end{aligned}
\end{equation}
This leads to $\norm{E_{t+1}}_F\leq \norm{E_t}_F+10\delta\eta_0\rho^t$.

For the spectral norm, since we suppose $\eta_0\lesssim \delta\lesssim\|\Delta_t\|_F$, we have $\norm{I-\frac{\eta_t\varphi_t U_t^{\top}U_t}{\norm{\Delta_t}_F}}\leq 1$. Combined with Lemma~\ref{error-dynamics-general}, this implies that
\begin{equation}
    \begin{aligned}
        \norm{E_{t+1}}&\leq \norm{E_t}+\delta\eta_t\varphi_t (\norm{r_t}+\norm{E_t})\leq \norm{E_t}+2\delta\eta_0\rho^t (\norm{r_t}+\norm{E_t}).
    \end{aligned}
\end{equation}
thereby completing the proof.
\end{proof}

\subsection{Proof of Proposition \ref{signal-dynamics-noisy}}
Similar to the proof of Proposition~\ref{signal-dynamics}, we first present a general result which holds for arbitrary learning rates.

\begin{lemma}
    \label{signal-dynamics-noisy-general}
    For any learning rate $\eta_t$, if $\norm{E_t}_F\leq 1, \norm{r_t}\leq 2$, then we have
    \begin{equation}
        \begin{aligned}
            &\norm{r_{t+1}-\left(1+\frac{\varphi_t\eta_t (1-\norm{r_t}^2)}{\norm{U_tU_t^{\top}-X^{\star}}_F}\right)r_t}\\\leq& \delta\eta_t \varphi_t(\norm{E_t}+\norm{r_t})+\frac{\varphi_t\eta_t }{\norm{U_tU_t^{\top}-X^{\star}}_F}\norm{E_t}^2\norm{r_t}.
        \end{aligned}
    \end{equation}
\end{lemma}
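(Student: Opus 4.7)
The plan is to mirror the proof of Lemma~\ref{lem:general-signal-dynamic} from the noiseless setting, replacing the ``population'' reference matrix $\sqrt{2/\pi}\,\Delta_t/\|\Delta_t\|_F$ with its noisy analogue $\bar M_t := \varphi_t\,\Delta_t/\|\Delta_t\|_F$ that appears in Definition~\ref{def_sign_RIP}. Concretely, I fix an arbitrary subgradient representative $M_t\in\frac{1}{m}\sum_{i=1}^m\sign(\langle A_i,\Delta_t\rangle-s_i)A_i$, so that the SubGD update projected onto $u^*$ reads $r_{t+1}=r_t-\eta_t U_t^\top M_t^\top u^*$. Adding and subtracting $\eta_t U_t^\top \bar M_t^\top u^*$ and using the triangle inequality, I get
\begin{equation*}
\left\|r_{t+1}-r_t+\eta_t\frac{\varphi_t\,U_t^\top\Delta_t u^*}{\|\Delta_t\|_F}\right\|\leq \eta_t\,\|U_t\|\cdot\|M_t-\bar M_t\|\cdot\|u^*\|.
\end{equation*}

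The first step I would then carry out is to convert the Sign-RIP bound, which is stated in the $\|\cdot\|_{F,r}$ norm, into the operator norm bound needed above. Since $\Delta_t=U_tU_t^\top-X^*$ has rank at most $r'+1$ and we apply the lemma under Sign-RIP with parameter $\min\{r'+1,d\}$, and since $\|M\|_{F,1}$ equals the operator norm and is dominated by $\|M\|_{F,r}$ for any $r\geq 1$, Definition~\ref{def_sign_RIP} yields $\|M_t-\bar M_t\|\leq \varphi_t\delta$. Combining this with $\|u^*\|=1$ and $\|U_t\|\leq\|u^*r_t^\top\|+\|E_t\|=\|r_t\|+\|E_t\|$ (which follows immediately from the decomposition $U_t=u^*r_t^\top+E_t$), the right-hand side is bounded by $\delta\eta_t\varphi_t(\|E_t\|+\|r_t\|)$, matching the first term in the claimed inequality.

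The second step is the exact algebraic simplification of $U_t^\top\Delta_t u^*$. Using $u^{*\top}E_t=0$, one gets $U_t^\top U_t=r_tr_t^\top+E_t^\top E_t$ and $U_t^\top u^*=r_t$, so
\begin{equation*}
U_t^\top\Delta_t u^*=U_t^\top U_t r_t-r_t=(\|r_t\|^2-1)r_t+E_t^\top E_tr_t.
\end{equation*}
Plugging this identity back into the displayed inequality and applying one more triangle inequality to split off the remainder $\eta_t\varphi_t E_t^\top E_t r_t/\|\Delta_t\|_F$, which is controlled by $\eta_t\varphi_t\|E_t\|^2\|r_t\|/\|\Delta_t\|_F$, gives precisely the bound stated in the lemma.

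The main obstacle is really bookkeeping rather than analysis: I must verify that the rank used in the Sign-RIP invocation correctly covers $\mathrm{rank}(\Delta_t)\leq r'+1$, and that passing from $\|\cdot\|_{F,r}$ to the operator norm of $M_t-\bar M_t$ is legitimate (it is, via $\|\cdot\|_{F,1}\leq\|\cdot\|_{F,r}$). No new probabilistic or analytic tool is needed beyond those already established for the noiseless analogue; the scaling $\sqrt{2/\pi}$ simply gets replaced by $\varphi_t$ at every occurrence, which is precisely why the final bound has $\varphi_t$ as a global prefactor.
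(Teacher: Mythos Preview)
Your proposal is correct and follows exactly the approach the paper intends: the paper's own proof of this lemma consists of a single line, ``The proof is similar to that of Lemma~\ref{lem:general-signal-dynamic}. The details are omitted for brevity,'' and you have carried out precisely that analogy, replacing $\sqrt{2/\pi}$ by $\varphi_t$ throughout. Your added remark that $\|\cdot\|_{F,1}$ equals the operator norm and is dominated by $\|\cdot\|_{F,r}$ makes explicit a step the paper leaves implicit when it invokes Sign-RIP to control $\|M_t-\bar M_t\|$.
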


\begin{proof}
    The proof is similar to that of Lemma~\ref{lem:general-signal-dynamic}. The details are omitted for brevity.
\end{proof}

\begin{proof}[Proof of Proposition~\ref{signal-dynamics-noisy}]
Assuming that $\delta\leq \frac{1}{2}$, we have 
\begin{equation}
    \left|\eta_t-\frac{\eta_0}{\varphi_t}\rho^t\right| = \left|\frac{\eta_0}{\norm{D}_F}\rho^t-\frac{\eta_0}{\varphi_t}\rho^t\right|\leq \frac{\delta \varphi_t\eta_0\rho^t}{(1-\delta)\varphi_t^2}\leq \frac{2\delta\eta_0\rho^t}{\varphi_t}.
\end{equation}
Combined with Lemma~\ref{signal-dynamics-noisy-general}, this implies that
\begin{equation}
    \begin{aligned}
    \norm{r_{t+1}-\left(1+\frac{\eta_0\rho^t (1-\norm{r_t}^2)}{\norm{U_tU_t^{\top}-X^{\star}}_F}\right)r_t}&\leq 2\delta\eta_0\rho^t(\norm{E_t}+\norm{r_t})+\frac{2\eta_0\rho^t }{\norm{U_tU_t^{\top}-X^{\star}}_F}\norm{E_t}^2\norm{r_t}\\
    &+\frac{2\delta\eta_0\rho^t}{\norm{U_tU_t^{\top}-X^{\star}}_F}(1-\norm{r_t}^2)\norm{r_t}.
    \end{aligned}
\end{equation}
which completes the proof.
\end{proof}

\subsection{Proof of Theorem \ref{convergence-theorem-noisy}}
Recall that $T=\Theta(\log\frac{1}{\alpha}/\eta_0)$.
First, we show that the error term remains small during the iterations of SubGD.
Proposition \ref{error-dynamics-noisy} leads to
    \begin{equation}\label{eq_error_noisy}
        \norm{E_t}\leq\norm{E_t}_F=\norm{E_0}_F+\sum_{t=1}^{t}\left(\norm{E_t}_F-\norm{E_{t-1}}_F\right)\leq \sqrt[4]{r'}\sqrt{\delta}\alpha+10\delta\eta_0T\lesssim\delta\log\frac{r'}{\delta},
    \end{equation}
    provided that $\norm{\Delta_t}_F\geq 1-\norm{r_t}^2\gtrsim \delta$. To verify this assumption, we show that $\norm{\Delta_t}_F\geq 1-\norm{r_t}^2\gtrsim \delta\log\frac{r^{\prime}}{\delta}$ for every $t\leq\bar{T}$, where $\bar{T}\gtrsim \log\frac{1}{\alpha}/\eta_0$. To this goal, first we present a preliminary claim
    \begin{claim}
    \label{claim::E_r}
        For every $0\leq t\leq T$, we have $\norm{E_t}\leq \norm{r_t}$.
    \end{claim}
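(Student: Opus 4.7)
The plan is to argue by induction on $t$, mirroring the noiseless argument for Lemma~\ref{lemma::E_t-r_t} while tracking the extra $1/\norm{\Delta_t}_F$ factor that appears in the noisy signal dynamics of Lemma~\ref{signal-dynamics-noisy}. The base case $t=0$ is immediate from Lemma~\ref{lem::spectral-init}: since $\norm{r_0}=\alpha\sqrt{\varphi_0}(1\pm O(\sqrt{\delta}))$ and $\norm{E_0}=O(\alpha\sqrt{\varphi_0}\sqrt{\delta})$, the inequality $\norm{E_0}\leq\norm{r_0}$ holds once $\delta$ is sufficiently small.

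For the inductive step I assume $\norm{E_s}\leq\norm{r_s}$ for all $s\leq t$. The spectral-norm estimate derived inside the proof of Proposition~\ref{error-dynamics-noisy}, together with the induction hypothesis, gives
$$\norm{E_{t+1}}\leq \norm{E_t}+2\delta\eta_0\rho^t(\norm{r_t}+\norm{E_t})\leq(1+4\delta\eta_0\rho^t)\norm{r_t},$$
so it suffices to show the matching signal bound $\norm{r_{t+1}}\geq(1+4\delta\eta_0\rho^t)\norm{r_t}$. Applying Lemma~\ref{signal-dynamics-noisy} via the triangle inequality, and bounding the three residuals by $4\delta\eta_0\rho^t\norm{r_t}$, $2\eta_0\rho^t\norm{r_t}^3/\norm{\Delta_t}_F$ (using the induction hypothesis $\norm{E_t}\leq\norm{r_t}$), and $2\delta\eta_0\rho^t\norm{r_t}$ (using $|1-\norm{r_t}^2|\leq\norm{\Delta_t}_F$, which comes directly from the identity established in the proof of Lemma~\ref{decomposition-error-signal}), I would arrive at
$$\norm{r_{t+1}}\geq\left(1+\frac{\eta_0\rho^t(1-3\norm{r_t}^2)}{\norm{\Delta_t}_F}-6\delta\eta_0\rho^t\right)\norm{r_t}.$$

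The main obstacle is that $\norm{\Delta_t}_F$ appears in the denominator and can be small, while the numerator $1-3\norm{r_t}^2$ need not be positive. I would resolve this with a case split on $\norm{r_t}$. In the regime $\norm{r_t}\leq 1/2$, Lemma~\ref{decomposition-error-signal} yields $3/4\leq\norm{\Delta_t}_F\leq\sqrt{17/8}$ (using the standing assumption $\norm{E_t}_F\leq 1$), and the numerator satisfies $1-3\norm{r_t}^2\geq 1/4$; the signal bound then becomes $\norm{r_{t+1}}\geq(1+c\eta_0\rho^t)\norm{r_t}$ for an absolute constant $c>0$, which beats $(1+4\delta\eta_0\rho^t)\norm{r_t}$ once $\delta$ is small enough. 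In the complementary regime $\norm{r_t}>1/2$, iterating the Frobenius error dynamics $\norm{E_{t+1}}_F\leq\norm{E_t}_F+10\delta\eta_0\rho^t$ and summing the geometric series (with $1-\rho=\Theta(\eta_0/\log(1/\alpha))$) gives $\norm{E_{t+1}}\leq\norm{E_{t+1}}_F\lesssim\norm{E_0}_F+\delta\log(r'/\delta)\lesssim\delta\log(r'/\delta)$; combined with the elementary bound $\norm{\Delta_t}_F\geq\norm{E_tE_t^\top}_F\geq\norm{E_t}^2$ used to control $T_B$, Lemma~\ref{signal-dynamics-noisy} shows that $\norm{r_{t+1}}$ cannot drop below $1/4$ in a single step for $\delta$ small enough, so $\norm{E_{t+1}}\leq\norm{r_{t+1}}$ holds in this regime as well and the induction closes.
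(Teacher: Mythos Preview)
Your approach---induction on $t$, mirroring Lemma~\ref{lemma::E_t-r_t}, with a case split on whether $\norm{r_t}\leq 1/2$---is exactly what the paper intends (the paper simply defers to Lemma~\ref{lemma::E_t-r_t} and omits the details). Your case $\norm{r_t}\leq 1/2$ is correct and is the direct analogue of the noiseless argument.

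Two issues in the complementary regime. First, a minor one: there is no object ``$T_B$'' anywhere in this paper; the bound $\norm{\Delta_t}_F\geq\norm{E_tE_t^\top}_F\geq\norm{E_t}^2$ follows immediately from the orthogonal decomposition in the proof of Lemma~\ref{decomposition-error-signal}, so just cite that.

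Second, a more substantive gap: when you ``iterate the Frobenius error dynamics $\norm{E_{s+1}}_F\leq\norm{E_s}_F+10\delta\eta_0\rho^s$'', you are invoking Lemma~\ref{error-dynamics-noisy}, whose hypothesis requires $\delta\lesssim\norm{\Delta_s}_F$ at every step $s\leq t$. For steps with $\norm{r_s}\leq 1/2$ this is automatic since $\norm{\Delta_s}_F\geq 1-\norm{r_s}^2\geq 3/4$, but once $\norm{r_s}>1/2$ the quantity $\norm{\Delta_s}_F$ could in principle be small, and your argument as written does not exclude this. In the paper this circularity is resolved by context: Claim~\ref{claim::E_r} is only ever used either for $t\leq\bar T$ (where Claim~\ref{claim1} simultaneously guarantees $\norm{r_t}\leq 1/2$) or for $\bar T<t\leq T$ under the standing assumption $\norm{\Delta_t}_F\gtrsim\delta\log(r'/\delta)$ (the proof of Theorem~\ref{convergence-theorem-noisy} having already terminated if that ever fails). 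So your plan goes through, but you should make explicit that Claim~\ref{claim::E_r}, Claim~\ref{claim1}, and the error bound~\eqref{eq_error_noisy} are being established together by a single strengthened induction rather than proving Claim~\ref{claim::E_r} in isolation.
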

    \begin{proof}
       It follows an argument analogous to the proof of Lemma~\ref{lemma::E_t-r_t}. The details are omitted for brevity.
    \end{proof}
    Based on this claim, we are ready to show that $\norm{\Delta_t}_F\geq 1-\norm{r_t}^2\gtrsim \delta\log\frac{r^{\prime}}{\delta}$ for every $t\leq\bar{T}$, where $\bar{T}\gtrsim \log\frac{1}{\alpha}/\eta_0$.
    \begin{claim}\label{claim1}
    Suppose that $\delta\leq 1/6$. Then, for every $0\leq t\lesssim \log\frac{1}{\alpha}/\eta_0$, we have $\norm{r_t}\leq \frac{1}{2}$. 
    \end{claim}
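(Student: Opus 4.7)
\textbf{Proof sketch for Claim~\ref{claim1}.} The plan is to proceed by induction on $t$, using the geometrically decaying step size $\eta_0\rho^t$ to control the per-step growth of $\|r_s\|$. The base case follows from Lemma~\ref{lem::spectral-init}, which gives $\|r_0\| = \alpha\sqrt{\varphi_0}(1\pm O(\sqrt{\delta}))\leq 2\alpha$; this is well below $1/2$ under the prescribed scaling $\alpha\asymp\sqrt{\delta}/\sqrt[4]{r'}$ with $\delta$ small.

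For the inductive step, assume $\|r_s\|\leq 1/2$ for all $s\leq t$. Then Claim~\ref{claim::E_r} gives $\|E_s\|\leq \|r_s\|\leq 1/2$, and Lemma~\ref{decomposition-error-signal} yields $\|\Delta_s\|_F\geq 1-\|r_s\|^2\geq 3/4$. Feeding these into the right-hand side of Lemma~\ref{signal-dynamics-noisy} and applying the triangle inequality, every term can be absorbed into a constant multiple of $\eta_0\rho^s\|r_s\|$: the signal-amplification coefficient $(1-\|r_s\|^2)/\|\Delta_s\|_F$ is at most $1$; the error-coupling ratio satisfies $\|E_s\|^2/\|\Delta_s\|_F\leq (4/3)\|r_s\|^2\leq 1/3$; and the two $\delta$-dependent terms are small thanks to $\delta\leq 1/6$. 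Collecting everything yields a multiplicative bound of the form
\begin{equation*}
    \|r_{s+1}\|\leq (1+C\eta_0\rho^s)\|r_s\|
\end{equation*}
with an explicit constant $C$ that a direct computation puts near $3$.

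Iterating and using $1+x\leq e^x$ gives
\begin{equation*}
    \|r_{t+1}\|\leq \|r_0\|\exp\!\left(C\eta_0\sum_{s=0}^{t}\rho^s\right)\leq \|r_0\|\exp\!\left(\frac{C\eta_0}{1-\rho}\right).
\end{equation*}
Writing $1-\rho = c_\rho\eta_0/\log(1/\alpha)$ (so $c_\rho$ is the implicit constant hidden in the $\Theta$ of the theorem), this simplifies to $\|r_{t+1}\|\leq 2\alpha\cdot \alpha^{-C/c_\rho}$. Choosing $c_\rho\geq 2C$ (permissible because the theorem only fixes $\rho$ up to a constant) yields $\|r_{t+1}\|\leq 2\alpha^{1/2}\leq 1/2$ whenever $\alpha$ is sufficiently small, which holds under the assumed scaling $\alpha\asymp\sqrt{\delta}/\sqrt[4]{r'}$. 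In fact this estimate is valid for every $t$, which is stronger than the claim, but the stated range $t\lesssim \log(1/\alpha)/\eta_0$ suffices for the subsequent arguments.

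\textbf{Main obstacle.} The delicate part is not any single estimate but the bookkeeping of constants: one must verify that the per-step growth constant $C$ obtained from Lemma~\ref{signal-dynamics-noisy} is compatible with the constant $c_\rho$ hidden in the prescription $\rho = 1-\Theta(\eta_0/\log(1/\alpha))$, so that $C/c_\rho<1$ and $\alpha^{1-C/c_\rho}\to 0$. All other ingredients---the induction hypothesis, the comparison $\|E_s\|\leq \|r_s\|$ from Claim~\ref{claim::E_r}, and the lower bound $\|\Delta_s\|_F\geq 1-\|r_s\|^2$ from Lemma~\ref{decomposition-error-signal}---are already in place.
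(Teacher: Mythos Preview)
Your inductive derivation of the per-step bound $\|r_{s+1}\|\leq(1+C\eta_0\rho^s)\|r_s\|$ matches the paper's (their display labeled \eqref{eq::141}), and the base case is the same. The divergence comes at the last step: you pass to the infinite geometric series $\sum_{s\geq 0}\rho^s=1/(1-\rho)$ and then force $c_\rho\geq 2C$ so that the exponent $C\eta_0/(1-\rho)=\tfrac{C}{c_\rho}\log(1/\alpha)$ stays below $\log(1/\alpha)$. The paper instead keeps the \emph{finite} partial sum $\eta_0(1-\rho^{t})/(1-\rho)$ and uses the stated range $t\lesssim\log(1/\alpha)/\eta_0$ to make the factor $(1-\rho^{t})$ small, without touching $c_\rho$.

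This difference is not just bookkeeping; it is a genuine gap. The constant $c_\rho$ is \emph{not} freely adjustable inside the proof of Theorem~\ref{convergence-theorem-noisy}. The subsequent signal-growth step, which shows $\|r_T\|\geq 1$ from the lower bound $\|r_{t+1}\|\geq(1+\Omega(1)\eta_0\rho^t)\|r_t\|$, needs
\[
\frac{\eta_0}{2}\cdot\frac{1-\rho^T}{1-\rho}\;\geq\;\log\frac{1}{\alpha},
\]
and since $1-\rho^T\leq 1$ this forces $c_\rho$ to be \emph{small} (at most a fraction of the lower-bound growth constant, which is itself smaller than your $C$). With such a $c_\rho$, your estimate becomes $\|r_{t+1}\|\leq 2\alpha^{1-C/c_\rho}$ with $C/c_\rho\gg 1$, which is useless. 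Put differently: if $c_\rho$ were large enough for your argument to close, the step sizes would decay so fast that the signal could never reach $1$---your own remark that the bound ``is valid for every $t$'' is a warning sign, because the whole point of the surrounding argument is that $\|r_t\|$ eventually \emph{exceeds} $1/2$.

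The fix is the one the paper uses: retain the partial sum and let the restriction $t\lesssim\log(1/\alpha)/\eta_0$ (with a suitably small implicit constant, roughly $1/C$) absorb the discrepancy through the factor $1-\rho^t$, leaving $c_\rho$ free for the later convergence step.
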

    \begin{proof}
       The statement holds for $t = 0$ since $\norm{r_0}=\Theta(\alpha)$. Now, suppose that $\norm{r_t}\leq \frac{1}{2}$. Then, we have 
       \begin{equation}
        \begin{aligned}
            \norm{r_{t+1}}&\leq \left(1+\frac{4}{3}\frac{\eta_0\rho^t}{\norm{\Delta_t}_F}(1-\norm{r_t}^2)\right)\norm{r_t}+2\delta\eta_0\rho^t(\norm{E_t}+\norm{r_t})+\frac{2\eta_0\rho^t }{\norm{\Delta_t}_F}\norm{E_t}^2\norm{r_t}\\
            &\stackrel{(a)}{\leq}  \left(1+O\left(1\right)\eta_0\rho^t\right)\norm{r_t}+2\delta\eta_0\rho^t(\norm{E_t}+\norm{r_t})\\&\stackrel{(b)}{\leq} \left(1+O(1)\eta_0\rho^t\right)\norm{r_t}.
        \end{aligned}
        \label{eq::141}
    \end{equation}
    where in (a) we use the fact that $\norm{\Delta_t}_F\geq 1-\norm{r_t}^2\geq 3/4$ and $\norm{E_t}\lesssim 1$; and in (b) we use Claim~\ref{claim::E_r}. Without loss of generality, we assume that $\norm{r_{t+1}}\leq \left(1+\eta_0\rho^t\right)\norm{r_t}$. Hence, it suffices to show that
    \begin{equation}
        \norm{r_0}\prod_{s=1}^{t}\left(1+\eta_0\rho^s\right)=\Theta(\alpha) \prod_{s=1}^{t}\left(1+\eta_0\rho^s\right)\leq \frac{1}{2},
    \end{equation}
    for every $0\leq t\lesssim \log\frac{1}{\alpha}/\eta_0$. This is equivalent to
    \begin{equation}
        \sum_{s=1}^{t}\log\left(1+\eta_0\rho^s\right)\leq \log\frac{1}{2\alpha}.
    \end{equation}
    On the other hand, note that 
    \begin{equation}
        \sum_{s=1}^{t}\log\left(1+\eta_0\rho^s\right)\leq \sum_{s=1}^{t}\eta_0\rho^s\leq\eta_0\frac{1-\rho^t}{1-\rho}\leq C\log\frac{1}{\alpha}\left(1-\rho^t\right).
    \end{equation}
    Therefore, to finish the proof, we need to show that $C\log\frac{1}{\alpha}\left(1-\rho^t\right)\leq \log\frac{1}{2\alpha}$, which implies $1-\frac{1}{C}+\frac{\log 2}{C\log\frac{1}{\alpha}}\leq \rho^T$. This can be easily verified for every $t\lesssim \log\frac{1}{\alpha}/\eta_0$, by noting that $\rho=1-\Theta(\eta_0/\log\frac{1}{\alpha})$.
    \end{proof}
    

Based on the above claim and upon choosing $\bar{T}\asymp \log\frac{1}{\alpha}/\eta_0$, the error term is bounded as~\eqref{eq_error_noisy} for every $t\leq \bar{T}$. Now, note that the proof is completed if $\norm{\Delta_t}_F\lesssim\delta\log\frac{r^{\prime}}{\delta}$ for some $\bar{T}\leq t\leq T$. Therefore, suppose that $\norm{\Delta_t}_F\gtrsim\delta\log\frac{r^{\prime}}{\delta}$ for every $\bar{T}\leq t\leq T$. This implies that the error bound~\eqref{eq_error_noisy} holds for every $\bar{T}\leq t\leq T$. Moreover, we assume that $1-\norm{r_t}^2\geq 3 \norm{E_t}_F$, since otherwise, we have $1-\norm{r_t}^2\lesssim \delta\log(r'/\delta)$, and the proof is completed together with $\|E_t\|_t\leq \delta\log(r'/\delta)$ and Lemma \ref{decomposition-error-signal}. This leads to
    \begin{equation}\label{eq_eup_low}
        1-\norm{r_t}^2\leq\norm{\Delta_t}_F\leq 1-\norm{r_t}^2+\norm{E_t}\norm{r_t}+\norm{E_t}_F^2\leq \frac{13}{9}(1-\norm{r_t}^2).
    \end{equation}
    assuming that $\norm{r_t}\leq 1$. Then, according to Proposition \ref{signal-dynamics-noisy}, we have
    \begin{equation}
        \begin{aligned}
            \norm{r_{t+1}}&\geq \left(1+\frac{2}{3}\frac{\eta_0\rho^t}{\norm{\Delta_t}_F}(1-\norm{r_t}^2)\right)\norm{r_t}-2\delta\eta_0\rho^t(\norm{E_t}+\norm{r_t})-\frac{2\eta_0\rho^t }{\norm{\Delta_t}_F}\norm{E_t}^2\norm{r_t}\\&\stackrel{(a)}{\geq} \left(1+\Omega(1)\eta_0\rho^t\right)\norm{r_t}-2\delta\eta_0\rho^t\norm{E_t}.
        \end{aligned}
    \end{equation}
    where in (a) we used $\norm{E_t}^2\leq (1-\norm{r_t}^2)/9$, inequality~\eqref{eq_eup_low}, and $\delta\lesssim 1$. To proceed, note that $\norm{E_t}\leq \norm{r_t}$ due to Claim~\ref{claim::E_r}. Hence,
    we have
    \begin{equation}
     \norm{r_{t+1}}\geq \left(1+\Omega(1)\eta_0\rho^t\right)\norm{r_t}.
    \end{equation}
    for every $0\leq t\leq T$. Now, it remains to show that after $T=O\left(\log\left(\frac{1}{\alpha}\right)/\eta_0\right)$ iterations, the signal term approaches $1$. 
    Without loss of generality, we assume that $\norm{r_{t+1}}\geq (1+\eta_0\rho^t)\norm{r_t}$, which implies $\norm{r_{T}}\geq\alpha \prod_{t=1}^{T}\left(1+\eta_0\rho^t\right)$. Taking the logarithm of the right hand side leads to
    \begin{equation}
        \begin{aligned}
            \sum_{t=1}^{T}\log\left(1+\eta_0\rho^t\right)\geq\sum_{t=1}^{T}\frac{\eta_0\rho^t}{1+\eta_0\rho^t}\geq \frac{\eta_0}{2}\frac{1-\rho^{T}}{1-\rho}.
        \end{aligned}
    \end{equation}
    where we used the lower bound $\log(1+x)\geq \frac{x}{1+x}$ for $x\geq -1$. Now, upon defining $\gamma=1-\rho$, we have
    \begin{equation}
        \begin{aligned}
            \frac{\eta_0}{2}\frac{1-\rho^{T}}{1-\rho}&=\frac{\eta_0}{2}\frac{1-(1-\gamma)^T}{\gamma}\\&\geq \frac{\eta_0}{2\gamma}\left(1-\left(1-\frac{\gamma T}{1+(T-1)\gamma}\right)\right)\\&\geq\frac{\eta_0}{2\gamma}\frac{\gamma T}{2}.
        \end{aligned}
    \end{equation}
    where we used the basic inequality $(1-x)^r\leq 1-\frac{rx}{1+(r-1)x}$ for $x\in [0,1], r>1$. Now, recalling $T=\Theta\left(\log\frac{1}{\alpha}/\eta_0\right)$ and $\gamma=\Theta\left(\eta_0/\log\frac{1}{\alpha}\right)$, we have $\frac{\eta_0}{2\gamma}\frac{\gamma T}{2}\geq \log (1/\alpha)$, which implies that
    after $T=\Theta(\log\frac{1}{\alpha}/\eta_0)$ iterations, the signal term satisfies $\|r_T\|\geq 1$. So, the only remaining part is to show that $\norm{r_T}=1\pm O(\delta\log\frac{r^{\prime}}{\delta})$. Recall that, based on the definition of $\bar{T}$, we have $\|r_{\bar{T}}\|<1$. Now, we assume that $\norm{r_{T-1}}< 1$, and $\norm{r_{T}}\geq 1$. Note that this assumption is without loss of generality, since $\bar{T}$ and $T$ have the same order. Then we have the following claim.
    \begin{claim}
    Either $1-\delta\log\frac{r'}{\delta} \lesssim \|r_{T-1}\|^2$, or $\|r_T\|\lesssim 1+\delta^2\log\frac{r'}{\delta}$.
    \end{claim}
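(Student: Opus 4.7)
Set $s := \|r_{T-1}\|$, so by hypothesis $s < 1$ and $\|r_T\| \geq 1$. The argument splits according to whether the error term $\|E_{T-1}\|_F$ dominates $1-s^2$ or not. Throughout I will use the already-established uniform error bound $\|E_{T-1}\|_F \lesssim \delta\log(r'/\delta)$, which follows from Proposition~\ref{error-dynamics-noisy}, the spectral initialization Lemma~\ref{lem::spectral-init}, and the choice $T\asymp\log(1/\alpha)/\eta_0$ (this is exactly the chain~\eqref{eq_error_noisy} in the excerpt).

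\emph{Error-dominated case.} If $\|E_{T-1}\|_F \geq \tfrac{1}{3}(1-s^2)$, then $1 - s^2 \leq 3\|E_{T-1}\|_F \lesssim \delta\log(r'/\delta)$, so $s^2 \geq 1 - O(\delta\log(r'/\delta))$, i.e.\ the first branch of the dichotomy. \emph{Signal-dominated case.} Otherwise $3\|E_{T-1}\|_F < 1-s^2$. Lemma~\ref{decomposition-error-signal} (in the strengthened two-sided form already derived in~\eqref{eq_eup_low}) then gives $1-s^2 \leq \|\Delta_{T-1}\|_F \leq O(1-s^2)$. Applying the signal dynamics of Proposition~\ref{signal-dynamics-noisy} at $t = T-1$ and using a triangle inequality, the leading term $\eta_0\rho^{T-1}(1-s^2)s/\|\Delta_{T-1}\|_F$ is bounded by $\eta_0 s$ via $\|\Delta_{T-1}\|_F \geq 1-s^2$; the cubic-in-error contribution $\eta_0\rho^{T-1}\|E_{T-1}\|^2 s/\|\Delta_{T-1}\|_F$ is bounded by $\tfrac{1}{3}\eta_0\|E_{T-1}\|_F \lesssim \eta_0\delta\log(r'/\delta)$, after cancelling one factor of $\|E_{T-1}\|$ against $\|\Delta_{T-1}\|_F$ using the case hypothesis; and the remaining $\delta$-coefficient perturbations are $O(\eta_0\delta)$. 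Adding these,
\[
\|r_T\| \leq (1+\eta_0)s + O\!\left(\eta_0\delta\log(r'/\delta)\right).
\]

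A final sub-split closes the argument. If $(1+\eta_0)s \leq 1$, the displayed inequality yields $\|r_T\| \leq 1 + O(\eta_0\delta\log(r'/\delta)) \leq 1 + O(\delta^2\log(r'/\delta))$, where the last step uses $\eta_0\lesssim\delta$; this is the second branch. Otherwise $s > 1/(1+\eta_0)$, whence $1-s \leq \eta_0/(1+\eta_0) \lesssim \delta$, and therefore $1-s^2 \leq 2(1-s) \lesssim \delta \leq \delta\log(r'/\delta)$, producing the first branch. The only delicate point is the signal-dominated case: the cubic term $\eta_0\|E_{T-1}\|^2/\|\Delta_{T-1}\|_F$ looks potentially dangerous because $\|\Delta_{T-1}\|_F$ sits in the denominator, and it is precisely the case hypothesis $3\|E_{T-1}\|_F \leq 1-s^2 \leq \|\Delta_{T-1}\|_F$ that turns this term into $\tfrac{1}{3}\|E_{T-1}\|_F$ and hence absorbs it into the $O(\eta_0\delta\log(r'/\delta))$ residual. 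Once that cancellation is secured, the rest is a routine sharpening of the noiseless argument~\eqref{eq_x}.
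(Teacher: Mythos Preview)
Your argument is correct, and the opening case split (error-dominated vs.\ signal-dominated, hinging on whether $3\|E_{T-1}\|_F$ exceeds $1-s^2$) coincides with the paper's. Where you diverge is in the signal-dominated branch. The paper does not perform your final sub-split on $(1+\eta_0)s\leq 1$; instead it exploits the stronger information $\|\Delta_{T-1}\|_F\gtrsim\delta\log(r'/\delta)$ together with $\eta_0\lesssim\delta$ to get $\eta_0/\|\Delta_{T-1}\|_F\lesssim 1/\log(r'/\delta)$, and then keeps the factor $(1-s)$ in the leading term rather than discarding it. This yields
\[
\|r_T\|-\|r_{T-1}\| \;\lesssim\; \frac{1}{\log(r'/\delta)}\,(1-\|r_{T-1}\|) \;+\; \delta^2\log\frac{r'}{\delta},
\]
and since $1-\|r_{T-1}\|\leq \|r_T\|-\|r_{T-1}\|$ the first term is absorbed into the left-hand side by a bootstrapping move, giving the second branch directly. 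Your route avoids this self-referential step at the cost of one more case distinction; the paper's route is a one-liner once the small-coefficient bound on $\eta_0/\|\Delta_{T-1}\|_F$ is in hand. Both handle the cubic-in-error term the same way in spirit (one factor of $\|E_{T-1}\|$ cancels against the denominator), though the paper cancels via the $\delta\log(r'/\delta)$ lower bound on $\|\Delta_{T-1}\|_F$ whereas you cancel via the case hypothesis $3\|E_{T-1}\|_F\leq \|\Delta_{T-1}\|_F$. As a minor remark, you invoke the two-sided bound~\eqref{eq_eup_low} but only ever use its lower half.
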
 
    \begin{proof}
      Assume that $\norm{\Delta_{T-1}}_F\geq 1 - \norm{r_{T-1}}^2\gtrsim \delta\log\frac{r'}{\delta}$. Then, by Proposition~\ref{signal-dynamics-noisy}, we have
       \begin{equation}
           \begin{aligned}
                \norm{r_{T}}-\norm{r_{T-1}}&\leq \frac{4}{3}\frac{\eta_0\rho^{T-1}(1-\norm{r_{T-1}}^2)}{\norm{\Delta_{T-1}}_F}\norm{r_{T-1}}+\frac{2\eta_0\rho^{T-1}\norm{E_{T-1}}^2}{\norm{\Delta_{T-1}}_F}\norm{r_{T-1}}+O(\delta\eta_0\rho^T)\\
                &\lesssim\frac{1}{\log\frac{r'}{\delta}} (1-\norm{r_{T-1}})+\delta^2\log\frac{r^{\prime}}{\delta}\\
                &\lesssim\frac{1}{\log\frac{r'}{\delta}} (\norm{r_T}-\norm{r_{T-1}})+\delta^2\log\frac{r^{\prime}}{\delta}
           \end{aligned}
       \end{equation}
       This implies that, for sufficiently small $\delta$, we have $\norm{r_{T}}-\norm{r_{T-1}}=O(\delta^2\log\frac{r^{\prime}}{\delta})$, thereby completing the proof.
    \end{proof}
    In summary, we showed that $1-\delta\log\frac{r'}{\delta} \lesssim \|r_{T-1}\|^2\leq 1$, or $1\leq \|r_T\|\lesssim 1+\delta^2\log\frac{r'}{\delta}$. On the other hand, we know that $\norm{E_t}\lesssim \delta\log\frac{r'}{\delta}$ for every $t\leq T$. This together with Lemma~\ref{decomposition-error-signal} completes the proof.$\hfill\square$

\section{Proof of Proposition \ref{uniform-convergence-noisy-l2}}
We divide our analysis into two cases. In the first case, we assume $p\sigma^2=\Omega(1)$. We have
\begin{equation}
    \begin{aligned}
        \sup_{X\in\mathbb{S}}\norm{Q(X)-X}_F&=\sup_{X,Y\in\mathbb{S}}\left|\frac{1}{m}\sum_{i=1}^{m}\inner{A_i}{X}\inner{A_i}{Y}+\frac{1}{m}\sum_{i\in S}s_i\inner{A_i}{Y}-\inner{X}{Y}\right|\\
         &\stackrel{(a)}{\geq} \sup_{Y\in\mathbb{S}}\left|\frac{1}{m}\sum_{i=1}^{m}\inner{A_i}{Y}^2+\frac{1}{m}\sum_{i\in S}s_i\inner{A_i}{Y}-1\right|\\
         & \geq \sup_{Y\in\mathbb{S}}\left|\frac{1}{m}\sum_{i\in S}s_i\inner{A_i}{Y}\right|-\sup_{Y\in\mathbb{S}}\left|\frac{1}{m}\sum_{i=1}^{m}\inner{A_i}{Y}^2-1\right|\\
        &\stackrel{(b)}{=}\norm{\frac{1}{m}\sum_{i\in S}s_iA_i}_F - \sup_{Y\in\mathbb{S}}\left|\frac{1}{m}\sum_{i=1}^{m}\inner{A_i}{Y}^2-1\right|.
    \end{aligned}
\end{equation}
where in (a) we add a constraint $X=Y$ to the supremum; and in (b) we use the Cauchy-Schwartz inequality and the variational form of the Frobenius norm. By the $\ell_2$-RIP for Gaussian measurements (Lemma \ref{l2-RIP}), we have
\begin{equation}
\label{eq70}
    \sup_{X,Y\in\mathbb{S}}\left|\frac{1}{m}\sum_{i=1}^{m}\inner{A_i}{X}\inner{A_i}{Y}+\frac{1}{m}\sum_{i\in S}s_i\inner{A_i}{Y}-\langle X, Y\rangle\right|\geq \norm{\frac{1}{m}\sum_{i\in S}s_iA_i}_F - \delta_1
\end{equation}
with probability of at least $1-Ce^{-cm\delta_1^2}$, given $m\gtrsim d^2$. The expectation and tail bound of $\norm{\frac{1}{m}\sum_{i\in S}s_iA_i}_F$ is provided in the following lemma.
\begin{lemma}
\label{concentration-lemma}
    For any $0<t<1$, we have
    \begin{equation}
       \mathbb{P}\left(\left|\norm{\frac{1}{m}\sum_{i\in S}s_iA_i}_F-\E\left[\norm{\frac{1}{m}\sum_{i\in S}s_iA_i}_F\right]\right|\geq t\right)\leq 2e^{-\frac{Cmt^2}{p\sigma^2d^2}},
     \end{equation}
    where $C$ is a universal constant. Moreover, the expectation is lower bounded as
    \begin{equation}
        \E\left[\norm{\frac{1}{m}\sum_{i\in S}s_iA_i}_F\right]\gtrsim \sqrt{\frac{p\sigma^2d^2}{m}}.
    \end{equation}
\end{lemma}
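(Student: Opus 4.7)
The plan is to observe that conditional on $s$, the entries of $M:=\frac{1}{m}\sum_{i\in S}s_iA_i$ are i.i.d.\ $\mathcal{N}(0,\tau^2)$ with $\tau:=\frac{1}{m}\sqrt{\sum_{i\in S}s_i^2}$, since the entries of $\{A_i\}_{i\in S}$ are i.i.d.\ standard Gaussian and independent of $s$. Consequently, $\|M\|_F\overset{d}{=}\tau\cdot\|G\|_F$, where $G\in\mathbb{R}^{d\times d}$ has i.i.d.\ $\mathcal{N}(0,1)$ entries \emph{independent of} $\tau$. This factorization is the crux of the argument: both parts of the lemma then reduce to properties of two one-dimensional chi-type random variables multiplied together.

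For the expectation lower bound, independence gives $\mathbb{E}\|M\|_F=\mathbb{E}[\tau]\cdot\mathbb{E}\|G\|_F$. Standard formulas for the chi distribution yield $\mathbb{E}\|G\|_F=\sqrt{2}\,\Gamma((d^2+1)/2)/\Gamma(d^2/2)\asymp d$. Moreover, since $\sum_{i\in S}s_i^2/\sigma^2\sim \chi^2_{pm}$, we have $\mathbb{E}[\tau]=(\sigma/m)\,\mathbb{E}\bigl[\sqrt{Y}\bigr]$ for $Y\sim\chi^2_{pm}$, and $\mathbb{E}\sqrt{Y}\asymp\sqrt{pm}$. Multiplying the two asymptotics yields $\mathbb{E}\|M\|_F\gtrsim\sqrt{p\sigma^2 d^2/m}$, as claimed.

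For the concentration bound, I would decompose
\begin{equation*}
\|M\|_F-\mathbb{E}\|M\|_F=(\tau-\mathbb{E}\tau)\cdot\|G\|_F+\mathbb{E}[\tau]\cdot\bigl(\|G\|_F-\mathbb{E}\|G\|_F\bigr),
\end{equation*}
and treat each term by Gaussian concentration for Lipschitz functions. Writing $s=\sigma\tilde s$ with $\tilde s\sim\mathcal{N}(0,I_{pm})$, the map $\tilde s\mapsto\tau$ is $\sigma/m$-Lipschitz, so $|\tau-\mathbb{E}\tau|$ is sub-Gaussian with variance proxy $\sigma^2/m^2$; similarly $\|G\|_F$ is $1$-Lipschitz in the flattened entries of $G$, so $|\|G\|_F-\mathbb{E}\|G\|_F|$ is sub-Gaussian with variance proxy $1$. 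Combining these with the high-probability event $\{\|G\|_F\leq 2\,\mathbb{E}\|G\|_F\}$ and a union bound yields a tail of the form $2\exp\bigl(-ct^2 m^2/(d^2\sigma^2)\bigr)+2\exp\bigl(-ct^2 m/(p\sigma^2)\bigr)+2\exp(-cd^2)$, each of whose exponents dominates $Cmt^2/(p\sigma^2 d^2)$ in the mild regime $pm\gtrsim 1$, $d\gtrsim 1$, $t<1$ assumed in the lemma.

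The main obstacle is that $\|M\|_F$ is \emph{not} globally Lipschitz as a function of the concatenated Gaussian vector $(s,A)$: its Lipschitz constant in $A$ equals the random quantity $\tau$, and its partial derivatives in $s$ scale with $\|A_i\|_F$, neither of which is deterministically bounded. A direct one-shot Gaussian concentration argument therefore fails, and the two-step reduction above---which exploits the independence $\tau\perp G$ after the conditional identification of $M/\tau$ as a standard Gaussian matrix---is the cleanest way to recover the advertised sub-Gaussian rate with the correct variance proxy $p\sigma^2 d^2/m$.
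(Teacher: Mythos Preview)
Your expectation argument coincides with the paper's: both exploit that, conditional on $s$, the entries of $M$ are i.i.d.\ centered Gaussians with variance $\tau^2$, so that $\E\|M\|_F$ factors as a product of two chi-type means.

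For the tail, your route is genuinely different. The paper does not factorize and invoke Lipschitz concentration; instead it bounds the moments of $\|M\|_F$ directly. Each entry $B_{jk}=\frac{1}{m}\sum_{i\in S}s_i A^{(i)}_{jk}$ is a sum of products of independent sub-Gaussians, hence sub-exponential with $\|B_{jk}\|_{\psi_1}\asymp\sqrt{p\sigma^2/m}$; Minkowski's inequality then gives $\|\,\|M\|_F\,\|_{\ell^{2k}}\leq d\,\|B_{jk}\|_{\ell^{2k}}\lesssim k\sqrt{p\sigma^2 d^2/m}$, certifying that $\|M\|_F$ is sub-exponential with $\psi_1$-norm $O(\sqrt{p\sigma^2 d^2/m})$, and the tail follows. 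Your decomposition makes the independence structure explicit and yields a sharper sub-Gaussian rate on each piece; the paper's moment approach is shorter and avoids the conditioning on $\{\|G\|_F\leq 2\,\E\|G\|_F\}$. One caveat: your third term $2\exp(-cd^2)$ is not dominated by $\exp(-Cmt^2/(p\sigma^2 d^2))$ from the stated hypotheses $pm\gtrsim 1$, $d\gtrsim 1$, $t<1$ alone---you additionally need $mt^2\lesssim p\sigma^2 d^4$. This is harmless in the actual application (where $t\asymp\sqrt{p\sigma^2 d^2/m}$), and the paper's sub-exponential bound carries the analogous caveat that the sub-Gaussian rate $\exp(-ct^2/K^2)$ only holds for $t\lesssim K=\sqrt{p\sigma^2 d^2/m}$, so neither argument delivers the lemma exactly as stated for all $0<t<1$.
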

Before providing the proof of Lemma \ref{concentration-lemma}, we complete the proof of Proposition \ref{uniform-convergence-noisy-l2}. Based on the above lemma and \eqref{eq70}, we have
\begin{equation}
    \sup_{X,Y\in\mathbb{S}}\left|\frac{1}{m}\sum_{i=1}^{m}\inner{A_i}{X}\inner{A_i}{Y}+\frac{1}{m}\sum_{i\in S}s_i\inner{A_i}{Y}-1\right|\geq C\sqrt{\frac{p\sigma^2d^2}{m}} - \delta_1 - \delta_2,
\end{equation}
with probability of at least $1-Ce^{-c_1 m\delta_1^2}-e^{-c_2\frac{m\delta_2^2}{p\sigma^2d^2}}$. Hence, with the proper choice of $\delta_1,\delta_2$, we have
\begin{equation}
    \mathbb{P}\left(\sup_{X\in\mathbb{S}}\norm{Q(X)-X}_F\geq C\sqrt{\frac{p\sigma^2d^2}{m}}\right)\geq \frac{1}{2}.
\end{equation}
Since $p\sigma^2=\Omega(1)$, we can choose $C^{\prime}$ such that
\begin{equation}
    \mathbb{P}\left(\sup_{X\in\mathbb{S}}\norm{Q(X)-X}_F\geq C^{\prime}\sqrt{\frac{(1+p\sigma^2)d^2}{m}}\right)\geq \frac{1}{2}.
\end{equation}

In the second case, we assume that $p\sigma^2=O(1)$. Making a similar argument, we can show that there exists a universal constant $C$ such that
\begin{equation}
    \mathbb{P}\left(\sup_{X\in\mathbb{S}}\norm{Q(X)-X}_F\geq C\sqrt{\frac{d^2}{m}}\right)\geq \frac{1}{2}.
\end{equation}

Combining the two cases, the following inequality holds for an arbitrary $\sigma>0$
\begin{equation}
    \mathbb{P}\left(\sup_{X\in\mathbb{S}}\norm{M_2(X)-\bar{M}_2(X)}_F\geq C^{\prime}\sqrt{\frac{(1+p\sigma^2)d^2}{m}}\right)\geq \frac{1}{2}.
\end{equation}
Which completes the proof of Proposition~\ref{uniform-convergence-noisy-l2}.$\hfill\square$

Now, we present the proof for Lemma~\ref{concentration-lemma}.
\begin{proof}[Proof of Lemma \ref{concentration-lemma}]
   For simplicity, we denote $B=\frac{1}{m}\sum_{i\in S}s_iA_i$. First, we prove the lower bound on the expectation. Note that, conditioned on $s_i$, we have $B_{j,k}=\frac{1}{m}\sum_{i\in S}s_i A^i_{j,k}\sim N\left(0,\frac{1}{m^2}\sum_{i\in S}s_i^2\right)$. Then, by invoking Theorem 3.1.1. in \citep{vershynin2019high}, we have
   \begin{equation}
       \begin{aligned}
           \E\left[\norm{B}_F\right]&=\E\left[\E\left[\norm{B}_F\right]|s_i,i\in S\right]\\
           &\gtrsim \E\left[\frac{d}{m}\sqrt{\sum_{i\in S}s_i^2}\right]\\
           &\gtrsim \frac{\sigma d}{m}\sqrt{pm}=\sqrt{\frac{p\sigma^2d^2}{m}}.
       \end{aligned}
   \end{equation}
   
   Now, we show that $\norm{B}_F$ is a sub-exponential random variable. First, for arbitrary indices $i, j, k$, the random variable $s_iA^i_{j,k}$ is sub-exponential according to Lemma \ref{sub-gaussian-sub-exponential} since $\norm{s_iA^i_{j,k}}_{\psi_1}\leq \norm{s_i}_{\psi_{2}}\norm{A^i_{j,k}}_{\psi_{2}}=\Theta(\sigma)$. This implies that $\norm{B_{j,k}}_{\psi_1}=\Theta\left(\sqrt{\frac{p\sigma^2}{m}}\right)$. Finally, we have
   \begin{equation}
       \begin{aligned}
            \norm{\norm{B}_F}_{\ell^{2k}}&=\left(\norm{\sum_{j,k}B_{j,k}^2}_{\ell^k}\right)^{1/2}\\
            &\stackrel{(a)}{\leq}\left(\sum_{j,k}\norm{B^2_{j,k}}_{\ell^k}\right)^{1/2}\\
            &=d \norm{B_{j,k}}_{\ell^{2k}}\lesssim \sqrt{\frac{p\sigma^2d^2}{m}}k.
       \end{aligned}
   \end{equation}
   which implies that $\norm{B}_F$ is sub-exponential with sub-exponential norm $O\left(\sqrt{\frac{p\sigma^2d^2}{m}}\right)$ due to the equivalent definition of sub-exponential random variable (see Definition \ref{def-sub-exponential}). Note that in (a) we used the Minkowski inequality. Given the lower bound on the expected value, the tail bound directly follows from the tail of sub-exponential distribution.
\end{proof}

\section{Auxiliary Lemmas}
\subsection{Restricted Isometry Property}
\begin{lemma}
    \label{covering}
    Let $\mathbb{S}_r=\{X\in\R^{d\times d}:\rank(X)\leq r, \norm{X}_F=1\}$. Then, there exists an $\epsilon$-covering $\mathbb{S}_{\epsilon, r}$ with respect to the Frobenius norm satisfying $|\mathbb{S}_{\epsilon}|\leq \left(\frac{9}{\epsilon}\right)^{(2d+1)r}$. 
\end{lemma}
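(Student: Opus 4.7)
The plan is to reduce the covering problem for rank-$r$ unit-Frobenius matrices to covering problems on objects with well-known covering numbers, by exploiting the singular value decomposition. Specifically, every $X \in \mathbb{S}_r$ can be written as $X = U \Sigma V^{\top}$ where $U, V \in \mathbb{R}^{d \times r}$ have orthonormal columns and $\Sigma \in \mathbb{R}^{r \times r}$ is diagonal with nonnegative entries and $\|\Sigma\|_F = \|X\|_F = 1$. So I will construct a net for $\mathbb{S}_r$ by separately netting the triple $(U, \Sigma, V)$ and then re-combining.

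For each factor, I would invoke the standard volume argument: the unit ball of $\mathbb{R}^{n}$ in any norm admits an $\epsilon'$-net of cardinality at most $(1 + 2/\epsilon')^n \leq (3/\epsilon')^n$. The matrices $U$ (resp.\ $V$) lie in the operator-norm unit ball of $\mathbb{R}^{d \times r}$, which has an $\epsilon'$-net of size at most $(3/\epsilon')^{dr}$; and the diagonal of $\Sigma$, viewed as a unit vector in $\mathbb{R}^r$, admits an $\epsilon'$-net of size at most $(3/\epsilon')^r$. Taking the Cartesian product of these nets and reassembling via $\widetilde{X} = \widetilde{U}\widetilde{\Sigma}\widetilde{V}^{\top}$ gives a candidate net for $\mathbb{S}_r$ of size at most $(3/\epsilon')^{(2d+1)r}$.

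The remaining step is to propagate the per-factor error to a Frobenius-norm error on $X$. Writing
\begin{equation}
X - \widetilde{X} = (U - \widetilde{U})\Sigma V^{\top} + \widetilde{U}(\Sigma - \widetilde{\Sigma})V^{\top} + \widetilde{U}\widetilde{\Sigma}(V - \widetilde{V})^{\top},
\end{equation}
and using $\|\widetilde{U}\|, \|\widetilde{V}\| \leq 1$, $\|\Sigma\|_F = \|\Sigma\|_F \leq 1$, together with the inequality $\|AB\|_F \leq \|A\|\,\|B\|_F$, I would bound each term by $\epsilon'$ and conclude $\|X - \widetilde{X}\|_F \leq 3\epsilon'$. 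Setting $\epsilon' = \epsilon/3$ then yields an $\epsilon$-net of $\mathbb{S}_r$ of size at most $(9/\epsilon)^{(2d+1)r}$, as required.

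I do not foresee any serious obstacle here; the only subtlety is ensuring that after snapping each of $U, \Sigma, V$ to the respective nets the resulting $\widetilde{X}$ still has rank $\leq r$ (which is automatic from the product form, even though $\widetilde{X}$ may no longer have exactly unit Frobenius norm—this is acceptable since the lemma only requires $\mathbb{S}_{\epsilon,r}$ to be an $\epsilon$-cover, and a standard rescaling or a slight enlargement of the net absorbs this mild discrepancy).
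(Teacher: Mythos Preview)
Your argument is correct and is exactly the standard SVD-based covering argument for low-rank matrices (as in, e.g., Cand\`es--Plan or Vershynin). The paper itself states Lemma~\ref{covering} as an auxiliary fact without proof, so there is no alternative route to compare against; your write-up supplies precisely the proof the authors have in mind. One cosmetic remark: in the triangle-inequality step you should cite $\|V\|=1$ (not $\|\widetilde V\|$) for the first two terms and $\|\widetilde\Sigma\|_F\le 1$ for the third, and since the paper later uses the net as a subset $\mathbb{S}_{\epsilon,r}\subseteq\mathbb{S}_r$, your closing sentence about rescaling/projecting the net points back onto the unit sphere is indeed needed---but, as you note, this is routine and does not affect the cardinality bound.
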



\begin{lemma}[$\ell_2$-RIP, Theorem 4.2 in \citep{recht2010guaranteed}]
\label{l2-RIP}
    Fix $0<\delta<1$, suppose that the measurement matrices $\{A_i\}_{i=1}^m$ have i.i.d. standard Gaussian entries. Then, we have
    \begin{equation}
        \sup_{X\in\mathbb{S}_r}\left|\frac{1}{m}\sum_{i=1}^{m}\inner{A_i}{X}^2-\norm{X}_F^2\right|\leq \delta.
    \end{equation}
    with probability of at least $1-Ce^{c_1dr\log\frac{1}{\delta}-c_2m\delta^2}$.
\end{lemma}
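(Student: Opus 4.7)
The plan is to follow the classical net-plus-concentration strategy for RIP of Gaussian ensembles. The starting observation is that $\mathcal{A}(\cdot)$ is linear and, for any fixed input, its squared norm concentrates around the mean by a $\chi^2$ tail bound; the real work is to make this uniform over the low-rank manifold $\mathbb{S}_r$.

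For a fixed $X\in\mathbb{S}_r$ with $\|X\|_F=1$, each $\inner{A_i}{X}$ is a linear combination of i.i.d.\ standard Gaussian entries with unit-norm coefficient vector $\mathrm{vec}(X)$, so the $\{\inner{A_i}{X}\}_{i=1}^m$ are i.i.d.\ $\mathcal{N}(0,1)$ and $\tfrac{1}{m}\|\mathcal{A}(X)\|^2$ is a scaled $\chi^2_m$ variable. The Laurent--Massart tail bound then yields
\begin{equation}
    \mathbb{P}\!\left(\left|\tfrac{1}{m}\|\mathcal{A}(X)\|^2-1\right|\geq t\right)\leq 2e^{-cmt^2}, \qquad t\in(0,1). \nonumber
\end{equation}
Using Lemma~\ref{covering} applied at rank $2r$ (to accommodate differences $X-\bar X$ in the subsequent step), I would construct an $\epsilon$-net $\mathbb{S}_{2r,\epsilon}$ of $\mathbb{S}_{2r}$ of cardinality $\leq (9/\epsilon)^{(2d+1)(2r)}$, apply the pointwise tail with $t=\delta/c_0$, and take a union bound. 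Choosing $\epsilon\asymp\delta$, the event $\max_{\bar X\in\mathbb{S}_{2r,\epsilon}}|\tfrac{1}{m}\|\mathcal{A}(\bar X)\|^2-1|\leq\delta/c_0$ holds with probability at least $1-C\exp(c_1 dr\log(1/\delta)-c_2 m\delta^2)$, which already matches the form of the probability claimed in the lemma.

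To pass from the discrete bound on the net to all of $\mathbb{S}_r$, I would use polarization together with Cauchy--Schwarz. For any $X\in\mathbb{S}_r$ and its nearest net point $\bar X$,
\begin{equation}
    \left|\tfrac{1}{m}\|\mathcal{A}(X)\|^2-\tfrac{1}{m}\|\mathcal{A}(\bar X)\|^2\right|=\left|\tfrac{1}{m}\inner{\mathcal{A}(X+\bar X)}{\mathcal{A}(X-\bar X)}\right|\leq\sqrt{\tfrac{\|\mathcal{A}(X+\bar X)\|^2}{m}\cdot\tfrac{\|\mathcal{A}(X-\bar X)\|^2}{m}}. \nonumber
\end{equation}
Since $X\pm\bar X$ have rank $\leq 2r$ with $\|X+\bar X\|_F\leq 2$ and $\|X-\bar X\|_F\leq\epsilon$, denoting $B:=\sup_{Y:\,\rank(Y)\leq 2r,\,\|Y\|_F=1}\bigl|\tfrac{1}{m}\|\mathcal{A}(Y)\|^2-1\bigr|$, the right-hand side is controlled by $2\epsilon(1+B)$. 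The triangle inequality against the net bound then produces the fixed-point inequality $B\leq \delta/c_0+2\epsilon(1+B)$, which solves to $B\leq\delta$ once $\epsilon$ and $c_0^{-1}$ are chosen appropriately small.

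The principal obstacle I anticipate is the self-referential character of the extension step: because $X-\bar X$ can have rank $2r$, the concentration-plus-net must be performed at rank $2r$ rather than $r$, inflating the covering exponent by a factor of two. This only rescales the constant $c_1$ in the stated probability bound. All remaining ingredients -- the $\chi^2$ tail, the union bound, and the Cauchy--Schwarz extension -- are routine, and together they deliver the exponent $dr\log(1/\delta)$ in the failure probability since $\log|\mathbb{S}_{2r,\epsilon}|\lesssim dr\log(1/\epsilon)\asymp dr\log(1/\delta)$.
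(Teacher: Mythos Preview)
The paper does not supply its own proof of this lemma; it is stated as an auxiliary result and attributed to \citep{recht2010guaranteed} (Theorem 4.2). Your net-plus-concentration sketch is correct and is essentially the standard argument used in that reference, so there is nothing substantive to compare.
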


\subsection{Basic Probability}
\begin{lemma}[Conditional Gaussian Variable in Bivariate Case]
    For two Gaussian random variables $X\sim\N(\mu_1,\sigma_1^2),Y\sim\N(\mu_2,\sigma_2^2)$ with correlation coefficient $\rho$, we have
    \begin{equation}
        X|Y=a\sim \N\left(\mu_{1}+\frac{\sigma_{1}}{\sigma_{2}} \rho\left(a-\mu_{2}\right),\left(1-\rho^{2}\right) \sigma_{1}^{2}\right).
    \end{equation}
\end{lemma}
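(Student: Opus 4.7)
My plan is to prove this classical fact by exploiting the key property that jointly Gaussian random variables which are uncorrelated are automatically independent. First I would introduce the auxiliary variable $Z := X - \alpha Y$ for a scalar $\alpha$ to be chosen, and observe that the pair $(Z, Y)$ remains jointly Gaussian, since it is a linear transformation of the jointly Gaussian pair $(X, Y)$. (Joint Gaussianity of $(X,Y)$ is implicit in the statement, since the claim is false without it.)

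Next I would choose $\alpha$ to decouple $Z$ and $Y$. Setting $\Cov(Z, Y) = \Cov(X, Y) - \alpha \Var(Y) = \rho\sigma_1\sigma_2 - \alpha\sigma_2^2 = 0$ forces $\alpha = \rho\sigma_1/\sigma_2$. With this choice $(Z,Y)$ is jointly Gaussian and uncorrelated, hence independent. A direct computation then gives $\mathbb{E}[Z] = \mu_1 - \alpha\mu_2$ and
\begin{equation}
\Var(Z) = \sigma_1^2 - 2\alpha\rho\sigma_1\sigma_2 + \alpha^2\sigma_2^2 = \sigma_1^2 - 2\rho^2\sigma_1^2 + \rho^2\sigma_1^2 = (1-\rho^2)\sigma_1^2,
\end{equation}
so that $Z \sim \N(\mu_1 - \alpha\mu_2,\, (1-\rho^2)\sigma_1^2)$ marginally.

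Finally, I would write $X = \alpha Y + Z$ and condition on $Y = a$. Since $Z$ is independent of $Y$, its conditional distribution given $Y = a$ equals its marginal distribution, and therefore
\begin{equation}
X \mid Y = a \;=\; \alpha a + Z \;\sim\; \N\!\left(\alpha a + \mu_1 - \alpha\mu_2,\,(1-\rho^2)\sigma_1^2\right) \;=\; \N\!\left(\mu_1 + \tfrac{\sigma_1}{\sigma_2}\rho(a-\mu_2),\,(1-\rho^2)\sigma_1^2\right),
\end{equation}
which is exactly the desired identity. The only subtle step is the appeal to ``uncorrelated implies independent'' for a jointly Gaussian pair; this is a one-line consequence of the fact that the joint density of $(Z,Y)$ factors once its covariance matrix is diagonal, and I would simply cite it rather than re-derive it. No real obstacle is anticipated, since the result is a textbook identity.
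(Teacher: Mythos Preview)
Your argument is correct and is the standard textbook derivation: decompose $X = \alpha Y + Z$ with $\alpha = \rho\sigma_1/\sigma_2$, use that uncorrelated jointly Gaussian variables are independent, and read off the conditional law. The paper does not actually prove this lemma; it is listed among the auxiliary results in the appendix and simply stated as a known fact, so there is no alternative approach to compare against.
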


\begin{definition}[Sub-Gaussian random variable]
\label{def-sub-gaussian}
    We say a random variable $X\in \R$ with expectation $\E[X]=\mu$ is $\sigma^2$-sub-Gaussian if for all $\lambda\in \R$, we have $\E\left[e^{\lambda (X-\mu)}\right]\leq e^{\frac{\lambda^2\sigma^2}{2}}$. This definition is equivalent to the following statements
\begin{itemize}
    \item (Tail bound) For any $t>0$, we have $\mathbb{P}(|X-\mu|\geq t)\leq 2e^{-\frac{t^2}{2\sigma^2}}$.
    \item (Moment bound) For any positive integer $p$, we have $\norm{X}_{\ell^p}=\left(\E\left[|X|^p\right]\right)^{1/p}\lesssim \sigma \sqrt{p}$.
\end{itemize}
Moreover, the sub-Gaussian norm of $X$ is defined as $\norm{X}_{\psi_2}:=\sup_{p\geq 1} \left\{p^{-1/2}\norm{X}_{\ell^p}\right\}$.
\end{definition}

For sum of independent sub-Gaussian random variables, their sub-Gaussian norm can be bounded via the following lemma.

\begin{lemma}[Proposition 2.6.1 in \citep{vershynin2019high}]
\label{sum-of-ind-sub-gaussian}
    Let $X_1,\cdots,X_m$ be a series independent zero-mean sub-Gaussian variables, then $\sum_{i=1}^{m}$ is sub-Gaussian and
    \begin{equation}
        \norm{\sum_{i=1}^{m} X_i}_{\psi_2}^2\lesssim \sum_{i=1}^{m} \norm{X_i}_{\psi_2}^2.
    \end{equation}
\end{lemma}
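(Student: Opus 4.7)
The plan is to use the moment-generating function (MGF) characterization of sub-Gaussian random variables, which factors cleanly under independence. Definition~\ref{def-sub-gaussian} expresses the sub-Gaussian norm through $\ell^p$-moments, but the MGF bound is the workhorse that lets us combine independent summands and recover the stated inequality up to an absolute constant.

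First, I would establish the bridge between the moment and MGF characterizations: for any zero-mean random variable $X$ with $\norm{X}_{\psi_2}<\infty$, there is a universal constant $C$ such that
\begin{equation}
    \E\left[e^{\lambda X}\right]\leq \exp\left(C\lambda^2 \norm{X}_{\psi_2}^2\right),\qquad \lambda\in\R.
\end{equation}
This is obtained by Taylor expanding $e^{\lambda X}$ as $\sum_{p\geq 0}\lambda^p X^p/p!$, noting that the linear term vanishes because $\E[X]=0$, applying the moment bound $\E\left[|X|^p\right]\leq \left(C_0\norm{X}_{\psi_2}\sqrt{p}\right)^p$ supplied by Definition~\ref{def-sub-gaussian}, and dominating the resulting series by $\exp(C\lambda^2\norm{X}_{\psi_2}^2)$ via Stirling's bound on $p!$.

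Second, I would invoke independence to multiply the MGF bounds term by term:
\begin{equation}
    \E\left[\exp\left(\lambda\sum_{i=1}^{m} X_i\right)\right]=\prod_{i=1}^{m}\E\left[e^{\lambda X_i}\right]\leq \exp\left(C\lambda^2 \sum_{i=1}^{m}\norm{X_i}_{\psi_2}^2\right),
\end{equation}
which exhibits $S_m:=\sum_{i=1}^{m} X_i$ as MGF-sub-Gaussian with parameter proportional to $\sum_i\norm{X_i}_{\psi_2}^2$. A Chernoff argument then yields the tail estimate $\Pb\left(|S_m|\geq t\right)\leq 2\exp\left(-ct^2/\sum_i\norm{X_i}_{\psi_2}^2\right)$, and integrating this tail recovers the moment bound $\norm{S_m}_{\ell^p}\lesssim \sqrt{p\sum_{i}\norm{X_i}_{\psi_2}^2}$. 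Taking the supremum over $p\geq 1$ and applying Definition~\ref{def-sub-gaussian} delivers $\norm{S_m}_{\psi_2}^2\lesssim \sum_{i=1}^{m}\norm{X_i}_{\psi_2}^2$.

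The main obstacle is the first step: converting the moment definition into a clean quadratic MGF bound with universal constants. The zero-mean hypothesis is essential, since without it the Taylor expansion picks up a linear $\lambda\E[X]$ term that spoils the purely quadratic upper bound. Everything after is routine series manipulation combined with the standard round-trip between MGF bounds, tail bounds, and $\ell^p$-moment bounds; for purposes of this paper the constants are absorbed into the $\lesssim$ notation, so no sharp tracking is needed.
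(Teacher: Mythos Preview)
Your proposal is correct and follows the standard MGF-factorization argument; the paper itself does not supply a proof of this lemma but simply cites it as Proposition~2.6.1 in \citep{vershynin2019high}, and the approach you sketch is essentially the one given there.
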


\begin{definition}[Sub-exponential random variable]
\label{def-sub-exponential}
    A random variable $X$ with expectation $\mu$ is sub-exponential if there exists $(\mu,\alpha)$, such that $\E\left[e^{\lambda(X-\mu)}\right]\leq e^{\frac{\lambda^2\nu^2}{2}}$ for all $|\lambda|\leq \alpha$. This definition is equivalent to the following statements:
\begin{itemize}
    \item (tail bound) There exists a universal constant $C$, for any $t>0$, we have $\mathbb{P}(|X-\mu|\geq t)\leq 2e^{-Ct}$.
    \item (moment bound) For any positive integer $p$, we have $\norm{X}_{\ell^p}=\left(\E\left[|X|^p\right]\right)^{1/p}\lesssim p$.
\end{itemize}
Moreover, the sub-exponential norm of $X$ is defined as $\norm{X}_{\psi_1}:=\sup_{p\geq 1} \left\{p^{-1}\norm{X}_{L^p}\right\}$.
\end{definition}

For sub-Gaussian and sub-exponential random variables, we have the following lemma to illustrate their relations.

\begin{lemma}
\label{sub-gaussian-sub-exponential}
The following statements hold
    \begin{itemize}
        \item (Lemma 2.7.6 in \citep{vershynin2019high}) A random variable $X$ is sub-Gaussian if and only if $X^2$ is sub-exponential. Moreover, $\norm{X}_{\psi_2}^2=\norm{X^2}_{\psi_1}$.
        
        \item (Lemma 2.7.7 in \citep{vershynin2019high}) Let $X$ and $Y$ be sub-Gaussian random variables. Then $XY$ is sub-exponential. Moreover, $\norm{XY}_{\psi_1}\leq \norm{X}_{\psi_2}\norm{Y}_{\psi_2}$.
    \end{itemize}
\end{lemma}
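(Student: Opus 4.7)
The plan is to derive both assertions directly from the moment characterizations of sub-Gaussian and sub-exponential random variables that the paper already records in Definitions~\ref{def-sub-gaussian} and~\ref{def-sub-exponential}, namely $\norm{X}_{\psi_2} = \sup_{p \geq 1} p^{-1/2}\norm{X}_{\ell^p}$ and $\norm{Z}_{\psi_1} = \sup_{p \geq 1} p^{-1}\norm{Z}_{\ell^p}$. Both statements then become short manipulations of $L^p$-norms, together with one application of Cauchy--Schwarz.

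For the first bullet, the key identity is $\norm{X^2}_{\ell^p} = (\E|X|^{2p})^{1/p} = \norm{X}_{\ell^{2p}}^2$. Substituting this into the definition of the $\psi_1$ norm gives
\[
\norm{X^2}_{\psi_1} = \sup_{p \geq 1} p^{-1}\norm{X}_{\ell^{2p}}^2 = 2 \sup_{q \geq 2} q^{-1}\norm{X}_{\ell^q}^2,
\]
while $\norm{X}_{\psi_2}^2 = \sup_{p \geq 1} p^{-1}\norm{X}_{\ell^p}^2$. The change of variables $q = 2p$ matches these suprema up to a universal constant, so each norm is finite iff the other is, which proves the ``iff''. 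The stated equality $\norm{X}_{\psi_2}^2 = \norm{X^2}_{\psi_1}$ in fact holds when one works with the equivalent MGF-based definition $K = \inf\{t > 0 : \E \exp(X^2/t^2) \leq 2\}$, so I would briefly invoke that equivalent definition (which is standard, e.g.\ Vershynin Prop.~2.5.2) to avoid dragging around numerical constants.

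For the second bullet, the plan is to apply Cauchy--Schwarz at the level of moments:
\[
\norm{XY}_{\ell^p}^p = \E\left[|X|^p |Y|^p\right] \leq \sqrt{\E|X|^{2p}} \sqrt{\E|Y|^{2p}} = \norm{X}_{\ell^{2p}}^p \norm{Y}_{\ell^{2p}}^p,
\]
so $\norm{XY}_{\ell^p} \leq \norm{X}_{\ell^{2p}} \norm{Y}_{\ell^{2p}}$. Combined with the sub-Gaussian moment bound $\norm{X}_{\ell^{2p}} \leq \sqrt{2p}\, \norm{X}_{\psi_2}$ (and the analog for $Y$), this yields $p^{-1}\norm{XY}_{\ell^p} \leq 2\norm{X}_{\psi_2}\norm{Y}_{\psi_2}$, so $XY$ is sub-exponential. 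To obtain the sharp constant $1$ claimed in the lemma, I would then invoke the AM--GM inequality $|XY| \leq \tfrac{1}{2}(X^2 + Y^2)$, use the triangle inequality for $\norm{\cdot}_{\psi_1}$ together with the first bullet, and finally apply the usual homogenization $XY = (\lambda X)(\lambda^{-1} Y)$ optimized over $\lambda > 0$ to get $\norm{XY}_{\psi_1} \leq \norm{X}_{\psi_2}\norm{Y}_{\psi_2}$.

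The main obstacle is purely one of constants: the naive Cauchy--Schwarz route gives an extra factor of $2$, so producing the tight inequality stated in the lemma requires the AM--GM plus homogenization trick above, or equivalently working directly with the MGF definition of the Orlicz norms. Everything else is routine bookkeeping.
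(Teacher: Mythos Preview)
The paper does not prove this lemma at all: it is listed under ``Auxiliary Lemmas'' with explicit citations to Lemmas~2.7.6 and~2.7.7 of Vershynin's textbook, and no argument is given. So there is no proof in the paper to compare against.

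Your proposal is correct and is essentially the standard argument one finds in Vershynin. The moment identity $\norm{X^2}_{\ell^p}=\norm{X}_{\ell^{2p}}^2$ plus the change of variables $q=2p$ is exactly how one shows the equivalence in the first bullet, and you are right that the exact equality $\norm{X}_{\psi_2}^2=\norm{X^2}_{\psi_1}$ comes out cleanly only with the Orlicz/MGF definition $\inf\{t>0:\E\exp(X^2/t^2)\leq 2\}$ rather than the moment-supremum definition. For the second bullet, the Cauchy--Schwarz step $\norm{XY}_{\ell^p}\leq\norm{X}_{\ell^{2p}}\norm{Y}_{\ell^{2p}}$ gives sub-exponentiality with an extra factor, and your AM--GM plus homogenization trick (rescale to $(\lambda X)(\lambda^{-1}Y)$, apply $|ab|\leq\tfrac12(a^2+b^2)$, then optimize $\lambda$) is precisely the device used in Vershynin to remove that factor and obtain the sharp inequality $\norm{XY}_{\psi_1}\leq\norm{X}_{\psi_2}\norm{Y}_{\psi_2}$. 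Nothing is missing.
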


\subsection{Basic Inequalities}
\begin{lemma}[Bernoulli inequality]
    \label{bernoulli-inequality}
    The following inequality holds
    \begin{equation}
        (1+x)^r\leq 1 + \frac{rx}{1-(r-1)x}, \quad \text{for } x\in \left[-1, \frac{1}{r-1}\right), r\geq 1.
    \end{equation}
\end{lemma}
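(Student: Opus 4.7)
The plan is to reduce the claimed inequality to a routine one-variable calculus problem. On the given domain $x \in [-1, \frac{1}{r-1})$ we have $1-(r-1)x > 0$, so multiplying through preserves the inequality, and the claim
$$(1+x)^r \leq 1 + \frac{rx}{1-(r-1)x}$$
is equivalent to
$$(1+x)^r \bigl(1-(r-1)x\bigr) \leq 1 + x.$$
The boundary point $x = -1$ makes both sides equal $0$ and can be handled directly; for $x \in (-1, \frac{1}{r-1})$ we may further divide by $1+x > 0$ and reduce the claim to the scalar inequality
$$g(x) := (1+x)^{r-1}\bigl(1-(r-1)x\bigr) \leq 1.$$

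I would then establish this bound by a single differentiation. A direct computation gives
$$g'(x) = -r(r-1)\,x\,(1+x)^{r-2},$$
which, for $r > 1$, is strictly positive on $(-1, 0)$ and strictly negative on $(0, \frac{1}{r-1})$. Hence $g$ attains its maximum on the domain at the unique critical point $x=0$, with $g(0) = 1$, yielding the required bound. The degenerate case $r = 1$ reduces to the identity $1+x = 1+x$ and needs no separate argument.

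There is no real obstacle here; the only subtlety is keeping track of the signs of the quantities by which one multiplies or divides when reducing to the form involving $g(x)$, both of which are controlled by the domain constraints $1+x \geq 0$ and $1-(r-1)x > 0$.
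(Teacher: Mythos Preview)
Your argument is correct. The reduction to $g(x) = (1+x)^{r-1}\bigl(1-(r-1)x\bigr) \le 1$ is clean, and the derivative computation
\[
g'(x) = (r-1)(1+x)^{r-2}\bigl[(1-(r-1)x)-(1+x)\bigr] = -r(r-1)x(1+x)^{r-2}
\]
is accurate, so the sign analysis and the conclusion $g(x)\le g(0)=1$ go through. The endpoint $x=-1$ and the degenerate case $r=1$ are handled correctly.

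There is nothing to compare against: the paper lists this Bernoulli-type inequality as a standard auxiliary fact in its appendix and does not supply a proof. Your self-contained derivation therefore adds something the paper omits.
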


\end{document}